\newif\iffull
\fulltrue
%\fullfalse

\documentclass[11pt,twoside]{article} %for full version
\usepackage[utf8]{inputenc} % allow utf-8 input
\usepackage[T1]{fontenc}    % use 8-bit T1 fonts
\usepackage{url}            % simple URL typesetting
\usepackage{booktabs}       % professional-quality tables
\usepackage{nicefrac}       % compact symbols for 1/2, etc.

\usepackage{xspace,amsmath,amssymb,amsfonts,boxedminipage,microtype,hyperref}
\usepackage{natbib}
\usepackage{paralist}
\usepackage{bm}
\usepackage{bbm}
\usepackage{array}
\usepackage{multirow}
\usepackage{amsthm}
%%%%% ARRAY COLUMN TYPES%%%%%%%
\newcolumntype{M}[1]{>{\centering\arraybackslash}m{#1}}
\newcolumntype{N}{@{}m{0pt}@{}}
\usepackage{graphicx}
\usepackage[noend]{algorithmic}
\usepackage{algorithm}
\usepackage{xcolor}
\usepackage{enumitem}

\usepackage{fullpage}

\usepackage{times}

\newcommand{\approxerr}{\varepsilon_{\mathrm{\tiny approx}}}
\newcommand{\generr}{\eps_{\mathrm{\tiny gen}}}
\newcommand{\opterr}{\eps_{\mathrm{\tiny opt}}}
\newcommand{\bzero}{\mathbf{0}}

\newcommand{\RR}{\mathbb{R}}

\newcommand{\PP}{\mathbb{P}}
\newcommand{\EE}{\mathbb{E}}
\newcommand{\eps}{\varepsilon}

\newcommand{\bk}{\mathbf{k}}

\newcommand{\bS}{\mathbf{S}}
\newcommand{\bG}{\mathbf{G}}

\newcommand{\bI}{\mathbf{i}}
\newcommand{\bu}{\mathbf{u}}
 
\newcommand{\bg}{\mathbf{g}}

\newcommand{\bz}{\mathbf{z}}
\newcommand{\br}{\mathbf{r}}

\newcommand{\B}{\mathcal{B}}

\newcommand{\cO}{\mathcal{O}}
\newcommand{\D}{\mathcal{D}}

\newcommand{\X}{\mathcal{X}}
\newcommand{\F}{\mathcal{F}}
\newcommand{\N}{\mathcal{N}}
\newcommand{\Unif}{\mbox{Unif}}

\newcommand{\proj}{\mathsf{Proj}}

\iffalse
%\usepackage{todonotes}
\newcommand{\cnote}[1]{ [\textcolor{blue}{Crist\'obal: #1}] }
\newcommand{\rnote}[1]{ [\textcolor{red}{Raef: #1}] }
\newcommand{\vnote}[1]{ [\textcolor{green}{Vitaly: #1}] }
\newcommand{\knote}[1]{ [\textcolor{magenta}{Kunal: #1}] }
\newcommand{\cnote}[1]{}
\newcommand{\rnote}[1]{}
\newcommand{\vnote}[1]{}
\newcommand{\knote}[1]{}
\fi

\newcommand{\epsrisk}{\eps_{\mbox{\footnotesize risk}}}
\newcommand{\epsopt}{\eps_{\mbox{\footnotesize opt}}}
\newcommand{\epsgen}{\eps_{\mbox{\footnotesize gen}}}

\newcommand{\barx}{\overline{x}}

\newcommand{\pr}[2]{\underset{#1}{\mathbb{P}}\left[ #2 \right]}
\newcommand{\ex}[2]{\underset{#1}{\mathbb{E}}\left[ #2 \right]}

\newcommand{\risk}{\varepsilon_{\mathsf{\footnotesize{risk}}}}
\newcommand{\Ansgd}{\A_{\sf NSGD}}

\usepackage{vfmacros}

\usepackage{ifthen}
\newboolean{fullver}

\title{Stability of
Stochastic Gradient Descent \\
on Nonsmooth Convex Losses}
\author{Raef Bassily\thanks{Department of Computer Science \& Engineering, The Ohio State University. \texttt{bassily.1@osu.edu}}
	\and Vitaly Feldman\thanks{Work done while at Google Research. \texttt{vitaly.edu@gmail.com}}  \and
Crist\'{o}bal Guzm\'{a}n \thanks{Institute for Mathematical and Computational Engineering, Pontificia Universidad Cat\'olica de Chile. \texttt{crguzmanp@mat.uc.cl}}
\and Kunal Talwar\thanks{Work done while at Google Research. \texttt{kunal@kunaltalwar.org}}
}
\date{}

\newcommand{\UHS}{UAS\xspace}
\newcommand{\stabname}{uniform argument stability\xspace}
\newcommand{\Stabname}{Uniform Argument Stability}

\begin{document}

\maketitle

\begin{abstract}
Uniform stability is a notion of algorithmic stability that bounds the worst case change in the model output by the algorithm when a single data point in the dataset is replaced. An influential work of \citet{HardtRS16} provides strong upper bounds on the uniform stability of the stochastic gradient descent (SGD) algorithm on sufficiently smooth convex losses. These results led to important progress in understanding of the generalization properties of SGD and several applications to differentially private convex optimization for smooth losses.

Our work is the first to address uniform stability of SGD on {\em nonsmooth} convex losses. Specifically, we provide sharp upper and lower bounds for several forms of SGD and full-batch GD
on arbitrary Lipschitz nonsmooth convex losses. Our lower bounds show
that, in the nonsmooth case, (S)GD can be inherently less stable than in the smooth case. On the other hand, our upper bounds show that (S)GD is sufficiently stable for deriving new and useful bounds on generalization error. Most notably, we obtain the first dimension-independent generalization bounds for multi-pass SGD in the nonsmooth case. %In particular, by reducing the step size and running (S)GD for a larger number of steps, it is possible to achieve the optimal trade-off between the uniform stability and excess empirical risk even in the nonsmooth case.
%As an application,
In addition, our bounds allow us to derive a new algorithm for differentially private nonsmooth stochastic convex optimization with optimal excess population risk. Our algorithm is simpler and more efficient than the best known algorithm for the nonsmooth case \citep{FKT19}.
\end{abstract}

\section{Introduction}
Successful applications of a machine learning algorithm require the algorithm to generalize well to unseen data. Thus
understanding and bounding the generalization error of machine learning algorithms is an area of intense theoretical interest and practical importance. The single most popular approach to modern machine learning relies on the use of continuous optimization techniques to optimize the appropriate loss function, most notably the stochastic (sub)gradient descent (SGD) method. Yet the generalization properties of SGD are still not well understood.  %\cnote{We think that it is better to remove the text in magenta, as it is already mentioned in Related Work, and it would also help to maintain our focus here on defining the problem and explaining the importance of bounding the gen. error before discussing the techniques for bounding gen error.}
%\textcolor{magenta}{
%The standard approach to generalization relies on uniform convergence of the empirical risk to the population risk over a set of models of bounded complexity. Unfortunately, this approach fails to capture the generalization properties of SGD even in the simple and well-studied case of convex Lipschitz losses.
%} \vnote{makes sense.}

Consider the setting of stochastic convex optimization (SCO). In this problem, we are interested in the minimization of the population risk $F_{\D}(x):=\EE_{\bz\sim \D}[ f(x,\bz) ]$, where $\D$ is an arbitrary and unknown distribution, for which we have access to an i.i.d.~sample of size $n$, $\bS=(\bz_1,\ldots,\bz_n)$; and $f(\cdot,z)$ is convex and Lipschitz for all $z$.
\iffalse
Specifically, consider the standard setting of stochastic convex optimization (SCO). For a domain $\Z$ and a convex body $\X \subset \RR^d$, let $\{f(\cdot,z)\}_{z\in \Z}$ be a family of convex $L$-Lipschitz functions  over $\X$ (for normalization purposes we assume for now that $L=1$ and $\X$ is included in the unit ball). Minimization of population risk is the problem of minimizing
$F_{\D}(x):=\EE_{\bz\sim \D}[ f(x,\bz) ]$,
for an unknown distribution $\D$ over $\Z$, given a dataset $\bS=(\bz_1,\ldots,\bz_n)$ of $n$ i.i.d.~samples from $\D$.
\fi
The performance of an
algorithm ${\cal A}$ is
quantified by its expected {\em excess population risk},
$$ \varepsilon_{\mbox{\footnotesize{risk}}}(\A) :=
\EE[F_{\D}(\A(\bS))]- \min_{x\in \X} F_{\D}(x), $$ where the expectation is taken with respect to the randomness of the sample $\bS$ and internal randomness of $\A$.
%{\color{magenta}
A standard way to bound the excess risk is given by its decomposition into optimization error (a.k.a.~training error) and generalization error (see eqn.~\eqref{eqn:risk_decomp} in Sec.~\ref{sec:prelims}). The optimization error can be easily measured empirically but assessing the generalization error requires access to fresh samples from the same distribution. Thus bounds on the generalization error lead directly to provable guarantees on the excess population risk.
%}
%Therefore, understanding the generalization properties of a fundamental algorithm such as SGD is crucial in providing useful guarantees on its performance.}
%\rnote{We added the colored text to emphasize the theme on generalization error. See if it makes sense.} \vnote{Makes sense to emphasize. Rewrote a bit to make more concise.}

Classical analysis of SGD allows obtaining bounds on the excess population risk of one pass SGD. In particular, with an appropriately chosen step size, SGD gives a solution with expected excess population risk of $O(1/\sqrt{n})$ and this rate is optimal \citep{nemirovsky1983problem}. However, this analysis does not apply to multi-pass SGD that is ubiquitous in practice.

% \rnote{It seems to me the theme initiated in the first paragraph is explaining and bounding generalization error. Also, it seems to me that the goal in this paragraph is to discuss the approaches that bound the generalization error for this problem. But the discussion above, specifically about one-pass and the excess population risk, seems a bit distracting to me. Perhaps, we should emphasize the point we are trying to make (I'd say defer talking about one-pass SGD and optimal population risk till later and just focus on generalization error). I think that would help maintain our focus on the main theme here and would also fit better with the paragraphs below.} \vnote{The point this paragraph makes is quite important for dimension independent bounds. I've made this more clear and moved the lower bound to related work.}
%the bound vacuous in the high-dimensional settings common in modern applications.

%{\color{magenta} One powerful technique to bound the generalization error of SGD is algorithmic stability.}
%\vnote{Removed the sentence about algorithmic stability since it seems redundant.}
In an influential work, \citet{HardtRS16} gave the first bounds on the generalization error of general forms of SGD (such as those that make multiple passes over the data). Their analysis relies on algorithmic stability, a classical tool for proving bounds on the generalization error. Specifically, they gave strong bound on the {\em uniform stability} of several variants of SGD on convex and smooth losses (with $2/\eta$-smoothness sufficing when all the step sizes are at most $\eta$). Uniform stability bounds the worst case change in loss of the model output by the algorithm on the worst case point when a single data point in the dataset is replaced \citep{BousquettE02}.  Formally, for a randomized algorithm ${\cal A}$, loss functions $f(\cdot,z)$ and $S\simeq S^{\prime}$ and $z\in\Z$, let $\gamma_{\cal A}(S,S^{\prime}, z):=f(\A(S),z)-f(\A(S^{\prime}),z)$, where $S\simeq S^{\prime}$ denotes that the two datasets differ only in a single data point. We say ${\cal A}$ is $\gamma$-uniformly stable if
$$\sup_{S\simeq S^{\prime},z}\mathbb{E}[\gamma_{\cal A}(S,S^{\prime},z)]\leq \gamma,$$
where the expectation is over the internal randomness of $\cal{A}.$ Stronger notions of stability can also be considered, e.g., bounding the probability -- over the internal randomness of $\cal{A}$ -- that $\gamma_{\cal A}(S,S^{\prime},z)>\gamma$. %(where the probability is over the internal randomness of $\cal{A}$ only).
Using stability,
\cite{HardtRS16} showed that several variants of SGD simultaneously achieve the optimal tradeoff between the excess empirical risk and stability with both being $O(1/\sqrt n)$. Several works have used this approach to derive new generalization properties of SGD \citep{London:2017,Chen:2018,FeldmanV:2019}.

The key insight of \citet{HardtRS16} is that a gradient step on a sufficiently smooth convex function is a nonexpansive operator (that is, it does not increase the $\ell_2$ distance between points). Unfortunately, this property does not hold for nonsmooth losses such as the hinge loss. As a result, no non-trivial bounds on the uniform stability of SGD have been previously known in this case.

%\remove{
%\item {\bf Independent interest in algorithmic satbility}
Uniform stability is also closely related to the notion of differential privacy (DP). DP upper bounds the worst case change in the output distribution of an algorithm when a single data point in the dataset is replaced \citep{DMNS06}. This connection has been exploited in the design of several DP algorithms for SCO. In particular, bounds on the uniform stability of SGD from \citep{HardtRS16} have been crucial in the design and analysis of new DP-SCO algorithms \citep{wu2017bolt,DworkFeldman18,FeldmanMTT18,BFTT19,FKT19}.
%{\color{blue} Also mention use of uniform stability in other fields? adaptive data analysis.}{\color{magenta} I think although there is indeed a role, it is a bit indirect: uniform stability helps in the analysis of other notions of algorithmic stability that provide generalization guarantees in adaptive settings (DP is one popular example). So, maybe it is not really within the immediate scope of this work.}

%{\color{blue} This was removed from the submission, but I think it's worth including, perhaps with some rephrasing. Beyond generalization, algorithmic stability is an interesting tool in its own right, and has found applications in differential privacy and adaptive data analysis (e.g., \citep{DMNS06,NissimRS07,BassilyNSSSU15}). Therefore, we regard the question of stability in stochastic convex optimization as a fundamental problem, with potential impact in machine learning at large.}
%\end{itemize}
%}

\subsection{Our Results}
We establish tight bounds on the uniform stability of the (stochastic) subgradient descent method on nonsmooth convex losses. These results demonstrate that in the nonsmooth case SGD can be substantially less stable. At the same time we show that SGD has strong stability properties even in the regime when its iterations can be expansive.

For convenience, we describe our results in terms of {\em \stabname} (\UHS), which bounds the output sensitivity in $\ell_2$-norm w.r.t.~an arbitrary change in a single data point. Formally, a (randomized) algorithm has $\delta$-\UHS if %\textcolor{red}{New definition here. We believe it's necessary to imply unif stability}
\begin{equation} \label{eqn:UAS_intro}
\sup_{S\simeq S^{\prime}} \EE\left\| \A(S)-\A(S^{\prime})\right\|_2 \leq \delta.
\end{equation}
%$$\sup_{S\simeq S^{\prime}} \left\| \EE[\A(S)]-\EE[\A(S^{\prime})]\right\|_2 \leq \delta.$$
This notion is implicit in existing analyses of uniform stability
\citep{BousquettE02,ShwartzSSS10,HardtRS16} and was explicitly defined by \citet{Liu:2017}. %For $L$-Lipschitz functions this notion is essentially equivalent to $L\delta$-uniform stability.
In this work, we prove stronger -- high probability -- upper bounds on the random variable $\delta_{\cal A}(S,S^{\prime}):=\left\|\A(S)-\A(S^{\prime})\right\|$,\footnote{In fact, for both GD and fixed-permutation SGD we can obtain w.p.~1 upper bounds on $\delta_{\cal A}(S,S^{\prime})$, whereas for sampling-with-replacement SGD, we obtain a high-probability upper bound.} and we provide matching lower bounds for the weaker -- in expectation -- notion of \UHS \eqref{eqn:UAS_intro}.
A summary of our bounds is in Table \ref{tab:summary}. For simplicity, they are provided for constant step size; general step sizes (for upper bounds) are provided in Section \ref{sec:UB}.
%{\color{blue} I modified the upper bounds for the high probability results. And left the old version commented, just in case.}
\begin{table}[h!]
\centering
\begin{tabular}{| M{2.8cm} | M{3cm} | M{4cm} | M{4.5cm}  | N}
\hline
Algorithm	 & H.p.~upper bound    & Exp.~upper bound  & Exp.~Lower bound &\\[5pt] \hline \small
 GD (full batch)  & $4\big(\eta\sqrt{T}+\frac{\eta T}{n} \big)$ &
 $4\big(\eta\sqrt{T}+\frac{\eta T}{n} \big)$
&   $\Omega\big(\eta\sqrt T+ \frac{\eta T}{n}\big)$       %
&\\[6pt]\hline
 \small SGD (w/replacement) &
 $4\big(\eta\sqrt{T}+\frac{\eta T}{n}\big) $
 &$\min\{1,\frac{T}{n}\}4\eta\sqrt{T}+4\frac{\eta T}{n} $
% $2\eta\sqrt{T}+4\frac{\eta T}{n}$
& $\Omega\Big(\min\{1,\frac{T}{n}\}\eta\sqrt{T}+\frac{\eta T}{n}\Big)$ &\\[5pt] \hline
\small SGD (fixed permutation) &
$2\eta\sqrt{T}+4\frac{\eta T}{n}$& $\min\{1,\frac{T}{n}\}2\eta\sqrt{T}+4\frac{\eta T}{n}$
& $\Omega\Big(\min\{1,\frac{T}{n}\}\eta\sqrt{T}+\frac{\eta T}{n}\Big)$ &\\[5pt] \hline
\end{tabular}
\caption{
\iffull
\UHS for variants of GD/SGD, with normalized radius and Lipschitz constant. Here $T$ is the number of iterations and $\eta>0$ is the step size. %We omit dependende on radius and Lipschitz constants by choosing $R=L=1$.
Both upper and lower bounds also are $\min\{2,(\cdot)\}$, due to the feasible domain radius.
\else
\UHS for GD and SGD. Here $T=\#$~iterations; $\eta$ is the step size.% Here, $R=L=1$.
\fi
}
\label{tab:summary}
\end{table}

\iffalse
\begin{table}[h!]
\centering
\begin{tabular}{| M{3cm} | M{5cm} | M{5cm}  | N}
\hline
Algorithm	 & Upper bound  & Lower bound &\\[5pt] \hline \small
 GD (full batch)  &
 $4\big(\eta\sqrt{T}+\frac{\eta T}{n} \big)$
&   $\Omega\big(\eta\sqrt T+ \frac{\eta T}{n}\big)$       %
&\\[6pt]\hline
 \small SGD (w/replacement)  &$\min\{1,\frac{T}{n}\}4\eta\sqrt{T}+4\frac{\eta T}{n} $
% $2\eta\sqrt{T}+4\frac{\eta T}{n}$
& $\Omega\Big(\min\{1,\frac{T}{n}\}\eta\sqrt{T}+\frac{\eta T}{n}\Big)$ &\\[5pt] \hline
\small SGD (fixed permutation)  & $\min\{1,\frac{T}{n}\}2\eta\sqrt{T}+4\frac{\eta T}{n}$
& $\Omega\Big(\min\{1,\frac{T}{n}\}\eta\sqrt{T}+\frac{\eta T}{n}\Big)$ &\\[5pt] \hline
\end{tabular}
\caption{
\iffull
\UHS for (batch) SGD. Here $T$ is the number of iterations and $\eta>0$ is the step size. %We omit dependende on radius and Lipschitz constants by choosing $R=L=1$.
Both upper and lower bounds also are $\min\{2,(\cdot)\}$, due to the feasible domain radius.
\else
\UHS for GD and SGD. Here $T=\#$~iterations; $\eta$ is the step size.% Here, $R=L=1$.
\fi
}
\label{tab:summary}
\end{table}
\fi

Compared to the smooth case \citep{HardtRS16}, the main difference is the presence of the additional $\eta\sqrt{T}$ term. This term has important implications for the generalization bounds derived from \UHS. The first one is that the standard step size $\eta=\Theta(1/\sqrt{n})$ used in single pass SGD leads to a vacuous stability bound. Unfortunately, as shown by our lower bounds, this  is unavoidable (at least in high dimension). However, by decreasing the step size and increasing the number of steps, one obtains a variant of SGD with nearly optimal balance between the \UHS and the excess empirical risk. % However, by decreasing the step size to $\Theta(n^{-3/2})$ and increasing the number of steps to $\Theta(n^2)$, one obtains a variant of SGD with nearly optimal balance between the \UHS and the excess empirical risk.

We highlight two major consequences of our bounds: %this idea: %We give two applications of this idea.
\begin{itemize}[leftmargin=*]
\item \textbf{Generalization bounds for multi-pass nonsmooth SGD.} We prove that the generalization error of multi-pass SGD  with $K$ passes is bounded by $O((\sqrt{Kn}+K)\eta)$. This result can be easily combined with training error guarantees to provide excess risk bounds for this algorithm. Since training error can be measured directly, our generalization bounds would immediately yield strong guarantees on the excess risk in practical scenarios where we can certify small training error.
%It also gives the fastest known algorithm that achieves the %(asymptotically) optimal risk bound while being $O(1/\sqrt{n})$-\UHS (in the nonsmooth case). \rnote{The sentence ``It also gives the fastest ...'' does not seem to fit here.} We provide the statement for the sampling-with-replacement variant, but an analogous result can be obtained for fixed-permutation SGD.

%\item \textbf{Optimal generalization bounds for multi-pass SGD.} We prove the optimal excess population risk bound of $O(1/\sqrt{n})$ for two versions of SGD with $n^2$ iterations: sampled with replacement and fixed permutation. This expands the set of algorithms that are known to achieve the asymptotically optimal rate of convergence to the minimum of population risk. It also gives the fastest known algorithm that achieves the (asymptotically) optimal risk bound while being $O(1/\sqrt{n})$-\UHS (in the nonsmooth case). The case of fixed-permutation SGD is of particular interest, as it does not require any form of randomization or reshuffling. We believe this result can be of independent interest.

\item \textbf{Differentially private stochastic convex optimization for non-smooth losses.} We show that a variant of standard noisy SGD \citep{BST14} with constant step size and $n^2$ iterations yields the optimal excess population risk $O\big(\frac{1}{\sqrt{n}}+\frac{\sqrt{d\log (1/\beta)}}{\alpha n}\big)$ for convex \emph{nonsmooth} losses under $(\alpha,\beta)$-differential privacy. The best previous algorithm for this problem is substantially more involved: it relies on a multi-phase regularized SGD with decreasing step sizes and variable noise rates and uses $O(n^2 \sqrt{\log (1/\beta)})$ gradient computations \citep{FKT19}.

\end{itemize}

\subsection{Overview of Techniques}

\begin{itemize}[leftmargin=*]
\item {\bf Upper bounds.}
When gradient steps are nonexpansive, upper-bounding \UHS requires simply summing the differences between the gradients on the neighboring datasets when the replaced data point is used %on the same point
\citep{HardtRS16}. This gives the bound of $\eta T/n$ in the smooth case.

%A standard approach to control \UHS is to bound the largest possible deviation over a single step of the method under two neighbor datasets, and then solving the resulting recurrence. This is the undertaken approach in the smooth convex case \citep{HardtRS16}; however, there the nonexpansiveness of the gradient mapping \citep{Baillon:1977} certifies that SGD --when it samples a datum that is equal on both datasets-- does not increase the distance between iterates. This ultimately leads to a simple linear recurrence on \UHS.

By contrast, in the nonsmooth case, \UHS may increase even when the gradient step is performed on the same function. As a result it may increase in {\em every single iteration}. However, we use the fact  that the difference in the subgradients has negative inner product with the difference between the iterates themselves (by monotonicity of the subgradient). Thus the increase in distance satisfies a recurrence with a quadratic and a linear term. Solving this recurrence leads to our upper bounds.% that we bound by building a second moment deviation w.r.t.~the linear contribution in stability (this term is analog to the one in the smooth case). Then a bound on \UHS is obtained by the Jensen inequality.

\item {\bf Lower bounds.}
The lower bounds are based on a function with a highly nonsmooth behavior around the origin. More precisely, it is the maximum of linear functions
%coordinate functions and the identically zero function,
plus
a small linear drift that is controlled by a single data point. We show that, when starting the algorithm from the origin, the presence of the linear drift pushes the iterate into a trajectory in which each subgradient step is orthogonal to the current iterate. Thus, if $d \geq \min\{T,1/\eta^2\}$,  we get the $\sqrt{T}\eta$ increase in \UHS. Our lower bounds are also robust to averaging of the iterates. \iffull \else The detailed constructions and analyses can be found in Section~4 of the full version, attached as supplementary material.\fi
\end{itemize}

\subsection{Other Related Work}

Stability is a classical approach to proving generalization bounds pioneered by \citet{RogersWagner78,DevroyeW79,DevroyeW79a}. It is based on analysis of the sensitivity of the learning algorithm to changes in the dataset such as leaving one of the data points out or replacing it with a different one. \iffull The choice of how to measure the effect of the change and various ways to average over multiple changes give rise to a variety of stability notions %that have been examined in the literature
(e.g., \citep{BousquettE02,MukherjeeNPR06,ShwartzSSS10}).\fi\,\! Uniform stability was introduced by \citet{BousquettE02} in order to derive general bounds on the generalization error that hold with high probability. These bounds have been significantly improved in a recent sequence of works \citep{FeldmanV:2018,FeldmanV:2019,BousquetKZ19}. \iffull A long line of work focuses on the relationship between various notions of stability and learnability in supervised setting (see  \citep{Kearns:1999,PoggioRMN04,ShwartzSSS10} for an overview). These works employ relatively weak notions of average stability and derive a variety of asymptotic equivalence results. \fi
\citet{Chen:2018} establish limits of stability in the smooth convex setting, proving that accelerated methods must satisfy strong stability lower bounds. Stability-based data-dependent generalization bounds for continuous losses were studied in \citep{Maurer17,Kuzborskij:2018}.

%\vnote{I think this part fairly important for general context so lets try to keep it in the Neurips paper.}\rnote{Sure, but we added the text in magenta below. Feel free to edit (but I left a comment for why we added it).}
First applications of uniform stability in the context of stochastic convex optimization relied on the stability of the empirical minimizer for strongly convex losses \citep{BousquettE02}. Therefore a natural approach to achieve uniform stability (and also \UHS) is to add a strongly convex regularizer and solve the ERM to high accuracy \citep{ShwartzSSS10}. Recent applications of this approach can be found for example in \citep{KorenLevy15,Charles:2018,FKT19}. In contrast, our approach does not require strong convexity and applies to all iterates of the SGD and not only to a very accurate empirical minimizer.
%\vnote{I do not think the risk of confusion is high but I've added a bit shorter alternative way to distinguish our approach. Tuning argument can also be applied to step sizes in our approach. Also moved this paragraph above uniform convergce.}
%\cnote{Remove what follows?}
%\textcolor{magenta}{On the other hand, the practical approach to regularization requires careful tuning of the regularization parameter. Moreover, for nonsmooth losses, this approach requires running SGD for $n^2$ iterations, at the theoretically optimal setting of the regularization parameter ($\approx 1/\sqrt{n}$).}
%\rnote{The original text seemed like we are saying regularization can replace what we are doing here. Without a comparison, this may eclipse our contributions.}
%\fi

Classical approach to generalization relies on \emph{uniform convergence} of empirical risk to population risk.
Unfortunately, without additional structural assumptions on convex functions, a lower bound of $\Omega(\sqrt{d/n})$ on the rate of \emph{uniform convergence} for convex SCO is known \citep{ShwartzSSS10,Feldman:16erm}. The dependence on the dimension $d$ makes the bound obtained via the uniform-convergence approach vacuous in the high-dimensional settings common in modern applications.
%\iffull

Differentially private convex optimization has been studied extensively for over a decade (see, e.g., \citep{CM08,CMS,jain2012differentially,kifer2012private,ST13sparse, BST14,ullman2015private,JTOpt13,talwar2015nearly, BFTT19, FKT19}).
%\citep{CM08,CMS,jain2012differentially,kifer2012private,ST13sparse,song2013stochastic,DuchiJW13, BST14,ullman2015private,JTOpt13,BST14,talwar2015nearly,smith2017interaction,wu2017bolt,wang2017differentially,iyengartowards}.
However, until recently, the research focused on minimization of the empirical risk. \iffull Population risk for DP-SCO was first studied by \citet{BST14} who gave an upper bound of $\max\left(\tfrac{d^{\frac{1}{4}}}{\sqrt{n}}, \tfrac{\sqrt{d}}{\alpha n}\right)$ \cite[Sec. F]{BST14} on the excess risk. \fi A recent work of  \citet{BFTT19} established that the optimal rate of the excess population risk for $(\alpha,\beta)$-DP SCO algorithms is $O\big(\frac{1}{\sqrt{n}}+\frac{\sqrt{d\log (1/\beta)}}{\alpha n}\big)$. Their algorithms are relatively inefficient, especially in the nonsmooth case. Subsequently, \citet{FKT19} gave several new algorithms for DP-SCO with the optimal population risk. For sufficiently smooth losses, their algorithms use a linear number of gradient computations. In the nonsmooth case, as mentioned earlier, their algorithm requires $O(n^2\sqrt{\log(1/\beta)})$ gradient computations and is significantly more involved than the algorithm shown here.

\section{Notation and Preliminaries}\label{sec:prelims}

Throughout we work on the Euclidean space $(\RR^d,\|\cdot\|_2)$. \iffull Therefore, we use unambiguously $\|\cdot\|=\|\cdot\|_2$. Vectors are denoted by lower case letters, e.g.~$x,y$. Random variables (either scalar or vector) are denoted by boldface letters, e.g.~$\bz,\bu$.\fi We denote the Euclidean ball of radius $r>0$ centered at $x\in\RR^d$ by $\B(x,r)$.
\iffull In what follows, $\X\subseteq \RR^d$ is a compact convex set, and assume we know its Euclidean radius $R>0$, $\X\subseteq \B(0,R)$. \else In what follows, we assume $\X\subseteq \B(0,R)$ is a closed, convex set for some $R>0$. \fi \iffull Let $\proj_{\X}$ be the Euclidean projection onto $\X$, which is {\em nonexpansive}  $\|\proj_{\X}(x)-\proj_{\X}(y)\|\leq\|x-y\|$. \else Let $\proj_{\X}$ be the Euclidean projection onto $\X$. \fi
A convex function $f:\X\mapsto \RR$ is $L$-Lipschitz
if
\iffull
\begin{equation} \label{eqn:Lipschitz}
 f(x)-f(y) \leq L\|x-y\| \qquad(\forall x,y\in\X).
\end{equation}
Functions with these properties are guaranteed to be subdifferentiable. Moreover,
in the convex case, property~\eqref{eqn:Lipschitz} is ``almost'' equivalent to having subgradients bounded as $\partial f(x)\subseteq \B(0,L)$, for all $x \in\X$.\footnote{For equivalence to hold it is necessary that the function is well-defined and satisfies \eqref{eqn:Lipschitz} over an open set containing $\X$, see Thm.~3.61 in \cite{Beck:2017}. We will assume this is the case, which can be done w.l.o.g..}
%\citep[Thm.~3.61]{Beck:2017}.
\else
\begin{equation} \nonumber
 f(x)-f(y) \leq L\|x-y\| \qquad(\forall x,y\in\X).
\end{equation}
\fi
We denote the class of convex $L$-Lipschitz functions as $\F_{\X}^0(L)$.
\iffull
With slight abuse of notation, given a function $f\in\F_{\X}^0(L)$, we will denote by $\nabla f(x)$ an arbitrary choice of $g\in\partial f(x)$.
\fi
In this work, we will focus on the class $\F_{\X}^0(L)$ defined over a compact convex set $\X$. Since the Euclidean radius of $\X$ is bounded by $R$, we will assume that the range of these functions lies in $[-RL, RL]$.

\ifthenelse{\boolean{fullver}}{
A convex and differentiable function $f:\X\mapsto\RR$ is said to be $\mu$-smooth if
$$ \|\nabla f(x)-\nabla f(y)\| \leq \mu\|x-y\| \qquad(\forall x,y\in\X),$$
and we denote the class of convex $\mu$-smooth functions by $\F_{\X}^1(\mu)$.
}{}

\noindent\textbf{Nonsmooth stochastic convex optimization:} We study the standard setting of nonsmooth stochastic convex optimization
\begin{equation}\nonumber
x^{\ast} \in \arg\min \{F_{\D}(x):=\EE_{\bz\sim \D}[ f(x,\bz) ]:\,\, x\in \X \}.
\end{equation}
Here,  $\D$ is an unknown distribution supported on a set $\Z$, and $f(\cdot,z)\in\F_{\X}^0(L)$ for all $z\in\Z$. In the stochastic setting, we assume access to an i.i.d.~sample from $\D$, denoted as $\bS=(\bz_1,\ldots, \bz_n)\sim \D^n$. \iffull Here, we will use the bold symbol $\bS$
to denote a random sample from the unknown distribution. A fixed (not random) dataset from $\Z^n$ will be denoted as $S=(z_1,\ldots, z_n)\in\Z^n$.\fi

\noindent\textbf{A stochastic optimization algorithm} is a (randomized) mapping ${\cal A}:\Z^n\mapsto \X$. When the algorithm is randomized, ${\cal A}(\bS)$ is a random variable depending on both the sample $\bS\sim \D^n$ and its own random coins.
The performance of ${\cal A}$ is quantified by its {\em excess population risk}
$$ \varepsilon_{\mbox{\footnotesize{risk}}}(\A) :=
F_{\D}(\A(\bS))-F_{\D}(x^{\ast}).$$
Note that $\varepsilon_{\mbox{\footnotesize{risk}}}(\A)$ is a random variable (due to randomness in the sample $\bS$ and any possible internal randomness of the algorithm). \iffull Our guarantees on the excess population risk will be expressed in terms of upper bounds on this quantity that hold \textit{with high probability} over the randomness of both $\bS$ and the random coins of the algorithm.\fi
%Hereafter, unless otherwise stated, expectation is taken w.r.t.~the sample $\bS$ and the %internal
%randomness of ${\cal A}$.

\noindent\textbf{Empirical risk minimization (ERM)} is one of the most standard approaches to stochastic convex optimization. In the ERM problem, we are given a sample  $\bS=(\bz_1,\ldots, \bz_n)$, and the goal is to find
$$x^{\ast}(\bS) \in \arg\min \Big\{F_{\bS}(x):=\frac1n \sum_{i=1}^n f(x, \bz_i):\,\, x\in \X \Big\}.$$

\iffull One way to bound the excess population risk is to solve the ERM problem, and appeal to uniform convergence; however, uniform convergence
rates in this case are dimension-dependent, $\Omega(\sqrt{d/n})$ \citep{Feldman:16erm}.\fi

\noindent\textbf{Risk decomposition:} \iffull Guaranteeing low excess population risk for a general algorithm is a nontrivial task. \fi A common way
to bound \iffull it \else the excess population risk \fi is by decomposing it into {\em generalization}, {\em optimization} and {\em approximation error}:
%\begin{equation} %\label{eqn:risk_decomp}
%\varepsilon_{\mbox{\footnotesize{risk}}}(\A)
%\leq \underbrace{\mathbb{E}[F_{\D}(\A(\bS))-F_{\bS}(\A(\bS))]}_{\eps_{\mbox{\tiny gen}}}
%+\underbrace{\mathbb{E}[F_{\bS}(\A(\bS))-F_{\bS}(x^{\ast}(\bS))]}_{\eps_{\mbox{\tiny opt}}}
%+\underbrace{\mathbb{E}[F_{\bS}(x^{\ast}(\bS))]-F_{\D}(x^{\ast})}_{\leq 0}.
%\end{equation}
\begin{equation}
    \label{eqn:risk_decomp}
\varepsilon_{\mbox{\footnotesize{risk}}}(\A)
\leq \underbrace{F_{\D}(\A(\bS))-F_{\bS}(\A(\bS))}_{\eps_{\mbox{\tiny gen}}(\A)}
+\underbrace{F_{\bS}(\A(\bS))-F_{\bS}(x^{\ast}(\bS))}_{\eps_{\mbox{\tiny opt}}(\A)}
+\underbrace{F_{\bS}(x^{\ast}(\bS))-F_{\D}(x^{\ast})}_{\varepsilon_{\mbox{\tiny approx}}}.
\end{equation}
Here, the optimization error corresponds to the empirical optimization gap, which can be bounded by standard optimization convergence analysis. The expected value of the approximation error is at most zero. \iffull One can show, e.g., by Hoeffding's inequality, that the approximation error is bounded by $\tilde{O}(LR/\sqrt n)$ with high probability (see Lemma~\ref{lem:approx-err-bd} below.) \else One can show, e.g., by Hoeffding's inequality, that the approximation error is bounded by $\tilde{O}(LR/\sqrt n)$ with high probability (see supplementary material.) \fi Therefore, to establish bounds on the excess risk it suffices to upper bound the optimization and generalization errors.
%Since $\eps_{\mbox{\footnotesize opt}}$ can be bounded by standard optimization
%error analysis, it suffices to bound  $\eps_{\mbox{\footnotesize gen}}$.
\iffull
\begin{lem}\label{lem:approx-err-bd}
For any $\theta \in (0, 1),$ with probability at least $1-\theta$, the approximation error is bounded as
$$\approxerr \leq \frac{RL \sqrt{2 \log(1/\theta)}}{\sqrt{n}}.$$
\end{lem}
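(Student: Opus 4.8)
The plan is to bound $\approxerr$ by a quantity that is an average of i.i.d.\ bounded random variables with a \emph{fixed} (sample-independent) center, and then apply a one-sided Hoeffding bound. First I would use the defining property of the empirical minimizer: since $x^{\ast}(\bS)\in\arg\min_{x\in\X} F_{\bS}(x)$ and $x^{\ast}\in\X$, we have $F_{\bS}(x^{\ast}(\bS))\le F_{\bS}(x^{\ast})$. Substituting into the definition of the approximation error gives
\begin{equation}\nonumber
\approxerr = F_{\bS}(x^{\ast}(\bS))-F_{\D}(x^{\ast}) \le F_{\bS}(x^{\ast})-F_{\D}(x^{\ast}) = \frac1n\sum_{i=1}^n \big(f(x^{\ast},\bz_i)-F_{\D}(x^{\ast})\big).
\end{equation}
The key point of this step is that $x^{\ast}$ is a deterministic point (it depends only on $\D$, not on $\bS$), so the summands $f(x^{\ast},\bz_i)$ are i.i.d.\ with mean exactly $F_{\D}(x^{\ast})$.

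Next I would invoke the range bound recorded earlier in the preliminaries: since $f(\cdot,z)\in\F_{\X}^0(L)$ and $\X\subseteq\B(0,R)$, each value $f(x^{\ast},\bz_i)$ lies in $[-RL,RL]$, an interval of width $2RL$. Applying Hoeffding's inequality to the average on the right-hand side, for any $t>0$,
\begin{equation}\nonumber
\PP\!\left[\frac1n\sum_{i=1}^n\big(f(x^{\ast},\bz_i)-F_{\D}(x^{\ast})\big)\ge t\right]\le \exp\!\left(-\frac{n t^2}{2R^2L^2}\right).
\end{equation}
Choosing $t=\dfrac{RL\sqrt{2\log(1/\theta)}}{\sqrt n}$ makes the right-hand side equal to $\theta$. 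Combining with the first display, with probability at least $1-\theta$ we get $\approxerr\le \dfrac{RL\sqrt{2\log(1/\theta)}}{\sqrt n}$, as claimed.

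There is no real obstacle here; the only subtlety worth stating explicitly is the one flagged above, namely that one must bound $\approxerr$ against $F_{\bS}(x^{\ast})$ rather than $F_{\bS}(x^{\ast}(\bS))$ before applying concentration, because $x^{\ast}(\bS)$ depends on the sample and a naive application of Hoeffding to $F_{\bS}(x^{\ast}(\bS))$ would be invalid. Everything else is a direct instantiation of Hoeffding's inequality with the range $[-RL,RL]$.
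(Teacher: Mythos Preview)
Your proposal is correct and essentially identical to the paper's own proof: both first use the optimality of $x^{\ast}(\bS)$ to replace $F_{\bS}(x^{\ast}(\bS))$ by $F_{\bS}(x^{\ast})$, and then apply Hoeffding's inequality at the fixed point $x^{\ast}$ using the range $[-RL,RL]$. The constants and the final choice of $t$ match exactly.
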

\begin{proof}
First, note that $F_{\D}(x^{\ast})=\ex{\bS}{F_S(x^{\ast})}=\frac1n\sum_{i=1}^nf(x^{\ast},\bz_i)$. Hence, by independence and the fact that $f(x^{\ast},\bz_i) \in [-RL, RL]$ with probability $1$ for all $i\in [n]$, the following follows from Hoeffding's inequality:
$$\pr{\bS\sim\D^n}{F_{\bS}(x^{\ast})-F_{\D}(x^{\ast})\geq \frac{RL \sqrt{2 \log(1/\theta)}}{\sqrt{n}}}\leq \theta.$$
Finally, note that by definition of $x^{\ast}(\bS)$, we have $F_{\bS}(x^{\ast}(\bS))-F_{\bS}(x^{\ast})\leq 0$. Combining this with the above bound completes the proof.

% First, note that by definition of $\x^{\ast}(\bS)$, we have $\ex{\bS}{F_{\bS}(x^{\ast}(\bS))}-F_{\D}(x^{\ast})=\ex{\bS}{F_{\bS}(x^{\ast}(\bS))-F_{\bS}(x^{\ast})}\leq 0.$ Hence, $\approxerr \leq F_{\bS}(x^{\ast}(\bS))- \ex{\bS}{F_{\bS}(x^{\ast}(\bS))}$, and it would suffice for us to bound the deviation of $F_{\bS}(x^{\ast}(\bS))$ from its mean. We will show that, as a function of $\bS$, $F_{\bS}(x^{\ast}(\bS))$ is of low sensitivity, and then apply McDiarmid's concentration inequality for functions of bounded sensitivity (a.k.a. bounded differences).

% Let $S, S'\in\Z^n$ be any pair of datasets that differs in exactly one data point. W.l.o.g., let's assume that they differ in the last point. Let $S^{-}=(z_1, \ldots, z_{n-1})$ denote the common portion of $S$ and $S'$. Let $x^{\ast}(S^{-})\in \arg\min\limits_{x\in\X}F_{S^{-}}(x),$ where $F_{S^{-}}(x)=\frac{1}{n}\sum_{i=1}^{n-1} f(x, z_i)$. Now, observe
% \begin{align*}
%     F_{S}(x^{\ast}(S))-F_{S'}(x^{\ast}(S'))&\leq F_S(x^{\ast}(S^{-}))-\left(F_{S^{-}}(x^{\ast}(S^{-}))+ \frac{1}{n}f(x^{\ast}(S'), z')\right)\\
%     &=\frac{1}{n}\left(f(x^{\ast}(S^{-}), z)-f(x^{\ast}(S'), z')\right)
% \end{align*}
\end{proof}
\fi

We say that two datasets $S, ~S^{\prime}$ are neighboring, denoted $S\simeq S^{\prime}$, if they only differ on a single entry; i.e., there exists $i\in[n]$ s.t.~for all $k\neq i$, $z_k=z_k^{\prime}$.

%\vspace{0.15cm}

\noindent\textbf{Uniform argument stability (UAS):} Given an algorithm ${\cal A}$ and datasets $S\simeq S^{\prime}$, we define the {\em uniform argument stability} (UAS) random variable as
$$\delta_{\cal A}(S,S^{\prime}):=\|{\cal A}(S)-{\cal A}(S^{\prime})\|.$$
The randomness here is due to any possible internal randomness of ${\cal A}$. For any $L$-Lipschitz function $f$, we have that $f\left(\A(S), z\right)-f\left(\A(S^{\prime}), z\right)\leq L\,\delta_{\cal A}(S,S^{\prime}).$ Hence, upper bounds on UAS can be easily transformed into upper bounds on uniform stability.

%\quad

In this work, we will consider two types of bounds on UAS.
\iffull
\subsection{High-probability guarantees on UAS}
In Section~\ref{sec:UB}, we give upper bounds on UAS for three variants of the (stochastic) gradient descent algorithm, namely, (i) full-batch gradient descent, (ii) sampling-with-replacement stochastic gradient descent, and (iii) fixed-permutation stochastic gradient descent. Variant (i) is deterministic (and hence UAS is a deterministic quantity). For variant (ii), for any pair of neighboring datasets $S, S'$, we give an upper bound on the UAS random variable that holds with high probability over the algorithm's internal randomness (the sampling with replacement). For variant (iii), we give an upper bound on UAS that holds for an arbitrary choice of permutation; in particular, for any random permutation our upper bound on the UAS random variable that holds with probability 1.
%the UAS random variable that holds {\color{red} for an arbitrary choice of permutation.}
%with probability $1$ over the choice of the permutation.

\else
\paragraph{High-probability guarantees on UAS:} For any pair $S \simeq S'$, one can bound the UAS random variable $\delta_{\cal A}(S,S')$ %with high probability
w.h.p.~over the internal randomness of ${\cal A}$.
\fi
High-probability upper bounds on UAS lead to high-probability upper bounds on generalization error $\generr$. We will use the following theorem, which follows in a straightforward fashion from \cite[Theorem 1.1]{FeldmanV:2019}, to derive generalization-error guarantees for our results in Sections~\ref{sec:MultipassSGD} and \ref{sec:Applications} based on our UAS upper bounds in Section~\ref{sec:UB}.

\begin{thm}[follows from Theorem 1.1 in \cite{FeldmanV:2019}]\label{thm:stab-to-gen}
Let $\A:\Z^n\rightarrow \X$ be a randomized algorithm. For any pair of neighboring datasets $S, S'$, suppose that the UAS random variable of $\A$ satisfies:
\begin{align*}
    \pr{\A}{\delta_{\A}(S, S')\geq \gamma}&\leq \theta_0.
\end{align*}
Then there is a constant $c$ such that for any distribution $\D$ over $\Z$ and any $\theta \in (0, 1)$, we have
\begin{align*}
    \pr{\bS\sim\D^n,\,\A}{\lvert\generr(\A)\rvert\geq c\left(L \gamma\log(n)\log(n/\theta)+LR\sqrt{\frac{\log(1/\theta)}{n}}\right)}&\leq \theta+\theta_0,
\end{align*}
where $\generr(\A)= F_{\D}(\A(\bS))-F_{\bS}(\A(\bS))$ as defined earlier.
\end{thm}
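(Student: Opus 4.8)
The plan is to obtain the statement as a direct consequence of the high-probability generalization bound for uniformly stable algorithms, \citet[Theorem 1.1]{FeldmanV:2019}. That result takes an algorithm whose output, measured through the \emph{loss}, is insensitive to replacing a single data point, together with a bound $M$ on the range of the loss, and outputs a bound on $|\generr|$ of the form $\gamma_{\mathrm{loss}}\log(n)\log(n/\theta)+M\sqrt{\log(1/\theta)/n}$ holding with probability $\ge 1-\theta$. So there are two things to do: first, translate our \UHS hypothesis (an $\ell_2$-distance statement about the iterates) into loss stability; and second, reconcile the fact that our hypothesis holds only with probability $1-\theta_0$ over the internal coins of $\A$ with the form of hypothesis that \cite{FeldmanV:2019} expects.

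For the first point I would use only Lipschitzness and boundedness. For neighboring $S\simeq S'$ and any point $z$, since $f(\cdot,z)\in\F_{\X}^0(L)$ we have $|f(\A(S),z)-f(\A(S'),z)|\le L\,\delta_\A(S,S')$ for every realization of the coins, while $\X\subseteq\B(0,R)$ forces the range of $f(\cdot,z)$ on $\X$ into an interval of length $2LR$, so the loss difference is at most $2LR$ unconditionally. Hence on the event $\{\delta_\A(S,S')<\gamma\}$, which by assumption has probability $\ge 1-\theta_0$, the loss changes by at most $L\gamma$; shifting $f$ by a constant (harmless for $\generr$) we may take the loss to lie in $[0,2LR]$. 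Thus the parameters to feed into \cite{FeldmanV:2019} are loss-stability $L\gamma$ and range $M=2LR$.

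The second point is where I expect the only real work. The hypothesis of \cite[Theorem 1.1]{FeldmanV:2019} is about loss stability for every neighboring pair (for randomized algorithms, after taking expectation over the coins), whereas ours may fail with probability up to $\theta_0$. I would handle this by conditioning on a good event: draw $\bS\sim\D^n$ together with an independent ghost sample, observe that the generalization argument touches $\A$ only through datasets $\bS^{(i)}$ obtained by replacing a single coordinate of $\bS$, and note that because our hypothesis holds for \emph{every fixed} neighboring pair it holds for each random pair $(\bS,\bS^{(i)})$ with probability $\ge 1-\theta_0$ (condition on $\bS$ and the replaced coordinate, then apply the hypothesis over the coins of $\A$). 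Conditioned on the encountered replacement being good, $\A$ is genuinely $L\gamma$-uniformly stable for a loss bounded by $2LR$, so \cite[Theorem 1.1]{FeldmanV:2019} gives $|\generr(\A)|\le c'(L\gamma\log(n)\log(n/\theta)+LR\sqrt{\log(1/\theta)/n})$ with probability $\ge 1-\theta$ conditionally. Removing the conditioning adds at most $\theta_0$ to the failure probability, yielding the claimed $\theta+\theta_0$; folding $c'$ and the constant from the range into a single $c$, and using that the concentration in \cite{FeldmanV:2019} is two-sided (so it controls both $F_\D(\A(\bS))-F_\bS(\A(\bS))$ and its negative), finishes the argument.

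The main obstacle is thus verifying that this conditioning slots into the proof of \cite[Theorem 1.1]{FeldmanV:2019} without disturbing its moment/concentration machinery and without costing more than a single additive $\theta_0$ rather than a union bound over the $n$ coordinates; I expect this to be ``straightforward'' precisely because the argument there inspects one replacement at a time. If the cleanest route instead requires it, one can fall back on $\EE_\A|f(\A(S),z)-f(\A(S'),z)|\le L\gamma+2LR\theta_0$ and invoke the in-expectation form of the theorem, at the cost of a mild smallness assumption on $\theta_0$ or a slightly weaker bound.
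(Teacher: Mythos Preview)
The paper does not supply a proof here; it simply declares that the result ``follows in a straightforward fashion from \cite[Theorem 1.1]{FeldmanV:2019}'' and thereafter uses it as a black box. Your plan --- convert the \UHS hypothesis to loss stability via the $L$-Lipschitz property and the $2LR$ range bound, then invoke \cite{FeldmanV:2019} --- is exactly the intended route, and that reduction (your first two paragraphs) is correct.

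Where you go further than the paper is in trying to account precisely for the $\theta_0$ failure probability, and here your main conditioning argument has a gap as written. The machinery in \cite[Theorem 1.1]{FeldmanV:2019} does not inspect a single ``encountered replacement''; it controls moments of a sum over all $n$ coordinates of terms of the form $f(\A(\bS),\cdot)-f(\A(\bS^{(i)}),\cdot)$, so conditioning on every relevant replacement being $\gamma$-good naively costs $n\theta_0$ rather than $\theta_0$. Your fallback via $\EE_\A|f(\A(S),z)-f(\A(S'),z)|\le L\gamma+2LR\theta_0$ does plug cleanly into the expectation-stability hypothesis of \cite{FeldmanV:2019}, but, as you already note, it yields $L\gamma+2LR\theta_0$ in place of $L\gamma$ with failure probability $\theta$, not literally the stated $\theta+\theta_0$. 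The paper does not resolve this subtlety either; in every invocation of the theorem (Theorems \ref{thm:gen_bds_replSGD}, \ref{thm:pop_risk_Ansgd}, \ref{thm:gen_repl_SGD}, \ref{thm:gen_perm_SGD}) one has $\theta_0\in\{0,e^{-n/2}\}$, so either the $n\theta_0$ union bound or the $2LR\theta_0\log n\log(n/\theta)$ slack is negligible and the discrepancy is immaterial. Your proposal is therefore at least as detailed as what the paper offers, and the obstacle you flag is genuine but harmless for the applications at hand.
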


\iffull
\subsection{Expectation guarantees on UAS}
\else
\paragraph{Expectation guarantees on UAS:} \fi Our results also include upper and lower bounds on $\sup\limits_{S\simeq S'}\ex{\A}{\delta_{\A}(S, S')}$; that is the supremum of the expected value of the UAS random variable, where the supremum is taken over all pairs of neighboring datasets.
\iffull
In Section~\ref{sec:w-replace-sgd}, we provide an upper bound on this quantity for the sampling-with-replacement stochastic gradient descent. The upper bounds on the other two variants of the gradient descent method hold in the strongest sense (they hold with probability $1$).  Moreover, in Appendix~\ref{app:T_less_n}, we give slightly tighter expectation guarantees on UAS for both sampling-with-replacement SGD and fixed-permutation SGD with a uniformly random permutation.  %that exhibit tighter upper bounds on UAS when the number of iterations in these algorithms is less than $n$.
%{\color{red}Mention slight improvement in expectation for random permutation?} {\color{magenta} How about this?}

In Section~\ref{sec:LowerBounds}, we give lower bounds on this quantity for the two variants of the stochastic subgradient method, together with a deterministic lower bound for the full-batch variant.

%Again, for any $L$-Lipschitz function $f(\cdot, z)$, it is easy to see that if $\sup\limits_{S\simeq S'}\ex{\A}{\delta_{\A}(S, S')}\leq \gamma$, then $\sup\limits_{S\simeq S'}\ex{\A}{f(\A(S), z)-f(\A(S'), z)}\leq L \gamma$. That is, an upper bound on the expected UAS immediately gives an upper bound of on the expectation version of uniform stability.

\fi 
\iffull
\section{Upper Bounds on \Stabname}
\else
\section{Tight Bounds on \Stabname}
\fi
\label{sec:UB}

\iffull
\else
In this section we establish sharp bounds on the \Stabname\,\! for SGD with nonsmooth convex losses. We start our analysis by a key lemma, which can be used to bound the UAS for generic versions of SGD. Next we present one of our main results that establishes the first sharp bound on the UAS of nonsmooth SGD. We only provide details for the sampling-with-replacement SGD method, but full-batch GD and fixed-permutation SGD are analyzed similarly in the supplementary material.

% In this section we establish sharp bounds on the \Stabname\,\! for SGD with nonsmooth convex losses. We start our analysis by a key lemma, which can be used to bound the UAS for generic versions of SGD. Next we present our Main Theorem, which establishes the first upper and lower bounds for nonsmooth SGD, which are also tight up to constant factors. For space considerations, we only consider the sampled with replacement SGD method, but full batch and fixed permutation are similarly analyzed in the full version of the paper.
\fi

\subsection{The Basic Lemma}

%In the following, we state a key lemma that will be central in proving our upper bounds on UAS for the three variants we consider for gradient descent-type algorithms.

We begin by stating a key lemma that encompasses the UAS bound analysis of multiple variants of (S)GD. In particular, all of our UAS upper bounds are obtained by almost a direct application of this lemma.
In the lemma we consider two gradient descent trajectories associated to different sequences of objective functions. The degree of concordance of the two sequences, quantified by the distance between the subgradients at the current iterate, controls the deviation between the trajectories. We note that this distance condition is satisfied for all (S)GD variants we study in this work.
%The degree of similarity between these objectives is given by an ``approximate monotonicity'' condition (see eqn.~\eqref{eqn:approx_monot} below): this is quite natural, considering that if the two functions coincide, the monotonicity of the subgradient leads to the LHS of \eqref{eqn:approx_monot} being nonnegative. Finally, we conclude that under this condition the deviation of the trajectories is bounded.

\begin{lem}\label{lem:main_lem}
Let $(x^t)_{t\in[T]}$ and $(y^t)_{t\in[T]}$, with $x^1=y^1$, be online gradient descent trajectories for convex $L$-Lipschitz objectives $(f_t)_{t\in[T-1]}$ and $(f_t^{\prime})_{t\in[T-1]}$, respectively; i.e.,
\begin{eqnarray*}
x^{t+1} &=& \proj_{\X}[x^t-\eta_t \nabla f_t(x^t)]\\
y^{t+1} &=& \proj_{\X}[y^t-\eta_t \nabla f_t^{\prime}(y^t)],
\end{eqnarray*}
for all $t\in[T-1]$. Suppose for every $t\in[T-1]$,
$\|\nabla f_t(x^t)-\nabla f_t^{\prime}(x^t)\|\leq a_t$, %the vectors $(g^t)_{t\in[T]}$ and $(h^t)_{t\in[T]}$ lie in $\B(0, L)$ and satisfy
%\begin{equation} \label{eqn:approx_monot}
% \langle \nabla f_t(x^t)-\nabla f_t^{\prime}(y^t),x^t-y^t\rangle \geq -a_t\|x^t-y^t\| \qquad(\forall t\in[T-1]),
%\end{equation}
for scalars $0\leq a_t\leq 2L$. Then, if  $t_0=\inf\{t: f_t\neq f_t^{\prime}\},$
$$ \|x^T-y^T\| \leq 2L\sqrt{\sum_{t=t_0}^{T-1}\eta_t^2}+2\sum_{t=t_0+1}^{T-1}\eta_t a_t.$$
\end{lem}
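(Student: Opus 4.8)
The plan is to track the evolution of the squared distance $\|x^{t+1}-y^{t+1}\|^2$ and derive a one-step recurrence. Using nonexpansiveness of the projection, we have
\[
\|x^{t+1}-y^{t+1}\|^2 \le \|x^t-y^t - \eta_t(\nabla f_t(x^t)-\nabla f_t'(y^t))\|^2.
\]
Expanding, this equals $\|x^t-y^t\|^2 - 2\eta_t\langle x^t-y^t, \nabla f_t(x^t)-\nabla f_t'(y^t)\rangle + \eta_t^2\|\nabla f_t(x^t)-\nabla f_t'(y^t)\|^2$. The last term is at most $(2L\eta_t)^2$ since both subgradients have norm at most $L$. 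The key idea is to split the inner-product term by inserting $\nabla f_t(y^t)$: write $\nabla f_t(x^t)-\nabla f_t'(y^t) = (\nabla f_t(x^t)-\nabla f_t(y^t)) + (\nabla f_t(y^t)-\nabla f_t'(y^t))$. For the first piece, monotonicity of the subgradient of the \emph{same} convex function $f_t$ gives $\langle x^t-y^t, \nabla f_t(x^t)-\nabla f_t(y^t)\rangle \ge 0$, so it only helps. For the second piece, when $f_t=f_t'$ it vanishes; in general we bound $|\langle x^t-y^t,\nabla f_t(y^t)-\nabla f_t'(y^t)\rangle| \le \|x^t-y^t\|\cdot\|\nabla f_t(y^t)-\nabla f_t'(y^t)\|$. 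Here one must be slightly careful, since the hypothesis bounds $\|\nabla f_t(x^t)-\nabla f_t'(x^t)\|\le a_t$ at the point $x^t$, not at $y^t$; I would note that for $L$-Lipschitz losses the generic bound $\|\nabla f_t(y^t)-\nabla f_t'(y^t)\|\le 2L$ always holds, and moreover $a_t\le 2L$, so we may in any case use the weaker uniform bound $2L$ — but to recover the stated coefficient we should instead use the hypothesis bound $a_t$ at the shared iterate in the term where the two trajectories have just diverged. (In the applications $f_t$ and $f_t'$ differ on at most one coordinate, and the subgradient map there is essentially translation of the drift, so $a_t$ bounds the difference at every point; I would either invoke that or simply state the bound with $2L$ in place of $a_t$ where needed. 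I will follow the cleaner route: use $a_t$ as given, understanding $\|\nabla f_t(\cdot)-\nabla f_t'(\cdot)\|\le a_t$ uniformly, which is the intended reading.)

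Combining, for $t\ge t_0$ we obtain the recurrence
\[
\|x^{t+1}-y^{t+1}\|^2 \le \|x^t-y^t\|^2 + 2\eta_t a_t\|x^t-y^t\| + 4L^2\eta_t^2,
\]
while for $t<t_0$ we have $f_t=f_t'$ and the same argument gives $\|x^{t+1}-y^{t+1}\|\le\|x^t-y^t\|$, hence $x^{t_0}=y^{t_0}$ since $x^1=y^1$. Let $D_t=\|x^t-y^t\|$. Starting from $D_{t_0}=0$, the first step $t=t_0$ contributes $D_{t_0+1}^2\le 4L^2\eta_{t_0}^2$, i.e. $D_{t_0+1}\le 2L\eta_{t_0}$, with no linear term. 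For $t>t_0$ the recurrence $D_{t+1}^2\le D_t^2 + 2\eta_t a_t D_t + 4L^2\eta_t^2$ needs to be unrolled.

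To solve the recurrence I would bound $D_T$ by comparing with the ansatz $D_t \le A_t := 2L\sqrt{\sum_{s=t_0}^{t-1}\eta_s^2} + 2\sum_{s=t_0+1}^{t-1}\eta_s a_s$ and proving $D_t\le A_t$ by induction on $t$. Base case $t=t_0+1$ is the computation above. For the inductive step, assuming $D_t\le A_t$, it suffices to show $A_t^2 + 2\eta_t a_t A_t + 4L^2\eta_t^2 \le A_{t+1}^2$. Since $A_{t+1} = A_t + 2\eta_t a_t + (\text{increment of the } 2L\sqrt{\cdot} \text{ term})$, and the square-root term satisfies $\big(2L\sqrt{\sum_{s=t_0}^{t}\eta_s^2}\big)^2 - \big(2L\sqrt{\sum_{s=t_0}^{t-1}\eta_s^2}\big)^2 = 4L^2\eta_t^2$ exactly, expanding $A_{t+1}^2=(A_t + 2\eta_t a_t)^2 + 4L^2\eta_t^2 + (\text{cross terms}\ge 0)$ gives $A_{t+1}^2 \ge A_t^2 + 4\eta_t a_t A_t + 4\eta_t^2 a_t^2 + 4L^2\eta_t^2 \ge A_t^2 + 2\eta_t a_t A_t + 4L^2\eta_t^2$, which is exactly what we need. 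Taking $t=T$ yields the claim. I expect the main obstacle to be the bookkeeping around the point at which the two subgradients are evaluated — making sure the $a_t$ bound is applied legitimately (at a common point, or via the uniform $2L$ fallback) — rather than the recurrence itself, which telescopes cleanly because the quadratic-in-$\sqrt{\cdot}$ ansatz is tailored to absorb the $4L^2\eta_t^2$ term exactly.
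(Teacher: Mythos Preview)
Your overall strategy --- derive the one-step recurrence $\delta_{t+1}^2 \le \delta_t^2 + 2\eta_t a_t \delta_t + 4L^2\eta_t^2$ via nonexpansiveness of the projection plus monotonicity of the subgradient, then close it by induction against the ansatz $A_t = 2L\sqrt{\sum_{s=t_0}^{t-1}\eta_s^2} + 2\sum_{s=t_0+1}^{t-1}\eta_s a_s$ --- is exactly the paper's approach. Your induction (verifying $A_{t+1}^2 \ge A_t^2 + 2\eta_t a_t A_t + 4L^2\eta_t^2$ directly) is in fact a bit tidier than the paper's two-case argument, which splits on whether $\delta_{t+1}$ exceeds $\max_{s\le t}\delta_s$ and, in the larger case, completes the square; both work.

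There is, however, a real gap at exactly the point you flagged. You split the inner product by inserting $\nabla f_t(y^t)$, which forces you to bound $\|\nabla f_t(y^t)-\nabla f_t'(y^t)\|$. The hypothesis only gives $a_t$ at $x^t$, and ``understanding the bound as uniform'' is not what the lemma says; with your split the proof as written does not go through. The fix is simply to insert the \emph{other} intermediate point: write
\[
\nabla f_t(x^t)-\nabla f_t'(y^t) = \big(\nabla f_t(x^t)-\nabla f_t'(x^t)\big) + \big(\nabla f_t'(x^t)-\nabla f_t'(y^t)\big).
\]
Now Cauchy--Schwarz on the first piece uses $\|\nabla f_t(x^t)-\nabla f_t'(x^t)\|\le a_t$ exactly as stated, and monotonicity on the second piece uses the convexity of $f_t'$ (at $x^t$ and $y^t$). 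This is precisely what the paper does, and it removes the need for any uniform-in-$x$ assumption or the $2L$ fallback. With this one-line change your argument is complete.
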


\begin{proof}
Let %$g^t=\nabla f_t(x^t)$, $h^t=\nabla f_t^{\prime}(y^t)$, and
$\delta_t=\|x^t-y^t\|$.
By definition of $t_0$ it is clear that $\delta_1=\ldots=\delta_{t_0}=0$. For $t=t_0+1$, we have that $\delta_{t_0+1}=\|\eta_{t_0}(\nabla f_{t_0}(x^{t_0})-\nabla f_{t_0}^{\prime}(y^{t_0})\|\leq 2L\eta_{t_0}$.

Now, we derive a recurrence for $(\delta_t)_{t\in[T]}$:
\begin{align*}
&\delta_{t+1}^2
= \|\proj_{\X}[x^t-\eta_t \nabla f_t(x^t)]-\proj_{\X}[y^t-\eta_t \nabla f_t^{\prime}(y^t)]\|^2
\,\,\leq\,\, \|x^t-y^t-\eta_t( \nabla f_t(x^t)- \nabla f_t^{\prime}(y^t))\|^2 \notag\\
& = \delta_t^2 +\eta_t^2\|\nabla f_t(x^t)-\nabla f_t^{\prime}(y^t)\|^2
-2\eta_t\langle \nabla f_t(x^t)-\nabla f_t^{\prime}(y^t),x^t-y^t\rangle\\
& \leq \delta_t^2 +\eta_t^2\|\nabla f_t(x^t)-\nabla f_t^{\prime}(y^t)\|^2
-2\eta_t\langle\nabla f_t(x^t)-\nabla f_t^{\prime}(x^t),x^t-y^t\rangle
-2\eta_t\langle \nabla f_t^{\prime}(x^t)-\nabla f_t^{\prime}(y^t),x^t-y^t\rangle \\
& \leq \delta_t^2 +\eta_t^2\|\nabla f_t(x^t)-\nabla f_t^{\prime}(y^t)\|^2
+2\eta_t\|\nabla f_t(x^t)-\nabla f_t^{\prime}(x^t)\|\delta_t
-2\eta_t\langle \nabla f_t^{\prime}(x^t)-\nabla f_t^{\prime}(y^t),x^t-y^t\rangle \\
& \leq \delta_t^2+4L^2\eta_t^2+2\eta_ta_t\delta_t,
\end{align*}
where at the last step we use the monotonicity of the subgradient. Note that $$\delta_{t_0+1}\leq \eta_{t_0} \|\nabla f_{t_0}(x^{t_0})-\nabla f_{t_0}^{\prime}(x^{t_0})\| \leq 2L\eta_{t_0}.$$
Hence,
\begin{eqnarray}
\delta_t^2 &\leq&  \textstyle \delta_{t_0+1}^2 + 4L^2\sum_{s=t_0+1}^{t-1}\eta_s^2+2\sum_{s=t_0+1}^{t-1}\eta_sa_s\delta_s
\notag\\
&\leq& \textstyle  4L^2\sum_{s=t_0}^{t-1}\eta_s^2+2\sum_{s=t_0+1}^{t-1}\eta_sa_s\delta_s.
\label{eqn:rec_stab}
\end{eqnarray}
Now we prove the following bound by induction (notice this claim proves the result):
$$ \textstyle \delta_t \leq 2L\sqrt{\sum_{s=t_0}^{t-1}\eta_s^2}+2\sum_{s=t_0+1}^{t-1}\eta_sa_s\delta_s \qquad(\forall t\in[T]). $$
Indeed, the claim is clearly true for $t=t_0$. For the inductive step, we assume it holds for some $t\in[T-1]$. To prove the result we consider two cases: first, when $\delta_{t+1}\leq \max_{s\in[t]}\delta_s$, by induction hypothesis we have
$$ \textstyle \delta_{t+1}\leq \delta_t\leq 2L\sqrt{\sum_{s=t_0}^{t-1}\eta_s^2}+2\sum_{s=t_0+1}^{t-1}\eta_sa_s
\leq 2L\sqrt{\sum_{s=t_0}^{t}\eta_s^2}+2\sum_{s=t_0+1}^{t}\eta_sa_s. $$
In the other case, $\delta_{t+1}> \max_{s\in[t]}\delta_s$, we use \eqref{eqn:rec_stab}
$$
\textstyle \delta_{t+1}^2 \,\leq\,  4L^2\sum_{s=t_0}^{t}\eta_t^2 + 2\sum_{s=t_0+1}^t\eta_sa_s\delta_s
\,\,\leq \,\, 4L^2\sum_{s=t_0}^{t}\eta_t^2 + 2\delta_{t+1}\sum_{s=t_0+1}^t\eta_sa_s,
$$
which is equivalent to
$$ \textstyle \Big(\delta_{t+1}-\sum_{s=t_0+1}^ta_s\eta_s\Big)^2 \,\,\leq\,\,  4L^2\sum_{s=t_0}^{t}\eta_t^2 +\Big(\sum_{s=t_0+1}^t\eta_s a_s\Big)^2.
$$
Taking square root at this inequality, and using the subadditivity of the square root, we obtain the inductive step, and therefore the result.
%Now, taking square root at the inequality, we get
%\begin{eqnarray*}
%\delta_{t+1}-\sum_{s=t_0+1}^ta_s\eta_s
%&\leq& \sqrt{4L\sum_{s=t_0}^{t}\eta_t^2 %+\Big(\sum_{s=t_0+1}^t\eta_s a_s\Big)^2}
%\,\leq\, 2L\sqrt{\sum_{s=t_0}^{t}\eta_s^2} %+\sum_{s=t_0+1}^t\eta_s a_s.
%\end{eqnarray*}
%This proves the inductive step, and therefore the result.
\end{proof}

%We start by proving upper bounds on the \stabname of some first-order methods. %It is important to notice that we will prove upper bounds on the \stabname stability of the iterates of the methods; by the triangle inequality, analog bounds will hold for any averaging of the iterates.

\iffull
\subsection{Upper Bounds for the Full Batch GD}

%Our first analysis is for the full batch GD method. %\ifthenelse{\boolean{fullver}}{}{The proof of this result is deferred to Appendix \ref{app:pf_UB_GD}.}

\begin{algorithm}[h!]
	\caption{$\A_{\sf GD}$: Full-batch Gradient Descent}
	\begin{algorithmic}[1]
		\REQUIRE Dataset: $S=(z_1, \ldots, z_n)\in \Z^n$, %convex set
		%$\X\subseteq \RR^d$,
		%$f(\cdot,z)\in{\cal F}_{\X}^0(L)$,
		\# iterations $T$, ~step sizes $\{\eta_t: t\in [T]\}$
		\STATE Choose arbitrary initial point $x^1 \in \X$
		\FOR{$t=1$ to $T-1$\,}
		\STATE $x^{t+1} := \proj_{\X}\left(x^{t}-\eta_t\cdot \nabla F_S(x^{t})\right),$
		\ENDFOR
		\RETURN $\barx^T=\frac{1}{\sum_{t\in[T]}\eta_t}\sum_{t\in[T]}\eta_t x^t$
	\end{algorithmic}
	\label{alg:batchGD}
\end{algorithm}

As a direct corollary of Lemma~\ref{lem:main_lem}, we derive the following upper bound on UAS for the batch gradient descent algorithm.

\begin{thm} \label{thm:UB_GD}
Let $\X\subseteq\B(0,R)$ and $\F=\F_{\X}^0(L)$. The full-batch gradient descent (Algorithm \ref{alg:batchGD}) has \stabname~
$$\sup\limits_{S\simeq S^{\prime}}\delta_{\A_{\sf GD}}(S,S^{\prime})\leq \min\left\{2R,~
4L\,\Big(\frac1n\sum_{t=1}^{T-1}\eta_t+\sqrt{\sum_{t=1}^{T-1}\eta_t^2}\Big)\right\}.$$
\end{thm}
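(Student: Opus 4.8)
The plan is to obtain Theorem~\ref{thm:UB_GD} as an essentially immediate consequence of Lemma~\ref{lem:main_lem}. Fix a pair of neighboring datasets $S\simeq S'$ differing only in the $i$-th coordinate, $z_i$ versus $z_i'$ (if $z_i=z_i'$ the two runs coincide and $\delta_{\A_{\sf GD}}(S,S')=0$, so assume $z_i\neq z_i'$). Let $(x^t)_{t\in[T]}$ and $(y^t)_{t\in[T]}$ be the iterates produced by Algorithm~\ref{alg:batchGD} on $S$ and $S'$; since the algorithm is deterministic and uses the same initial point, $x^1=y^1$, and the updates are exactly those in Lemma~\ref{lem:main_lem} with $f_t=F_S$ and $f_t'=F_{S'}$ for all $t\in[T-1]$. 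Each of $F_S,F_{S'}$ is convex and $L$-Lipschitz, being an average of functions in $\F_\X^0(L)$, so the hypotheses on the objectives hold.

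Next I would check the concordance condition of the lemma. For every $t$,
\[
\nabla F_S(x^t)-\nabla F_{S'}(x^t)=\tfrac1n\big(\nabla f(x^t,z_i)-\nabla f(x^t,z_i')\big),
\]
and since each subgradient has norm at most $L$, this yields $\|\nabla f_t(x^t)-\nabla f_t'(x^t)\|\le 2L/n=:a_t$, which lies in the admissible range $[0,2L]$. As $f_1=F_S\neq F_{S'}=f_1'$ we have $t_0=1$, so applying Lemma~\ref{lem:main_lem} with final time $t$ in place of $T$, for each $t\in[T]$, gives
\[
\|x^t-y^t\|\ \le\ 2L\sqrt{\textstyle\sum_{s=1}^{t-1}\eta_s^2}+\frac{4L}{n}\sum_{s=2}^{t-1}\eta_s\ \le\ 2L\sqrt{\textstyle\sum_{s=1}^{T-1}\eta_s^2}+\frac{4L}{n}\sum_{s=1}^{T-1}\eta_s ,
\]
the last step using that the partial sums are nondecreasing in $t$.

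It remains to pass from the iterates to the returned averages $\barx^T=\frac{1}{\sum_t\eta_t}\sum_t\eta_t x^t$ and $\overline{y}^T=\frac{1}{\sum_t\eta_t}\sum_t\eta_t y^t$. Since $\barx^T-\overline{y}^T=\sum_t\lambda_t(x^t-y^t)$ with $\lambda_t=\eta_t/\sum_s\eta_s\ge 0$ and $\sum_t\lambda_t=1$, the triangle inequality gives $\|\barx^T-\overline{y}^T\|\le\max_{t\in[T]}\|x^t-y^t\|$, which is bounded by the quantity displayed above. Using $2L\sqrt{\sum_{t=1}^{T-1}\eta_t^2}\le 4L\sqrt{\sum_{t=1}^{T-1}\eta_t^2}$ and $\frac{4L}{n}\sum_{t=1}^{T-1}\eta_t=4L\cdot\frac1n\sum_{t=1}^{T-1}\eta_t$, this is at most $4L\big(\frac1n\sum_{t=1}^{T-1}\eta_t+\sqrt{\sum_{t=1}^{T-1}\eta_t^2}\big)$. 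Moreover $\barx^T,\overline{y}^T\in\X\subseteq\B(0,R)$, so $\|\barx^T-\overline{y}^T\|\le 2R$; taking the supremum over $S\simeq S'$ yields the claimed $\min\{2R,\cdot\}$ bound.

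This argument has no genuine obstacle — all the work is in Lemma~\ref{lem:main_lem}. The only points needing care are (i) that the algorithm returns a weighted average of iterates, so one must apply the stability bound at \emph{every} step $t$ and then invoke monotonicity of $t\mapsto\sum_{s<t}\eta_s$ and $t\mapsto\sqrt{\sum_{s<t}\eta_s^2}$, rather than just using the bound at $t=T$; and (ii) cleanly identifying the per-step gradient discrepancy $a_t=2L/n$ and verifying it lies in $[0,2L]$ so that the lemma applies.
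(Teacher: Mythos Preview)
Your proof is correct and follows essentially the same approach as the paper: apply Lemma~\ref{lem:main_lem} with $f_t=F_S$, $f_t'=F_{S'}$, and $a_t=2L/n$, obtain the bound on $\|x^t-y^t\|$ for every $t$, then pass to the weighted average by the triangle inequality and add the trivial $2R$ diameter bound. If anything, your version is slightly more explicit about $t_0=1$ and the convex-combination argument for the averaged output.
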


% \begin{thm} \label{thm:UB_GD}
% Let $\X\subseteq\B(0,R)$ and $\F=\F_{\X}^0(L)$. The full-batch gradient descent (Algorithm \ref{alg:batchGD}) has \stabname $\delta_{\A_{\sf GD}}\leq \min\left\{2R,
% 4L\Big(\frac1n\sum_{t=1}^{T-1}\eta_t+\sqrt{\sum_{t=1}^{T-1}\eta_t^2}\Big)\right\}$.
% \end{thm}

%\ifthenelse{\boolean{fullver}}{

\begin{proof}
The bound of $2R$ is obtained directly from the diameter bound on $\X$. Therefore, we focus exclusively on the second term. Let $S\simeq S^{\prime}$ be arbitrary neighboring datasets, $x^1=y^1$, and consider the trajectories $(x^t)_t,(y^t)_t$ associated with the batch GD method on datasets $S$ and $S^{\prime}$, respectively. We use Lemma~\ref{lem:main_lem} with $f_t=F_{S}$ and $f_t^{\prime}=F_{S^{\prime}}$, for all $t\in[T-1]$. Notice that 
$$ \sup_{x\in{\cal X}}\|\nabla F_S(x)- \nabla F_{S^{\prime}}(x)\|\leq 2L/n,$$
since $S\simeq S^{\prime}$; in particular, $\|\nabla f_t(x^t)-\nabla f_t^{\prime}(x^t)\|\leq a_t$, with $a_t=2L/n$. 
%$g^t=\nabla F_{S}(x^t)$ and $h^t=\nabla F_{S^{\prime}}(y^t)$. Thus,
%\begin{eqnarray*}
%\langle \nabla F_S(x^t)-\nabla F_{S^{\prime}}(y^t)-h^t,x^t-y^t \rangle
%&=&\langle \nabla F_{S}(x^t)-\nabla F_{S}(y^t),x^t-y^t \rangle +\langle \nabla F_{S}(y^t)-\nabla F_{S^{\prime}}(y^t),x^t-y^t \rangle\\
%&\geq& -\|\nabla F_{S}(y^t)-\nabla F_{S^{\prime}}(y^t)\| \delta_t\\
%&\geq& -\frac{2L}{n}\delta_t.
%\end{eqnarray*}
%where in the first inequality we used the monotonicity of the subgradient, and in the second one that $S\simeq S^{\prime}$, and that $F_{S},F_{S^{\prime}}$ are $L$-Lipschitz.
We conclude by Lemma~\ref{lem:main_lem} that for all $t\in[T]$
$$ \|x^t-y^t\| \leq 2L\sqrt{ \sum_{s=1}^{t-1}\eta_s^2 } +\frac{4L}{n}\sum_{s=2}^{t-1} \eta_s. $$
Hence, the stability bound holds for all the iterates, and thus for $\overline{x}^T$ by the triangle inequality.
\end{proof}

\fi

\iffull
\subsection{Upper Bounds for SGD}

Next, we state and prove upper bounds on UAS for two variants of stochastic gradient descent: sampling-with-replacement SGD (Section~\ref{sec:w-replace-sgd}) and fixed-permutation SGD (Section~\ref{sec:perm-sgd}). Here, we give strong upper bounds that hold with high probability (for sampling-with-replacement SGD) and with probability 1 (for fixed-permutation SGD). In Appendix~\ref{app:T_less_n}, we derive tighter upper bounds for these two variants of SGD in the case where the number of iterations $T < $ the number of samples in the data set $n$; however, the bounds derived in this case hold only in expectation.

\subsubsection{Sampling-with-replacement SGD} \label{sec:w-replace-sgd}

Next, we study the uniform argument stability of the sampling-with-replacement stochastic gradient descent (Algorithm~\ref{alg:repl_SGD}). This algorithm has the benefit that each iteration is
extremely cheap compared to Algorithm \ref{alg:batchGD}. %For comparison, each step of SGD requires the computation of a single subgradient from the data, whereas GD requires computing all of the $n$ subgradients at each iteration. Despite this drastic reduction in computational complexity per step, w
Despite these savings, we will show that same bound on \UHS holds with high probability.

\else

Theorem~\ref{thm:sharp_repl_SGD} below summarizes our results for the UAS of sampling-with-replacement SGD on nonsmooth losses. Given our key lemma, we prove both in-expectation and high-probability upper bounds on the UAS. Our theorem below also establishes the tightness of our upper bounds by showing a matching lower bound. Detailed statements of these results and their full proofs are deferred to the supplementary material (See Theorem~3.3 for the upper bound and Theorem~4.3 for the lower bound in the supplementary material.) Similar analysis of full-batch GD and fixed-permutation SGD are deferred to Sections~3 and 4 of the supplementary material.% due to space limitations.

% With our key lemma, we are ready to obtain upper bound on UAS for nonsmooth SGD. The result below summarizes our in-expectation and high-probability UAS upper bounds, together with a matching lower bound for the in-expectation guarantee (which naturally also works for high-probability). We defer the full proof of these results to Theorems ?? and ?? in the Supplementary Material. Similar analysis of full-batch GD and fixed permutation SGD are deferred to Sections ?? and ?? of the Supplementary Material for space considerations.
\fi

\begin{algorithm}[h!]
	\caption{$\A_{\sf rSGD}$: Sampling with replacement SGD}
	\begin{algorithmic}[1]
		\REQUIRE Dataset: $S=(z_1, \ldots, z_n)\in \Z^n$, %convex set $\X\subseteq \RR^d$, $L$-Lipschitz, convex loss function \mbox{$f:\X\times\Z\rightarrow\RR$,} ~
		\# iterations $T$, ~stepsizes $\{\eta_t: t\in [T]\}$%,~ Batch size $m\leq n$.
		\STATE Choose arbitrary initial point $x^1 \in \X$
		\FOR{$t=1$ to $T-1$\,}
		\STATE Sample $\bI_t\sim\mbox{Unif}([n])$
		%a batch $B_t=\{z_{i_{(t, 1)}}, \ldots, z_{i_{(t, m)}}\}\leftarrow S$ uniformly with replacement.\label{step:sampling}
		\STATE $x^{t+1} := \proj_{\X}\left(x^{t}-\eta_t\cdot \nabla f(x^t,z_{\bI_t})\right)$
		\ENDFOR
		\RETURN $\barx^T=\frac{1}{\sum_{t\in[T]}\eta_t}\sum_{t\in[T]}\eta_t x^t$ % \rnote{this sum should be up to $T$. See the update step and the for loop.} \cnote{Corrected}
	\end{algorithmic}
\label{alg:repl_SGD}
\end{algorithm}

\iffull
We now state and prove our upper bound for sampling-with-replacement SGD.
\begin{thm} \label{thm:UB_repl_SGD}
Let $\X\subseteq\B(0,R)$ and $\F=\F_{\X}^0(L)$. The uniform argument stability of the sampling-with-replacement SGD (Algorithm \ref{alg:repl_SGD}) satisfies:
$$\sup\limits_{S\simeq S'}\ex{\A_{\sf rSGD}}{\delta_{\A_{\sf rSGD}}(S,S^{\prime})}\leq \min\left(2R,~ 4L\left(\sqrt{\sum_{t=1}^{T-1}\eta_t^2}  +\frac{1}{n}\sum_{t=1}^{T-1}\eta_t\right)\right).$$

\noindent Moreover, if $\eta_{t}=\eta >0 ~\forall t$ then, for any pair $(S, S^{\prime})$ of neighboring datasets, with probability at least $1-\exp\left(-n/2\right)$ (over the algorithm's internal randomness), the UAS random variable is bounded as
$$\delta_{\A_{\sf rSGD}}(S,S^{\prime}) \leq \min\left(2R,~ 4L\left(\eta \,\sqrt{T-1}  +\eta\, \frac{T-1}{n}\right)\right).$$
\end{thm}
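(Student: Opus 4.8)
The plan is to obtain both statements from the Basic Lemma (Lemma~\ref{lem:main_lem}), applied to the two runs of $\A_{\sf rSGD}$ on $S$ and on $S'$ that share all their internal randomness. The $\min\{2R,\cdot\}$ clauses are immediate from $\X\subseteq\B(0,R)$, so I focus on the second term throughout. Fix neighboring datasets $S\simeq S'$ that agree on every coordinate except the $i_0$-th, and couple the two executions of Algorithm~\ref{alg:repl_SGD} by feeding them the \emph{same} i.i.d.\ index sequence $\bI_1,\dots,\bI_{T-1}\sim\Unif([n])$. Let $x^t$ and $y^t$ denote the $t$-th iterates on $S$ and $S'$; these are exactly the trajectories of Lemma~\ref{lem:main_lem} with $f_t=f(\cdot,z_{\bI_t})$ and $f_t'=f(\cdot,z'_{\bI_t})$, and $x^1=y^1$. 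Whenever $\bI_t\neq i_0$ we have $f_t=f_t'$, while when $\bI_t=i_0$ the two subgradients each have norm at most $L$, so their difference has norm at most $2L$. Hence the hypothesis of Lemma~\ref{lem:main_lem} holds with $a_t=2L\,\1[\bI_t=i_0]$ and $t_0=\inf\{t:\bI_t=i_0\}$, the first round in which the differing point is drawn; the lemma yields, on this coupling,
\begin{align*}
\delta_{\A_{\sf rSGD}}(S,S')=\|x^T-y^T\|
&\le 2L\sqrt{\textstyle\sum_{t=t_0}^{T-1}\eta_t^2}+4L\sum_{t=t_0+1}^{T-1}\eta_t\,\1[\bI_t=i_0]\\
&\le 2L\sqrt{\textstyle\sum_{t=1}^{T-1}\eta_t^2}+4L\sum_{t=1}^{T-1}\eta_t\,\1[\bI_t=i_0],
\end{align*}
the last step only enlarging the two (termwise nonnegative) sums.

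For the in-expectation bound I would take expectations over $\bI_1,\dots,\bI_{T-1}$ in the display above: the square-root term is deterministic and $\EE[\1[\bI_t=i_0]]=1/n$, so $\sup_{S\simeq S'}\EE[\delta_{\A_{\sf rSGD}}(S,S')]\le 2L\sqrt{\sum_{t=1}^{T-1}\eta_t^2}+\frac{4L}{n}\sum_{t=1}^{T-1}\eta_t\le 4L\big(\sqrt{\sum_{t=1}^{T-1}\eta_t^2}+\frac1n\sum_{t=1}^{T-1}\eta_t\big)$; together with the diameter bound this is the first displayed inequality of the theorem.

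For the high-probability bound, set $\eta_t\equiv\eta$. A slightly sharper bookkeeping of the recurrence inside the proof of Lemma~\ref{lem:main_lem} is convenient here: counting from $t_0$, each of the (at most) $N:=\sum_{t=1}^{T-1}\1[\bI_t=i_0]$ \emph{disagreement} rounds increases $\|x^t-y^t\|$ by at most $2L\eta$, while every \emph{agreement} round increases $\|x^t-y^t\|^2$ by at most $4L^2\eta^2$ (by monotonicity of the subgradient); a one-line induction then gives $\delta_{\A_{\sf rSGD}}(S,S')\le 2L\eta\sqrt{T-1}+2L\eta\,N$. Since $N\sim\mathrm{Bin}(T-1,1/n)$ has mean $(T-1)/n$, it remains to control its upper tail: a standard Chernoff/Bernstein estimate shows $N\le 2(T-1)/n+\sqrt{T-1}$ outside an event of probability at most $e^{-n/2}$ (for the parameter ranges that matter here, in particular $T=\Theta(n^2)$), and on the complement the previous bound gives $\delta_{\A_{\sf rSGD}}(S,S')\le 4L(\eta\sqrt{T-1}+\eta(T-1)/n)$, which together with the $2R$ clause completes the proof.

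I expect the deterministic core — the reduction to Lemma~\ref{lem:main_lem} and the recurrence refinement — to be routine; the one genuinely delicate point is the last step, namely choosing the concentration inequality and splitting the deviation of $N$ above its mean so that the stated constant ($4L$) survives at confidence $1-e^{-n/2}$. The remaining two variants (full-batch GD and fixed-permutation SGD) should follow the same template, with $t_0$ and the $a_t$ now deterministic.
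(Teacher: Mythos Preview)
Your approach matches the paper's: couple the two runs via the shared index sequence, apply Lemma~\ref{lem:main_lem} with $a_t=2L\,\1[\bI_t=i_0]$, take expectations for the first bound, and apply a Chernoff-type tail bound to $N=\sum_t\1[\bI_t=i_0]$ for the second (the paper does not carry out your separate ``sharper bookkeeping'' refinement but simply uses the lemma's $4L\sum_t\eta_t\1[\bI_t=i_0]$ term directly). One small omission: Algorithm~\ref{alg:repl_SGD} outputs the weighted average $\bar{x}^T$, not $x^T$, so $\delta_{\A_{\sf rSGD}}(S,S')=\|\bar{x}^T-\bar{y}^T\|$; the paper closes this gap by noting that the lemma's bound holds for every iterate $t\in[T]$ and then applying the triangle inequality to the average.
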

\else
\begin{thm}[Sharp bounds on UAS for nonsmooth SGD]\label{thm:sharp_repl_SGD}
Let $\X\subseteq\B(0,R)$ and $\F=\F_{\X}^0(L)$. The uniform argument stability of the sampling-with-replacement SGD (Algorithm \ref{alg:repl_SGD}) satisfies:
\begin{equation} %\label{eqn:ExpUB_permSGD}
\sup\limits_{S\simeq S'}\ex{\A_{\sf rSGD}}{\delta_{\A_{\sf rSGD}}(S,S^{\prime})}\leq \min\left(2R,~ 4L\left(\sqrt{\sum_{t=1}^{T-1}\eta_t^2}  +\frac{1}{n}\sum_{t=1}^{T-1}\eta_t\right)\right).\nonumber
\end{equation}
Moreover, if $\eta_{t}=\eta >0 ~\forall t$ then, for any pair $(S, S^{\prime})$ of neighboring datasets, with probability at least $1-\exp\left(-n/2\right)$ (over the algorithm's internal randomness), the UAS random variable is bounded as
\begin{equation} %\label{eqn:HPUB_permSGD}
\delta_{\A_{\sf rSGD}}(S,S^{\prime}) \leq \min\left(2R,~ 4L\left(\eta \,\sqrt{T-1}  +\eta\, \frac{T-1}{n}\right)\right).\nonumber
\end{equation}
Finally, if $d\geq \min\{T,1/\eta^2\}$ and assuming\footnote{The assumption that $T\geq n$ is not necessary. It is invoked here only to simplify the form of the bound (see Table~\ref{tab:summary} or the supplementary material for the general version of the bounds.)} $T\geq n$, there exist $S\simeq S^{\prime}$ such that Algorithm \ref{alg:repl_SGD} with constant stepsize $\eta_t=\eta>0$ satisfies $\ex{\A_{\sf rSGD}}{\delta_{\A_{\sf rSGD}}(S,S^{\prime})}=\Omega\Big(\min\left(R,~ L
\left(\eta\sqrt T+\frac{\eta T}{n}\right)\right) \Big)$.
%$\ex{\A_{\sf rSGD}}{\delta_{\A_{\sf rSGD}}(S,S^{\prime})}=\Omega\Big( \min\{1,\frac{T}{n}\}\eta\sqrt T+\frac{\eta T}{n} \Big)$.
\end{thm}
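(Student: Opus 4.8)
The plan is to exhibit, for the given $n,T,\eta,d$ (with $T\ge n$ and $d\ge\min\{T,1/\eta^2\}$), a family of $L$-Lipschitz convex functions on $\X=\B(0,R)$ together with neighboring datasets $S\simeq S'$ on which $\A_{\sf rSGD}$, started at the origin, is unstable, the $S$-run being frozen at the origin and the $S'$-run being driven to distance $\Omega(\min\{R,L(\eta\sqrt T+\eta T/n)\})$ from it. (Starting from the origin is the relevant case, since the matching dimension-independent upper bound is initialization-independent.) Two effects are combined in one instance: a \emph{staircase} effect, giving the $\eta\sqrt T$ term, and a linear drift, giving the $\eta T/n$ term. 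Using coordinates $e_0,e_1,\dots,e_T$, set $g(x)=\tfrac{L}{3}\max\{0,x_1,\dots,x_T\}$, which is convex, $\tfrac{L}{3}$-Lipschitz, and satisfies $0\in\partial g(0)$. In $S$ every data point carries $g$; in $S'$ one point is replaced by $f(\cdot,z')=g(\cdot)+\tfrac{L}{3}\langle e_0,\cdot\rangle+\eps\langle u,\cdot\rangle$, where $u$ is supported on coordinates $1,\dots,T$ with $u_j$ strictly decreasing and negative in $j$ and $\eps\|u\|\le\tfrac{L}{3}$, and $\eps>0$ is chosen polynomially small in $\eta/T$ so that the cumulative drift never flips the order of two coordinates that a staircase step has separated. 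Fix the subgradient oracle to output $0$ at the origin for a $g$-point and $\tfrac{L}{3}e_{j^\ast}$ at any point whose unique largest coordinate among $\{0,x_1,\dots,x_T\}$ is $x_{j^\ast}>0$ with $j^\ast\ge 1$ (the trajectories below only query such points, so this is well defined).

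On $S$ the oracle returns $0$ at the origin at every step, so $\A_{\sf rSGD}(S)=0$. On $S'$, let $\tau$ be the first step sampling $z'$; before it the iterate is the origin, and at step $\tau$ it moves to $-\eta(\tfrac{L}{3}e_0+\eps u)$, whose unique largest coordinate among $x_1,\dots,x_T$ is coordinate $1$ (as $-u_j>0$ is decreasing). An induction then shows that every subsequent step --- whether $g$ or $z'$ is sampled --- subtracts $\tfrac{\eta L}{3}e_{j^\ast}$ for the current pending coordinate $j^\ast$ (plus, for a $z'$ step, a negligible multiple of $\tfrac{L}{3}e_0+\eps u$), turning coordinate $j^\ast$ negative and promoting $j^\ast+1$; this is the claimed trajectory in which each step is --- up to the negligible drift --- orthogonal to the span of the preceding ones. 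Hence $\|y^T\|^2\ge\tfrac{\eta^2L^2}{16}(T-\tau-1)$ as long as no projection has fired, and once $\|y^t\|$ reaches $R$ (after $\Theta(R^2/(\eta L)^2)$ steps) the projection onto $\B(0,R)$ keeps the iterate on the radius-$R$ sphere while the staircase structure survives, so $\|y^T\|=\Theta(R)$; in both regimes the number of distinct coordinates used is $O(\min\{T,R^2/(\eta L)^2\})\le d$ by hypothesis. Since $\delta_{\A_{\sf rSGD}}(S,S')=\|y^T\|$ and $\PP[\tau\le T/2]\ge 1-e^{-1/2}$ (as $z'$ is sampled with probability $1/n$ per step and $T\ge n$), on which event $T-\tau-1\ge T/2-1$, this already gives $\EE[\delta_{\A_{\sf rSGD}}(S,S')]=\Omega(\min\{R,L\eta\sqrt T\})$.

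For the $\eta T/n$ term, observe that coordinate $e_0$ is pushed by $-\tfrac{\eta L}{3}e_0$ exactly on the steps sampling $z'$ (and untouched otherwise, since $e_0$ does not enter $g$), so $\|y^T\|\ge\min\{\Theta(R),\tfrac{\eta L}{12}(M+\sqrt{T-\tau-1})\}$, where $M$ is Binomial$(T-1,1/n)$-distributed, $\EE[M]=(T-1)/n\ge 1$. Combining a standard concentration bound for $M$ with the elementary inequality $\min\{R,a+b\}\ge\tfrac12(\min\{R,a\}+\min\{R,b\})$ (applied twice), taking expectations upgrades the previous bound to $\EE[\delta_{\A_{\sf rSGD}}(S,S')]=\Omega(\min\{R,L(\eta\sqrt T+\eta T/n)\})$. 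Robustness to averaging follows because all iterates of the $S'$-run lie in a common orthant (used coordinates negative, forming a growing prefix), so the weighted average does not cancel: a direct computation gives $\|\overline y^T\|=\Omega(\min\{R,L\eta\sqrt T\})$ from the staircase coordinates and $|\overline y^T_0|=\Omega(\min\{R,L\eta T/n\})$, while $\overline x^T=0$ on the $S$-run. (Alternatively one may establish the two terms with separate instances --- the staircase above, and a one-dimensional linear function $f(\cdot,z')=L\langle e_0,\cdot\rangle$ on the changed point with all others zero --- and combine by the same $\min$-inequality.)

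The main obstacle is the invariant underpinning the staircase: throughout the (up to $T$) steps of the $S'$-run the coordinates must split as \emph{pending coordinate} $>0>$ \emph{already-used coordinates}, with the pending one always the least unused index, and this must survive both the order-$\eta L$ staircase updates and the repeatedly-applied order-$\eps$ drift --- this dictates how small $\eps$ must be relative to $\eta$ and $T$, and it requires a separate check that Euclidean projection onto $\B(0,R)$ preserves the invariant and keeps the iterate at distance $\Theta(R)$ from the origin rather than dragging it back toward it. The remaining ingredients --- the geometric tail bound on $\tau$, the Pythagorean estimate of $\|y^T\|$, the Binomial concentration for $M$, and the averaging computation --- are routine.
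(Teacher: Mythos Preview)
Your construction is essentially the paper's: a ``staircase'' function $\max\{0,x_1,\ldots\}$ whose subgradients at the relevant iterates are successive canonical vectors, so that post-$\tau$ steps are (nearly) pairwise orthogonal and the norm grows like $\eta\sqrt{t}$. Your tie-breaking via a small strictly decreasing drift $\eps u$ plays the same role as the paper's offset $\nu$ in $\max\{0,x_1-\nu,\ldots,x_D-\nu\}$ combined with a uniform linear push $\langle r,x\rangle/K$. For the $\eta T/n$ term the paper does \emph{not} build it into the instance; it invokes a generic argument (Observation~\ref{obs:base_lb}): any $\gamma$-uniformly stable first-order method achieving the standard optimization rate must have $\gamma=\Omega(L^2\eta T/n)$, by the $\Omega(LR/\sqrt n)$ lower bound on excess population risk. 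Your device of tracking a dedicated coordinate $e_0$ that is decremented exactly once per $z'$-sample is a legitimate and more self-contained alternative yielding the same term within a single instance.

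There is, however, a genuine gap in your dimension accounting. You assert that once $\|y^t\|$ reaches $R$ and projection activates, the number of distinct coordinates used is $O(R^2/(\eta L)^2)$. That is false for your $g=\tfrac{L}{3}\max\{0,x_1,\ldots,x_T\}$: projection onto $\B(0,R)$ is a uniform scaling, so the invariant ``pending coordinate $>0>$ used coordinates, with later unused coordinates smaller positive'' persists; hence the \emph{next} step again subtracts $\tfrac{\eta L}{3}e_{j^\ast}$, makes $x_{j^\ast}$ negative, and advances $j^\ast\leftarrow j^\ast+1$. The staircase therefore consumes one fresh coordinate per step indefinitely, $T-\tau$ in total, not $O(R^2/(\eta L)^2)$. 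In the regime $T>1/\eta^2$ (normalize $R=L=1$) the hypothesis only grants $d\ge 1/\eta^2$ coordinates, which your construction overruns. The fix --- and precisely what the paper does --- is to cap the max at $D:=\min\{T,\lfloor 1/\eta^2\rfloor\}$ coordinates. After $D$ post-$\tau$ steps all staircase coordinates are negative, the max equals $0$, the subgradient of $g$ is $0$, and the iterate \emph{freezes} (modulo the negligible $e_0$/$\eps u$ drift) at norm $\Theta((\eta L)\sqrt{D})=\Theta(\min\{R,L\eta\sqrt{T}\})$. With this modification no projection analysis is needed at all, and the coordinate count matches the hypothesis exactly.
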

\fi

\iffull
\begin{proof}
The bound of $2R$ trivially follows from the diameter bound on $\X$. We thus focus on the second term of the bound. Let $S\simeq S^{\prime}$ be arbitrary neighboring datasets, $x^0=y^0$, and consider the trajectories $(x^t)_{t\in[T]},(y^t)_{t\in[T]}$ associated with the sampled-with-replacement stochastic subgradient method on datasets $S$ and $S^{\prime}$, respectively. We use Lemma~\ref{lem:main_lem} with $f_t(\cdot)=f(\cdot,\bz_{\bI_t})$ and $f_t^{\prime}(\cdot)=f(\cdot,\bz_{{\bI_t}^{\prime}})$.
%$g^t=\nabla f(x^t,\bz_{\bI_t})$ and $h^t=\nabla f(y^t,\bz_{\bI_t}^{\prime})$.
Let us define $\br_t\triangleq \mathbf{1}_{\{\bz_{\bI_t}\neq\bz_{\bI_t}^{\prime}\}}$. Note that at every step $t$, $\br_t=1$ with probability $1-1/n$, and $\br_t=0$ otherwise. Moreover, note that $\{\br_t: ~t\in [T]\}$ is an independent sequence of Bernoulli random variables. Finally, note that $\|\nabla f_t(x^t)-\nabla f_t^{\prime}(x^t)\|\leq 2L \br_t$.

%Observe that condition \eqref{eqn:approx_monot} can be satisfied as
%\begin{eqnarray*}
%\langle \nabla f(x^t,\bz_{\bI_t})-\nabla f(y^t,\bz_{\bI_t}^{\prime}),x^t-y^t \rangle%\\
%&\geq& -2L \br_t\delta_t.
%\end{eqnarray*}

Hence, by Lemma~\ref{lem:main_lem}, for any realization of the trajectories of the SGD method, we have
\begin{align}
  \forall t\in [T]:\quad
  \|x^t-y^t\| &\leq 2L\sqrt{ \sum_{s=1}^{t-1}\eta_s^2 } +4L\sum_{s=1}^{t-1} \br_s\eta_s \leq \Delta_T,\label{bound:unif_t}
\end{align}
where $\Delta_T\triangleq 2L\sqrt{ \sum_{s=1}^{T-1}\eta_s^2 } +4L\sum_{s=1}^{T-1} \br_s\eta_s$. Taking expectation of (\ref{bound:unif_t}), we have
$$ \forall t\in [T]:\quad \EE\left[\|x^t-y^t\|\right] \leq \EE\left[\Delta_T\right]= 2L\sqrt{ \sum_{s=1}^{T-1}\eta_s^2 } +\frac{4L}{n}\sum_{s=1}^{T-1}\eta_s. $$
This establishes the upper bound on UAS but only in expectation. Now, we proceed to prove the high-probability bound. Here, we assume that the step size is fixed; that is, $\eta_t=\eta >0$ for all $t\in [T-1]$. Note that each $\br_s, s\in [T],$ has variance $\frac{1}{n}\left(1-\frac{1}{n}\right)<\frac{1}{n}$. Hence, by Chernoff's bound\footnote{Here, we are applying a bound for (scaled) Bernoulli rvs where the exponent is expressed in terms of the variance.}, we have
\begin{align*}
\PP\left[\eta \sum_{s=1}^{T-1} \br_s \geq \eta \frac{T-1}{n}+\eta \sqrt{T-1}
\right] &\leq \exp\left(-\,\frac{\eta^2 (T-1)}{2\eta^2\,\frac{T-1}{n}}\right) = \exp\left(-\frac{n}{2}\right).
\end{align*}
Therefore, with probability at least $1-\exp\left(-n/2\right)$, we have
$$\Delta_T\leq 3L\eta\sqrt{ T-1 } +\frac{4L}{n}\eta (T-1).$$
Putting this together with (\ref{bound:unif_t}), with probability at least $1-\exp\left(-n/2\right)$, we have
$$\forall t\in [T]:\quad \|x^t-y^t\| \leq 3L\eta\sqrt{ T-1 } +\frac{4L}{n}\eta (T-1). $$
Finally, by the triangle inequality, we get that with probability at least $1-\exp\left(-n/2\right)$, the same stability bound holds for the average of the iterates $\overline{x}^T$, $\overline{y}^T$.
\end{proof}
\fi

%%%%%%%%%%%%%%%%%%%%%%%
%%%%%%%%%%%%%%%%%%%%%%

\iffull

\subsubsection{Upper Bounds for the Fixed Permutation SGD} \label{sec:perm-sgd}

In Algorithm \ref{alg:perm_SGD}, we describe the fixed-permutation stochastic gradient descent. This algorithm works in epochs, where each epoch is a single pass on the data. The order in which data is used is the same across epochs, and is given by a permutation $\pi$. %, and it is given by a random permutation.
%This method differs from the classical sampling without replacement SGD, which reshuffles the data on every epoch. We will see in Section \ref{sec:Applications} that \UHS provides a clean analysis for generalization bounds without the need of reshuffling.
The algorithm can be alternatively described without the epoch loop simply by
\begin{equation}
\label{eqn:single_iter_pr_SGD}
x^{t+1} = \proj_{\X}\left(x^{t}-\eta_t\cdot \nabla f(x^{t}, z_{\boldsymbol{\pi}(t\, \mbox{\footnotesize mod } n)})\right) \qquad (\forall t\in[nK]).
\ifthenelse{\boolean{fullver}}{}{\vspace{-0.5cm}}
\end{equation}
\ifthenelse{\boolean{fullver}}{We will use this description for stability analysis, since it is more convenient.}{}

%\cnote{ToDo: Need to decide a presentation for the algorithm.}\rnote{I think the second one is more general; and the theorem is shown for this general case.} \cnote{Thanks!}

\begin{algorithm}
	\caption{$\A_{\sf PerSGD}$: Fixed Permutation SGD} \label{alg:perm_SGD}
	\begin{algorithmic}[1]
		\REQUIRE Dataset $S=(z_1, \ldots, z_n)\in \Z^n$, %convex set $\X\subseteq \RR^d$, $L$-Lipschitz, convex loss function \mbox{$f:\X\times\Z\rightarrow\RR$,} ~
		\# rounds $K$, total \# steps $T\triangleq nK$, step sizes, $\{\eta_t\}_{t\in [nK]}$ %\rnote{Define $T\triangleq nK$?} \cnote{Done}
		${\pi}:[n]\rightarrow [n]$ permutation over $[n]$
		\STATE Choose arbitrary initial point $x_{n+1}^0 \in \X$
		\FOR{$k=1,\ldots,K$}
		\STATE $x_1^k=x_{n+1}^{k-1}$
		\FOR{$t=1$ to $n$\,}
		\STATE $x^k_{t+1} := \proj_{\X}\left(x^k_{t}-\eta_{(k-1)n+t}\cdot \nabla f(x^k_{t}, z_{\pi(t)})\right)$
		\ENDFOR
		\STATE $\overline{\eta}_k=\sum_{t=1}^n\eta_{(k-1)n+t} $
		%\STATE %$x^{k+1}=\frac{1}{\sum_{t\in[n]}\eta_{(k-1)n+t}}\sum_{t=1}^n \eta_{(k-1)n+t}\cdot x_t^k$\rnote{Do we need averaging over all iterations, or only over the $K$ iterates: $x^{k}_{n+1}, k=1,\ldots, K$. The empirical error analysis in the appendix (and the lemma from \cite{Nedic:2001}) seems to require only averaging over those $K$ iterates, no?} \cnote{You are right. Now it's corrected.}
		\ENDFOR
		\RETURN $\barx^K=\frac{1}{\sum_{k\in[K]}\overline{\eta}_k} \sum_{k\in[K]}\overline{\eta}_{k} \cdot x_{1}^k$
	\end{algorithmic}
\end{algorithm}

We show that the same UAS bound of batch gradient descent and sampling-with-replacement SGD holds for the fixed-permutation SGD. We also observe that a slightly tighter bound can be achieved if we consider \textit{the expectation guarantee} on UAS when $\boldsymbol{\pi}$ is chosen uniformly at random. We leave these details to Theorem \ref{thm:UB_perm_SGD_2} in the Appendix. %{\color{magenta} just slightly rephrased the sentence above.} {\color{blue}Looks good!}

In the next result, we assume that the sequence of step sizes $\left(\eta_t\right)_{t\in [T]}$ is non-increasing, which is indeed the case for almost all known variants of SGD.

\begin{thm} \label{thm:UB_perm_SGD}
Let $\X\subseteq\B(0,R)$, $\F=\F_{\X}^0(L)$, and $\pi$ be any permutation over $[n]$. Suppose the step sizes $\left(\eta_t\right)_{t\in [T]}$ form a non-increasing sequence. Then the \stabname of the fixed-permutation SGD (Algorithm \ref{alg:perm_SGD}) is bounded as
$$\sup\limits_{S\simeq S^{\prime}}\delta_{\A_{\sf PerSGD}}(S,S^{\prime})\leq \min\left\{2R,~2L \left(\sqrt{\sum_{t=1}^{T-1}\eta_t^2} +\frac{2}{n} \sum_{t=1}^{T-1}\eta_t\right)\right\}.$$
\end{thm}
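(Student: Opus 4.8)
The plan is to instantiate Lemma~\ref{lem:main_lem} using the flat description \eqref{eqn:single_iter_pr_SGD} of $\A_{\sf PerSGD}$. Fix neighboring datasets $S\simeq S'$ differing only at coordinate $i\in[n]$, run the algorithm on each with the same permutation $\pi$ and a common start $x^1=y^1$, and let $(x^t)_{t\in[T]}$, $(y^t)_{t\in[T]}$ be the two trajectories. Put $f_t(\cdot)=f(\cdot,z_{\pi(((t-1)\bmod n)+1)})$ and define $f_t'$ with $z'$ in place of $z$. These two objectives coincide at every step whose data index is not $i$; writing $j:=\pi^{-1}(i)\in[n]$, they differ exactly at the global steps $t\equiv j \pmod n$, i.e.\ once in every pass. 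At such a step, Lipschitzness gives $\|\nabla f_t(x^t)-\nabla f_t'(x^t)\|\le 2L$, and at every other step the difference is $0$; hence we may take $a_t=2L\cdot\mathbf{1}[\,t\equiv j \pmod n\,]$ in the lemma, with $t_0=j$ (if $f(\cdot,z_i)=f(\cdot,z_i')$ then the UAS is identically $0$ and there is nothing to prove).

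By Lemma~\ref{lem:main_lem} — and more precisely by the per-iterate bound $\delta_t\le 2L\sqrt{\sum_{s=t_0}^{t-1}\eta_s^2}+2\sum_{s=t_0+1}^{t-1}\eta_s a_s$ that its induction establishes for every $t$ — each iterate satisfies
$$\|x^t-y^t\|\;\le\;2L\sqrt{\sum_{s=1}^{T-1}\eta_s^2}\;+\;4L\sum_{k=2}^{K}\eta_{(k-1)n+j}.$$
The key step is to control the second sum using the fact that $(\eta_s)$ is non-increasing: for each $k\ge 2$ the affected step size $\eta_{(k-1)n+j}$ is at most $\eta_s$ for every $s$ in the preceding pass $\{(k-2)n+1,\dots,(k-1)n\}$, so $\eta_{(k-1)n+j}\le \tfrac1n\sum_{s=(k-2)n+1}^{(k-1)n}\eta_s$; summing over $k=2,\dots,K$ telescopes this to $\tfrac1n\sum_{s=1}^{(K-1)n}\eta_s\le\tfrac1n\sum_{s=1}^{T-1}\eta_s$. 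Substituting back gives $\|x^t-y^t\|\le 2L\big(\sqrt{\sum_{s=1}^{T-1}\eta_s^2}+\tfrac2n\sum_{s=1}^{T-1}\eta_s\big)$ for every $t$. Since $\barx^K$ and $\overline{y}^K$ are convex combinations of the iterates $x_1^k$ and $y_1^k$ with identical weights, the triangle inequality transfers this bound to the outputs; together with the diameter bound $\|\barx^K-\overline{y}^K\|\le 2R$ this yields the claimed $\min\{2R,\,2L(\sqrt{\sum_{t=1}^{T-1}\eta_t^2}+\tfrac2n\sum_{t=1}^{T-1}\eta_t)\}$.

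I expect the only real obstacle to be the step-size bookkeeping in the middle paragraph: one must (i) observe that the modified coordinate is touched exactly once per epoch and always at the same ordinal position $j$, and (ii) use the non-increasing property to charge $\eta_{(k-1)n+j}$ against the sum over the previous epoch — which is precisely what replaces the naive ``$K$ affected steps'' count by the desired $\tfrac1n\sum_t\eta_t$ factor. Everything else (the recurrence, the monotonicity of the subgradient, the passage from iterates to their average, and the diameter truncation) is a routine instantiation of Lemma~\ref{lem:main_lem}, exactly as in the proofs of Theorems~\ref{thm:UB_GD} and~\ref{thm:UB_repl_SGD}.
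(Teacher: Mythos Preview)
Your proposal is correct and follows essentially the same approach as the paper's proof: instantiate Lemma~\ref{lem:main_lem} with $a_t=2L\cdot\mathbf{1}[\,t\equiv j\pmod n\,]$, then use the non-increasing step-size assumption to replace the sum of the affected $\eta_{(k-1)n+j}$ by $\tfrac1n\sum_t\eta_t$, and finally pass to the averaged output via the triangle inequality. The only cosmetic differences are that the paper assumes w.l.o.g.\ that $\pi$ is the identity (so $j=i$) and charges $\eta_{rn+i}$ against the shifted window $[(r-1)n+i+1,\,rn+i]$ rather than the previous epoch, but both chargings are valid and yield the same bound.
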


\begin{proof}
Again, the bound of $2R$ is trivial. Now, we show the second term of the bound. Let $S\simeq S^{\prime}$ be arbitrary neighboring datasets, $x^1=y^1$, and consider the trajectories $(x^t)_{t\in[T]},(y^t)_{t\in[T]}$ associated with the fixed permutation stochastic subgradient method on datasets $S$ and $S^{\prime}$, respectively. %Fix any realized permutation $\pi$ of $[n]$ for the algorithm.
Since the datasets $S\simeq S^{\prime}$ are arbitrary, we may assume without loss of generality that $\pi$ is the identity, whereas the perturbed coordinate $\bI=i$ is arbitrary. We use Lemma~\ref{lem:main_lem} with $f_t(\cdot)=f\left(\cdot,\bz_{\left(t ~\mbox{\footnotesize mod }n\right)}\right)$ and
$f_t^{\prime}(\cdot)=f\left(\cdot,\bz^{\prime}_{\left(t ~\mbox{\footnotesize mod }n\right)}\right)$.  %$g^t=\nabla f\left(x^t,\bz_{\left(t ~\mbox{\footnotesize mod }n\right)}\right)$ and $h^t=\nabla f\left(y^t,\bz^{\prime}_{\left(t ~\mbox{\footnotesize mod }n\right)}\right)$.
It is easy to see then that $\|\nabla f_t(x^t)-\nabla f_t^{\prime}(x^t)\|\leq a_t$, with
%we can set $a_t$ in condition \eqref{eqn:approx_monot} of Lemma~\ref{lem:main_lem} as
$a_t=2L \cdot\mathbf{1}_{\{(t ~\mbox{\footnotesize mod }n)=i\}}$, where $\mathbf{1}_{\{\mathsf{condition}\}}$ is the indicator of $\mathsf{condition}$. Hence, by Lemma~\ref{lem:main_lem}, we have
\begin{eqnarray*}
\|x^t-y^t\|
&\leq& 2L\sqrt{\sum_{s=1}^{t-1}\eta_s^2} +4L\sum_{r=1}^{\lfloor (t-1)/n \rfloor}\eta_{rn+i} \\
&\leq& 2L\sqrt{\sum_{s=1}^{t-1}\eta_s^2} +\frac{4L}{n} \sum_{r=1}^{t-1}\eta_s,
\end{eqnarray*}
where at the last step we used the fact that $(\eta_t)_{t\in[T]}$ is non-increasing; namely, for any $r\geq 1$
$$ \eta_{rn+i} \leq \frac1n \sum_{s=(r-1)n+i+1}^{rn+i} \eta_s. $$
Since the bound holds for all the iterates, using triangle inequality, it holds for the output $\barx^K$ averaged over the iterates from the $T/n$ epochs.
\end{proof}

\subsection{Discussion of the upper bounds: examples of specific instantiations}

%{\color{magenta} give examples like we promised in the rebuttal.}
% {\color{blue} What I had in mind is including (and expanding) on the comments from Remark 5.5, that we didn't include in the submission. What do you think?}
% {\color{magenta} But, I think we promised to include something in Sec 3. Perhaps, instead of a complete subsection, we can just add a small titled paragraph just to mention the resulting rates (of stability only) for some instantiations of $\eta, T$?}

%{\color{red} Done. What do you think?} {\color{magenta} looks great (modulo the comments below).}

The upper bounds on stability from this section all behave very similarly. Let us explore the consequences of the obtained rates in terms of generalization bounds for different choices of the step size sequence. As a case study, we will consider excess risk bounds for the full-batch subgradient method (Algorithm \ref{alg:batchGD}), but similar conclusions hold for all the variants that we studied. We emphasize that prior to this work, no dimension-independent bounds on the excess risk were known of this method (specifically, for nonsmooth losses and without explicit regularization).

% {\color{magenta} Without making it clear that we are talking about GD over *non-smooth* and *unregularized* loss, the above statement may raise some eyebrows (since it's known for the smooth case, and it's known if we add regularization and train for roughly $n^2$ iterations.)} {\color{blue} Better?}

To bound the excess risk, we will use the risk decomposition, eqn.~\eqref{eqn:risk_decomp}.  For simplicity, we will only be studying excess risk bounds in expectation (in Section \ref{sec:Applications} we consider stronger, high probability, bounds). In this case, the stability implies generalization result (Theorem \ref{thm:stab-to-gen}) simplifies to \cite{BousquettE02,HardtRS16}
$$ \EE_{\bS}[F_{\bS}({\cal A}(\bS))-F_{\cal D}({\cal A}(\bS))] \leq \sup_{S\simeq S^{\prime}}\delta_{{\cal A}}(S,S^{\prime}). $$
Finally, the approximation error (Lemma \ref{lem:approx-err-bd}) simplifies as well: it is upper bounded by 0 in expectation.

\begin{itemize}
    \item {\em Fixed stepsize:} Let $\eta_t\equiv\eta>0$. By Thm.~\ref{thm:UB_GD}, UAS is bounded by $4L\sqrt{T}\eta+\frac{4LT\eta}{n}$. On the other hand, the standard analysis of subgradient descent guarantees that $\varepsilon_{\mbox{\footnotesize opt}}(\A_{\sf GD})\leq \frac{R^2}{2\eta T}+\frac{\eta L^2}{2}$. Therefore, by the expected risk decomposition \eqref{eqn:risk_decomp}
    \begin{eqnarray*}\EE_{\bS}[\varepsilon_{\mbox{\footnotesize{risk}}}(\A_{\sf GD})] &\leq& \EE_{\bS}[\generr(\A_{\sf GD})] +\EE_{\bS}[\opterr(\A_{\sf GD})] \leq 4L^2\sqrt{T}\eta+\frac{4L^2T\eta}{n}+\frac{R^2}{2\eta T}+\frac{\eta L^2}{2}.
    \end{eqnarray*}
    %{\color{magenta} No need for expectation in the second term. Btw, I already defined macros for opt error ($\opterr$) and gen error ($\generr$). I think it would be easier to use those macros.} {\color{blue} I know we don't need expectation, but a careless reader may not notice. We can remove it anyway.}{\color{magenta}I get your point. We can keep it.}
    If we consider the standard method choice, $\eta=R/[L\sqrt n]$ and $T=n$, the bound above is at least $4LR$ (due to the first term). Consequently, upper bounds obtained from this approach are vacuous.

    In order to deal with the $L\sqrt T \eta$ term, we need to substantially moderate our stepsize, together with running the algorithm for longer. For example, $\eta=\frac{R}{4L\sqrt{Tn}}$ gives
    $\EE_{\bS}[\varepsilon_{\mbox{\footnotesize{risk}}}(\A_{\sf GD})]\leq \frac{2LR}{\sqrt n}+\frac{2LR\sqrt n}{\sqrt T}+\frac{R\sqrt T}{n^{3/2}}$, so by choosing $T=n^2$ we obtain an expected excess risk bound of $O(LR/\sqrt n)$, which is optimal. We will see next that it is not possible to obtain the same rates from this bound if $T=o(n^2)$, for any choice of $\eta>0$. It is also an easy observation that, at least for constant stepsize, it is not possible to recover the optimal excess risk if $T=\omega(n^2)$.

    \item {\em Varying stepsize:} For a general sequence of stepsizes the optimization guarantees of Algorithm \ref{alg:batchGD} are the following
    $$\EE_{\bS}[\opterr(\A_{\sf GD})] \leq \frac{R^2}{2\sum_{t=1}^{T-1}\eta_t}+\frac{L^2\sum_{t=1}^{T-1}\eta_t^2}{2}. $$
    From the risk decomposition, we have
    \begin{eqnarray*}\EE_{\bS}[\varepsilon_{\mbox{\footnotesize{risk}}}(\A_{\sf GD})] &\leq& \EE_{\bS}[\generr(\A_{\sf GD})] +\EE_{\bS}[\opterr(\A_{\sf GD})] \\
    &\leq& 4L^2\sqrt{\sum_{t=1}^{T-1}\eta_t^2}+\frac{4L^2}{n}\sum_{t=1}^{T-1}\eta_t+\frac{R^2}{2\sum_{t=1}^{T-1}\eta_t}+\frac{L^2\sum_{t=1}^{T-1}\eta_t^2}{2}.
    \end{eqnarray*}
    In fact, we can show that any choice of step sizes that makes the quantity above $O(LR/\sqrt n)$ must necessarily have $T=\Omega(n^2)$. Indeed, notice that in such case
    \begin{eqnarray*} \frac{R^2}{2\sum_{t=1}^{T-1}\eta_t} = O\Big(\frac{LR}{\sqrt n}\Big); & 4L^2\sqrt{\sum_{t=1}^{T-1}\eta_t^2}=O\Big(\frac{LR}{\sqrt n}\Big) \\
    \Longleftrightarrow \qquad
    \sum_{t=1}^{T-1}\eta_t = \Omega\Big(\frac{R\sqrt n}{L}\Big); & \sqrt{\sum_{t=1}^{T-1}\eta_t^2}=O\Big(\frac{R}{L\sqrt n} \Big).
    \end{eqnarray*}
    Therefore, by Cauchy-Schwarz inequality,
    $$ \Omega\Big(\frac{R\sqrt n}{L}\Big)=\sum_t \eta_t \leq \sqrt{T}\sqrt{\sum_t \eta_t^2} =O\Big(\frac{R\sqrt{T}}{L\sqrt n} \Big) \quad\Longrightarrow\quad T=\Omega(n^2). $$
\end{itemize}
The high iteration complexity required to obtain optimal bounds motivates studying whether it is possible to improve our \stabname bounds. We will show that, unfortunately, they are sharp up to absolute constant factors.

%%%%%%%%%%%%%%%%%%%%%%%%%
%%%%%%%%%%%%%%%%%%%%%%%%

\fi

\section{Lower Bounds on \Stabname}

\label{sec:LowerBounds}

In this section we provide matching lower bounds for the previously studied first-order methods. These lower bounds show that our analyses are tight, up to absolute constant factors.%, but they do not provide general purpose (e.g.~oracle complexity) stability lower bounds.%, which we consider a very interesting direction for future work.

We note that it is possible to prove a general purpose lower bound on stability by appealing to sample complexity lower bounds for stochastic convex optimization \citep{nemirovsky1983problem}.
%(these type of bounds are unimprovable for nonsmooth convex optimization %in the local oracle model \citep{nemirovsky1983problem}) must incur in %uniform stability $\Omega(L^2\eta T/n)$.
This approach in the smooth convex case was first studied in \citep{Chen:2018}; there, these lower bounds are sharp.
However, in the nonsmooth case they are very far from bounds in the previous section. %\rnote{Can we give a quick overview on what this observation is aiming at before start working through the steps? I.e., give the essence of where this is going (basically, as far as I understand, the essence is that $\eta T/n$ is a lower bound for gradient descent when $\eta \leq 1/\sqrt{T}$, right?) Alternatively, we can give a very concise version of the observation without any arguments, then follow it with a short argument/justification.} \cnote{I'd say keep the observation short (like your first description), and move the details to the Appendix. But we can do that when preparing the conference submission.}\rnote{Agreed.}
The idea is that for sufficiently small step size, %$\eta\leq R/[L\sqrt T]$,
a first-order method must incur $\Omega(LT\eta/n)$ uniform stability.
\ifthenelse{\boolean{fullver}}{
\begin{obs} \label{obs:base_lb}

Let ${\cal A}$ be a $\gamma$-uniformly stable stochastic convex optimization algorithm  with  $\gamma=s(T)/n$, where $s(T)$ is increasing and $\lim_{T\rightarrow +\infty}s(T)=+\infty$. By the lower bound on the optimal risk of nonsmooth convex optimization, $\epsrisk\geq \frac{LR}{C_1 \sqrt{n}}$, where $C_1>0$ is a universal constant \citep{nemirovsky1983problem}. This, combined with the risk decomposition \eqref{eqn:risk_decomp}, implies that
$$\epsopt\geq \frac{LR}{C_1\sqrt n}-\frac{s(T)}{n}
= -s(T)\Big( \frac{1}{\sqrt n}-\frac{LR}{2C_1 s(T)} \Big)^2+\frac{(LR)^2}{4C_1^2 s(T)}.$$
By our assumption on $s(T)$, for $T$ sufficiently large, there always exists $n$ such that
$$ \frac{4C_1s(T)}{3LR} \leq \sqrt{n} \leq  \frac{4C_1s(T)}{LR}  $$
which leads to $\epsopt\geq \frac{(LR)^2}{C_2 s(T)}$, where $C_2>0$ is a universal constant.

If algorithm ${\cal A}$ is based on $T$ subgradient iterations with constant step size $\eta>0$ (these could be either stochastic, batch or minibatch), by standard analysis, the optimization guarantee of such algorithm is $\epsopt\leq\frac12(\frac{R^2}{\eta T}+\eta L^2)$. Both bounds in combination give
$$ s(T) \geq \frac{2(LR)^2}{C_2 (\eta L^2+R^2/[\eta T])}
=\frac{2(LR)^2\eta T}{C_2(\eta^2TL^2+R^2)}. $$
If we further assume that $\eta \leq (R/L)/\sqrt{T}$ (notice $\eta= (R/L)/\sqrt{T}$ minimizes the optimization error), then $s(T)\geq L^2\eta T/C_2$. We also emphasize all the choices of step size that we will make to control generalization error will lie in this range. 
\end{obs}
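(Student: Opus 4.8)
The plan is to prove Observation~\ref{obs:base_lb} by pitting a \emph{for-all-algorithms} lower bound against a \emph{for-this-algorithm} upper bound on one and the same quantity — the empirical optimization error $\epsopt(\A)$ — and reading off a lower bound on the stability parameter $s(T)$. First I would recall the classical worst-case lower bound for nonsmooth SCO \citep{nemirovsky1983problem}: there is a family of convex $L$-Lipschitz losses on $\B(0,R)$ and a distribution $\D$ on which \emph{every} (possibly randomized) algorithm that observes $n$ i.i.d.\ samples has expected excess population risk at least $LR/(C_1\sqrt n)$ for a universal constant $C_1$. Next I would invoke the classical in-expectation generalization bound for uniformly stable algorithms \citep{BousquettE02} (the special case of Theorem~\ref{thm:stab-to-gen}), which gives $\EE_{\bS}[\epsgen(\A)]\le\gamma=s(T)/n$, together with $\EE[\approxerr]\le 0$. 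Plugging both into the risk decomposition \eqref{eqn:risk_decomp} yields
$$ \EE[\epsopt(\A)] \;\ge\; \epsrisk(\A)-\gamma-0 \;\ge\; \frac{LR}{C_1\sqrt n}-\frac{s(T)}{n}. $$

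The key maneuver is then to choose the sample size $n$ so that this right-hand side is bounded below by a positive quantity depending only on $s(T)$. Viewed as a downward parabola in $1/\sqrt n$, the bound $\tfrac{LR}{C_1}\cdot\tfrac1{\sqrt n}-s(T)\cdot\tfrac1{n}$ is maximized at $1/\sqrt n\approx LR/(2C_1 s(T))$ with value $\Theta((LR)^2/s(T))$; since $s$ is increasing with $s(T)\to\infty$, for all sufficiently large $T$ there is an integer $n$ with $\frac{4C_1 s(T)}{3LR}\le\sqrt n\le\frac{4C_1 s(T)}{LR}$, and for this $n$ the parabola is still $\ge (LR)^2/(C_2 s(T))$ for a universal $C_2$, so $\EE[\epsopt(\A)]\ge (LR)^2/(C_2 s(T))$. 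On the other hand, if $\A$ is any subgradient method with $T$ iterations, constant step size $\eta$, and averaged output — batch, stochastic, or minibatch, using that the stochastic subgradients are unbiased for $\nabla F_{\bS}$ and have norm at most $L$ — the textbook analysis of projected subgradient descent on the empirical risk $F_{\bS}$ (which is convex, $L$-Lipschitz on a radius-$R$ domain) gives $\EE[\epsopt(\A)]\le \tfrac12\big(R^2/(\eta T)+\eta L^2\big)$. Combining the two bounds on $\EE[\epsopt(\A)]$ gives $s(T)\ge \frac{2(LR)^2\eta T}{C_2(\eta^2 T L^2+R^2)}$, and specializing to $\eta\le (R/L)/\sqrt T$ (so the denominator is at most $2R^2$) yields $s(T)\ge L^2\eta T/C_2$, i.e.\ $\gamma=s(T)/n\ge L^2\eta T/(C_2 n)$; dividing through by $L$ (via $f(\A(S),z)-f(\A(S'),z)\le L\,\delta_{\A}(S,S')$) this translates to UAS $=\Omega(L\eta T/n)$.

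The only delicate point is the choice of $n$: we are free to select the hardest sample size for which the generic SCO lower bound overwhelms the algorithm's own optimization guarantee, and this is exactly where the hypotheses that $s$ is increasing and unbounded are used; everything else is a mechanical chaining of textbook facts (Nemirovski–Yudin, Bousquet–Elisseeff, subgradient convergence, and the risk decomposition). I would also flag the limitation this argument inherently has — and which motivates the dedicated construction in the sequel: it reproduces only the $\eta T/n$ term of the matching bounds and says nothing about the nonsmooth-specific $\eta\sqrt T$ term, so obtaining the sharp $\Omega\big(\min(R,\,L(\eta\sqrt T+\eta T/n))\big)$ lower bound of Theorem~\ref{thm:sharp_repl_SGD} requires a genuinely different, highly nonsmooth instance (a maximum of linear functions plus a single-datum linear drift that forces each subgradient step to be orthogonal to the current iterate).
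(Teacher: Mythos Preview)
Your proposal is correct and follows essentially the same argument the paper builds into the observation itself: combine the Nemirovski--Yudin sample-complexity lower bound with the in-expectation stability$\Rightarrow$generalization bound and $\EE[\approxerr]\le 0$ via the risk decomposition~\eqref{eqn:risk_decomp}, view $\tfrac{LR}{C_1\sqrt n}-\tfrac{s(T)}{n}$ as a downward parabola in $1/\sqrt n$, pick the integer $n$ in the paper's window to extract $\epsopt\ge (LR)^2/(C_2 s(T))$, and then collide this with the textbook averaged-subgradient rate $\tfrac12(R^2/(\eta T)+\eta L^2)$. Your closing remark that this route only recovers the $\eta T/n$ term and not the nonsmooth-specific $\eta\sqrt T$ term is exactly the paper's motivation for the dedicated constructions that follow.
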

}{
Details of this lower bound can be found on Appendix \ref{app:gen_LB_FOM}.
}
This reasoning leads to an
$\Omega(LT\eta/n)$ lower bound on \stabname, that can be added to any other lower bound we can prove on specific algorithms that enjoy rates as of gradient descent.
%Notice that the result above shows an $\Omega(L^2T\eta/n)$ lower bound on uniform stability, which translates into a $\Omega(LT\eta/n)$ lower bound on \stabname. For our purposes, this lower bound can be added to any other lower bound we can prove on specific algorithms that enjoy rates as of subgradient descent.\\

Next we will prove finer lower bounds on the \UHS of specific algorithms.
For this, note that the objective functions we use are polyhedral, thus the subdifferential is a polytope at any point. Since the algorithm should work for
any oracle, we will let the subgradients provided to be extreme points, $\nabla f(x,z)\in\mbox{ext}(\partial f(x,z))$. Moreover, we can make adversarial choices of the chosen subgradient.%, which again follows from the generality of the oracle. %Notice these adversarial choices are valid even if the algorithm is randomized.

\subsection{Lower Bounds for Full Batch GD}

\begin{thm} \label{thm:LB_GD_basic}
Let $\X={\cal B}(0,1)$, ${\cal F}={\cal F}_{\X}^0(1)$ and $d\geq \min\{T,1/\eta^2\}$. For the full-batch gradient descent (Alg.~\ref{alg:batchGD}) with constant step size $\eta>0$, there exist $S\simeq S'$ such that the \UHS is lower bounded as
$\delta_{\A_{\sf GD}(S, S')}=\Omega(\min\{1,\eta\sqrt{T}+\eta T/n\}).$ %{\color{magenta} I rephrased the statement.} {\color{blue} OK. Looks good.}
\end{thm}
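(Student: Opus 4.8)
Modulo the diameter clipping ($\delta\le 2R$), the target is essentially a sum of two terms, so I would establish each and then combine using $\max\{a,b\}\ge\tfrac12(a+b)$ and $\min\{1,x/2\}\ge\tfrac12\min\{1,x\}$. The term $\Omega(\eta T/n)$ is already available from the general first-order lower bound recalled above (using the $\Omega(LR/\sqrt n)$ sample-complexity lower bound for nonsmooth SCO together with $\opterr(\A_{\sf GD})\le\tfrac12(R^2/(\eta T)+\eta L^2)$, which is nontrivial exactly in the regime $\eta\le (R/L)/\sqrt T$ where $\eta\sqrt T$ is not already $\Omega(1)$); in fact the construction below also contributes a displacement of order $\eta T/n$ on its own, via its drift term, so a single pair $S\simeq S'$ can be made to witness both. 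Thus the real content is the $\Omega(\min\{1,\eta\sqrt T\})$ term, which is where the nonsmoothness is used.

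\noindent\textbf{The hard instance.}
Normalize $L=R=1$, fix $M=\Theta(\min\{T,1/\eta^2\})$ with $M\eta^2<\tfrac12$ and $M+1\le d$ (possible precisely because $d\ge\min\{T,1/\eta^2\}$), and use the coordinates $e_1,\dots,e_{M+1}$. Take a polyhedral ``max-of-linear-functions'' loss together with a tiny linear drift attached to one data point: set $f(\cdot,z_0):=h$ with $h(x):=\max\{0,x_1,\dots,x_M\}$ (convex, $1$-Lipschitz, minimized at the origin), and $f(\cdot,z_1):=h+\langle v,x\rangle$, where $v$ is a small multiple of $e_{M+1}$ (after a harmless common rescaling so that both functions stay $1$-Lipschitz). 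Let $S:=(z_0,\dots,z_0)$ and the neighbor $S':=(z_1,z_0,\dots,z_0)$, so $F_S=h$ and $F_{S'}=h+\tfrac1n\langle v,x\rangle$.

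\noindent\textbf{The two trajectories.}
Run Algorithm~\ref{alg:batchGD} from the origin with an adversarially chosen subgradient oracle. On $S$ the origin minimizes $F_S=h$ and $0\in\partial h(0)$, so the oracle returns $0$ at every step, the iterates never move, and $\A_{\sf GD}(S)=0$; hence $\delta_{\A_{\sf GD}}(S,S')=\|\A_{\sf GD}(S')\|$. On $S'$ we have $\partial F_{S'}(x)=\partial h(x)+\tfrac1n v$ for every $x$, so at the origin the oracle may return the vertex $e_1+\tfrac1n v$ of $\partial F_{S'}(0)$ (note $\partial F_{S'}(0)\neq\partial F_S(0)$), forcing a first step that leaves the origin. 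From then on the $S'$-iterates are points the $S$-run never visits, so the oracle is free there, and I choose it to add a \emph{fresh} coordinate at every step: inductively, when $y^t=-\eta(e_1+\dots+e_{t-1})-\tfrac{(t-1)\eta}{n}v$, coordinates $1,\dots,t-1$ of $y^t$ are negative and $t,\dots,M$ vanish, so $\partial h(y^t)=\mathrm{conv}\{0,e_t,\dots,e_M\}$ and the oracle can return the vertex $e_t+\tfrac1n v$, giving $y^{t+1}=y^t-\eta e_t-\tfrac{\eta}{n}v$. The $e_t$-increments are mutually orthogonal (and orthogonal to $v$), so $\|y^t\|^2=\eta^2(t-1)+\eta^2(t-1)^2\|v\|^2/n^2=\Theta(\eta^2 t+\eta^2 t^2/n^2)$ while the $M$ coordinates last and $\|y^t\|<1$; beyond that, either $\partial F_{S'}$ collapses to $\{\tfrac1n v\}$ and the iterate drifts purely along $v$, or the projection is triggered and pins $\|y^t\|$ at $1$. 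Finally, computing the average coordinate-wise, the staircase coordinates satisfy $|(\overline{y}^T)_j|=\Theta\big(\tfrac{\eta}{T}(T-j)\big)$ and the $v$-coordinate has magnitude of order $\eta T/n$, whence $\|\overline{y}^T\|=\Omega(\min\{1,\eta\sqrt T\}+\eta T/n)=\Omega(\min\{1,\eta\sqrt T+\eta T/n\})$, i.e.\ $\delta_{\A_{\sf GD}}(S,S')$ is as claimed.

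\noindent\textbf{Expected main obstacle.}
The conceptual point to get right is that the drift has magnitude only $\Theta(1/n)$, so by itself it cannot carry the iterate a distance $\eta\sqrt T$; it serves two distinct roles. First, it makes $\partial F_{S'}(0)$ differ from $\partial F_S(0)$, which lets the very first step escape the stationary $S$-trajectory; once the $S'$-iterate sits at a point the $S$-run never touches, the entire $\eta\sqrt T$ separation is produced by the adversarial freedom to pick extreme points of the (polyhedral) subdifferential along fresh, mutually orthogonal coordinates — something impossible in the smooth case where gradient steps are nonexpansive. Second, the drift direction accumulates linearly and supplies the $\eta T/n$ term. The remaining work is careful but routine: verifying that each selected subgradient is a genuine extreme point of $\partial F_{S'}$ at the current iterate (this is exactly where $\Theta(\min\{T,1/\eta^2\})$ fresh coordinates, hence the hypothesis $d\ge\min\{T,1/\eta^2\}$, are used), the case bookkeeping for projection/parking once the iterate reaches the unit ball, and checking that the bound survives averaging of the iterates.
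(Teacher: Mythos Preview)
Your construction is close to the paper's --- a polyhedral $\max$ of linear pieces so that adversarial subgradient choices can be made mutually orthogonal, plus a tiny linear drift tied to the single perturbed sample --- but there is a real (if easily reparable) slip at the very first step. Full-batch GD computes $\nabla F_{S'}(x)=\tfrac{n-1}{n}\nabla f(x,z_0)+\tfrac{1}{n}\nabla f(x,z_1)$, and the oracle value $\nabla f(0,z_0)\in\partial h(0)$ is the \emph{same} one the $S$-run uses (the oracle is indexed by $(x,z)$, not by the dataset). If you set it to $0$ so the $S$-run stays at the origin, then $\nabla F_{S'}(0)\in\tfrac{1}{n}\bigl(\partial h(0)+v\bigr)$, which has norm $O(1/n)$: your claimed value $e_1+\tfrac{1}{n}v$ is indeed a vertex of $\partial F_{S'}(0)$, but it can be realized as a per-sample average only with $\nabla f(0,z_0)=e_1$, which would move the $S$-run too. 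The parenthetical ``$\partial F_{S'}(0)\neq\partial F_S(0)$'' is true but does not resolve this coupling.

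The repair is minor and your ``expected main obstacle'' paragraph already contains the right idea: the first $S'$-step has size only $\Theta(\eta/n)$, but it still lands at a point the $S$-run never visits; from there the oracle for $h$ is genuinely free and your induction holds for coordinates $2,\dots,M$, giving $\|y^t\|=\Omega(\eta\sqrt t)$ as desired. The paper sidesteps this bookkeeping by shifting the $\max$, taking $\max\{0,x_1-\nu,\dots,x_D-\nu\}$ with $0<\nu<\eta/(nK)$, so that the subdifferential at the origin is the singleton $\{0\}$ and the ``clean'' run stays there by force rather than by oracle choice; the drift is then placed along $r=(-1,\dots,-1,0,\dots,0)$ in the $\max$ coordinates (not orthogonally as you do) so that one tiny drift step activates all arms of the $\max$, after which the adversary picks $e_1,e_2,\dots$ with weight $\tfrac{n-1}{n}$. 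Both routes work; the threshold just makes the first step forced rather than chosen.
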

\ifthenelse{\boolean{fullver}}{
\begin{proof}
%First, we may assume w.l.o.g.~that\
Let $D\triangleq\min\{T,1/\eta^2\}\leq d$, and 
$\nu,K>0$. We consider $\Z= \{0,1\}$, and the objective
function
$$ f(x,z) =
\left\{ 
\begin{array}{ll}
\max\{0,x_1-\nu,\ldots,x_{D}-\nu\}           & \mbox{ if } z=0 \\
\langle r,x\rangle/K& \mbox{ if } z=1, 
\end{array}
\right.
$$
where $r=(-1,\ldots,-1,0,\ldots,0)$ (i.e., supported on the first $D$ coordinates).
Notice that for normalization purposes, we need $K\geq \sqrt{D}$; furthermore, we will choose $K$ sufficiently large such that $T\sqrt{D}/[nK]=o(1)$.
Consider the data sets 
$S \simeq S^{\prime}$, leading to the empirical objectives:
$$ F_{S}(x)= \frac{1}{nK}\langle r,x\rangle 
+\frac{n-1}{n}\max\{0,x_1-\nu,\ldots,x_{D}-\nu\} \,\,\mbox{and} \,\,
F_{S^{\prime}}(x)= \max\{0,x_1-\nu,\ldots,x_{D}-\nu\}.$$

Let $(x^t)_{t\in[T]}$ and $(y^t)_{t\in[T]}$ be the trajectories of the algorithm over datasets $S$ and $S^{\prime}$, respectively, initialized from $x^1=y^1=0$. Clearly, $y^t=0$ for all $t$. Now 
$x^2=-\frac{\eta}{nK}\,r$; choosing $\nu<\eta/(nK)$, we have $\nabla f(x^2,z)=-\frac{\eta}{nK}\,r+\frac{n-1}{n} e_1$, and hence
$ x^3= -\frac{2\eta }{nK}\,r-\eta\frac{n-1}{n} e_1.$ 
Sequentially, the method will perform cumulative subgradient steps on $e_2,e_3\ldots,e_{D}$. %{\color{red}(note that we assume that $d\geq T$).
In particular, for any $t\in [D+1],$ we have $x^{t+1}=-t\frac{\eta}{nK}\,r-\eta\frac{n-1}{n} \sum_{s=1}^{t-1}e_{s}.$ %Later steps will only be determined
%by the linear term $r/[nK]$, which is negligible, given our choice of $K$. 

By orthogonality of the subgradients and given our choice of $K$, we conclude that %\rnote{same comment as before concerning $o(1)$ term.. can we fix it?}
%{\color{red}
%$$ \|x^{T}-y^{T}\|=\|x^{T}\|
%\geq \frac{\eta}{2}\Big\| \sum_{t=1}^{T-2} e_t \Big\|-\eta\frac{(T-1)\sqrt{d}}{nK}\geq \frac{\eta}{2}\sqrt{T-2}-\eta\cdot o(1)=\Omega(\eta \sqrt{T}).
%$$
%}
\begin{eqnarray*}
\|x^{D+2}-y^{D+2}\|
&=&\|x^{D+2}\|\geq \frac{\eta}{2}\Big\| \sum_{t=1}^{D} e_t \Big\|-\eta\frac{D\sqrt{D}}{nK}\\
&\geq& \frac{\eta}{2}\sqrt{D}-\eta\cdot o(1)=\Omega(\eta \sqrt{D}) \\
&=&\Omega(\min\{1,\eta\sqrt T\}),
\end{eqnarray*}
and further subgradient steps $t=D+1,\ldots,T$ are only given by the linear term, $r/[nK]$, which are negligible perturbations.

%$$ \|x^{t+1}-y^{t+1}\|=\|x^{t+1}\|
%\geq \frac{\eta}{2}\Big\| \sum_{s=1}^{t-1} e_s \Big\|-o(1)
%=\Omega(\eta\sqrt t) \qquad(\forall t\in[d]).
%$$
%either until $t=T$ or $\|x^t\|=1$. 
%In the former case, by orthogonality of the subgradients
%$$ \|x^t-y^t\|=\|x^t\| =\eta\Big\|\frac{(t-1) r_1}{nK}+
%\frac{n-1}{n}\sum_{s=1}^{t-2}e_s\Big\|\geq \frac{\eta}{2}\sqrt{t-2}. $$
%We conclude that
%$\|x^t-y^t\|=\Omega(\min\{1,\eta\sqrt t\})$. 

We finish by arguing that averaging does not help. First, in the case $D=T$:
\begin{eqnarray*}
\delta(\A_{\sf GD}) &=&
\|\barx^T\|\geq \frac{\eta}{2} \Big\|\frac1T \sum_{t=1}^{ T}\sum_{s=1}^{t-2} e_s\Big\| -o(1)
= \frac{\eta}{2} \Big\| \sum_{s=1}^T\frac{T-s-2}{T} e_s\Big\|-o(1)\\
&\geq& \frac{\eta}{4} \Big\| \sum_{s\leq T/2-2} e_s\Big\|-o(1) =
\Omega(\eta\sqrt{T}). 
\end{eqnarray*}
And second, in the case $D=1/\eta^2$:
\begin{eqnarray*}
\delta(\A_{\sf GD}) &=& \|\overline{x}^T\| 
\geq \frac{\eta}{2T}\Big\| \sum_{t=1}^{D+2} \sum_{s=1}^{t-2}e_s+\sum_{t=D+3}^T\sum_{s=1}^{D} e_s \Big\|-o(1) \\
&=& \frac{\eta}{2T}\Big\| \sum_{s=1}^{D}(D-s-1)e_s+\sum_{s=1}^{D}(T-D+2)e_s \Big\|-o(1)\\
&=&  \frac{\eta}{2}\Big\| \sum_{s=1}^{D-1} \frac{T-s+1}{T} e_s \Big\|-o(1)
\geq \frac{\eta}{4}\Big\| \sum_{s=1}^{D/2}  e_s \Big\|-o(1)=
\Omega(\sqrt{D}\eta)=\Omega(1).
\end{eqnarray*}
Finally, the additional term $\Omega(\eta T/n)$ in the lower bound is obtained by \ifthenelse{\boolean{fullver}}{Observation \ref{obs:base_lb}}{Appendix \ref{app:gen_LB_FOM}}.
\end{proof}
%\begin{rem}
%The construction of the theorem above has the disadvantage that it does not hold with high probability over a distribution ${\cal D}$; this is still valid for lower bounds on \UHS, but it may seem unnatural. We can obtain high probability results by more involved constructions, however this requires high dimensionality, $d=\Omega(\sqrt n \log n)$. See Appendix \ref{app:HP_LB}, Theorem \ref{app:HP_LB}, for details. \rnote{broken references.. missing section in appendix}
%\end{rem}
}{
The proof of this result is deferred to Appendix \ref{app:LB_GD_basic}, due to space considerations.
}

\subsection{Lower Bounds for SGD Sampled with Replacement}

We use a similar construction as from the previous result to prove a sharp lower bound on the
\stabname for stochastic gradient descent where the sampling is with replacement. 

\begin{thm} \label{thm:LB_rSGD_basic}
Let $\X={\cal B}(0,1)$, ${\cal F}={\cal F}_{\X}^0(1)$, and $d\geq \min\{T,1/\eta^2\}$. For the sampled with replacement stochastic gradient descent (Algorithm \ref{alg:repl_SGD}) with constant step size $\eta>0$, there exist $S\simeq S^{\prime}$ such that the \stabname satisfies %$\delta_{\A_{\sf GD}}=\min\{1,\Omega(\eta\sqrt{T})\}$.
$\EE[\delta_{\A_{\sf rSGD}}(S,S^{\prime})]=
\Omega\Big(\min\Big\{1,\frac{T}{n}\Big\}\eta\sqrt T +\frac{\eta T}{n}\Big).$
\end{thm}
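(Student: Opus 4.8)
The plan is to adapt the full-batch lower bound construction (Theorem~\ref{thm:LB_GD_basic}) to the sampling-with-replacement setting, tracking the extra randomness introduced by the stochastic sampling. I would use the same pair of objective functions: $f(x,0)=\max\{0,x_1-\nu,\dots,x_D-\nu\}$ with $D=\min\{T,1/\eta^2\}$, and $f(x,1)=\langle r,x\rangle/K$ for $r=-(e_1+\dots+e_D)$ and $K$ large (so the linear ``drift'' term contributes only a negligible perturbation). Take $S$ with a single copy of the drift point $z=1$ and $n-1$ copies of $z=0$, and $S'$ with all $n$ copies of $z=0$. Starting both trajectories from the origin, the $S'$-trajectory stays at $0$ for all $t$. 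For the $S$-trajectory, the drift point is sampled at some first random time, after which the iterate leaves the origin and—because each subsequent coordinate $e_1,e_2,\dots$ is only ``activated'' (i.e. gives a nonzero subgradient for the $\max$ function) once the previous coordinate has been pushed beyond $\nu$—the method performs a sequence of mutually orthogonal unit subgradient steps, exactly as in the deterministic case. The key quantitative difference: coordinate $e_j$ gets incremented by $\eta\tfrac{n-1}{n}$ only on steps where $z=0$ is sampled, which happens with probability $1-1/n\ge 1/2$ at each step, and coordinate activation only proceeds when $z=0$ is drawn, so after $t$ steps (past the first drift step) roughly $\min\{D,\,\Theta(t(1-1/n))\}$ coordinates have been activated.

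The main steps, in order: (i) argue the geometry—choose $\nu$ small enough (e.g. $\nu<\eta/(nK)$, and more importantly $\nu$ small relative to $\eta(n-1)/n$) that after a coordinate is first touched it immediately exceeds $\nu$, so the ``staircase'' structure persists and all the $e_j$-steps are orthogonal; (ii) let $N_t$ be the number of times $z=0$ is sampled in the first $t$ steps, a $\mathrm{Bin}(t,1-1/n)$ variable, and show $\|x^{t+1}\|\ge \tfrac{\eta}{2}\sqrt{\min\{D,\,N_t - j^\ast\}}$ where $j^\ast$ is the (a.s. finite, with small expectation) index of the first drift draw, minus the $o(1)$ drift perturbation; (iii) take $t=T-1$ and lower-bound $\EE\sqrt{\min\{D, N_{T-1}-j^\ast\}}$. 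When $T\le n$ we have $N_{T-1}\approx T(1-1/n)=\Theta(T)$ with high probability (Chernoff), but $N_{T-1}$ could be as small as... no—$N_{T-1}$ concentrates, so $\EE\sqrt{N_{T-1}}=\Omega(\sqrt{T})$; however the activation process only advances when $z=0$ is drawn, and in $T\le n$ steps we activate $\Theta(T)$ coordinates, so $\|x^T\|=\Omega(\eta\sqrt{T})$ but we must multiply by the probability $\Theta(T/n)$... actually the right accounting is that conditioned on the drift being drawn at least once (probability $\Theta(\min\{1,T/n\})$), we get $\Omega(\eta\sqrt{\min\{T,D\}})$; combining, $\EE[\delta]=\Omega(\min\{1,T/n\}\eta\sqrt{T})$ when $T\le n$, and $\Omega(\eta\sqrt T)$ when $T\ge n$ since the drift is then drawn whp. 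This matches the claimed $\Omega(\min\{1,T/n\}\eta\sqrt T)$ in all regimes.

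For the additive $\Omega(\eta T/n)$ term I would invoke Observation~\ref{obs:base_lb} (or its appendix analogue), exactly as in the proof of Theorem~\ref{thm:LB_GD_basic}: this general-purpose bound applies to any first-order method with gradient-descent-type optimization guarantees, including sampling-with-replacement SGD, and it can simply be added. Finally, for robustness to averaging of the iterates—since Algorithm~\ref{alg:repl_SGD} returns $\barx^T$—I would repeat the argument from the deterministic case: once the staircase has built up, a constant fraction of the coordinate sums $\sum_{s\le t-\text{const}} e_s$ have norm $\Omega(\sqrt{D})$ over a constant fraction of the steps $t$, so the average $\barx^T=\frac1T\sum_t x^t$ still has norm $\Omega(\eta\sqrt{\min\{T,D\}})$ on the relevant event (with an extra $o(1)$ loss from the drift term). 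The main obstacle I anticipate is the bookkeeping in step (ii)–(iii): carefully coupling the ``first drift draw'' time $j^\ast$ and the ``number of $z=0$ draws'' $N_t$, and checking that after $j^\ast$ the subgradient of the $\max$ at the current iterate really is a single standard basis vector (this needs $\nu$ chosen correctly and the projection onto $\B(0,1)$ not yet binding, which holds as long as $\eta\sqrt{D}\lesssim 1$, i.e. in the regime where the bound is not already saturated at its $\min\{1,\cdot\}$ cap). Handling the cap $\min\{1,\cdot\}$ cleanly—switching to the trivial $\Omega(1)$ bound once $\eta\sqrt{D}=\Omega(1)$—is routine but must be stated.
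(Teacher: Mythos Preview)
Your proposal is correct and follows essentially the same construction and argument as the paper (same $f(\cdot,0),f(\cdot,1)$, same $S,S'$, same orthogonal-staircase analysis after the first drift draw $\tau$, same appeal to Observation~\ref{obs:base_lb} for the $\eta T/n$ term, and same reduction to the batch case for averaging). Two small points to tighten when you write it up: the $e_j$-increment per $z=0$ draw in replacement SGD is $\eta$, not $\eta\tfrac{n-1}{n}$ (that factor belongs to the full-batch case), and in step~(iii) you should condition on $\tau\le T/2$ rather than merely ``drift drawn at least once'' so that $\Omega(\sqrt T)$ staircase steps remain---the paper handles this by summing $\sum_{t\le T/2}\PP[\tau=t]\sqrt{T/2-t}$ explicitly and then splitting into the cases $T\le n$ and $T>n$.
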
%\rnote{same comment as the one after the previous theorem: state bound as appear in the table} \cnote{done}

\begin{proof}
Let $D\triangleq\min\{T,1/\eta^2\}\leq d$, and $\nu>0$, $K\geq \sqrt D$. Consider ${\cal Z}=\{0,1\}$ and define
$$
f(x,z) =
\left\{
\begin{array}{ll}
\max\{0,x_1-\nu,\ldots,x_D-\nu\}           & \mbox{ if } z=0 \\
\langle r,x\rangle/K& \mbox{ if } z=1,
\end{array}
\right.
$$
where $r=(-1,\ldots,-1,0,\ldots,0)$ (i.e., supported on the first $D$ coordinates).
Let the random sequence of indices used by the algorithm: $(\bI_t)_{t\geq 0} \stackrel{i.i.d.}{\sim}\mbox{Unif}([n])$. %, which is i.i.d.~with $\bI_t\sim\mbox{Unif}([n])$.
Let $S=(1,0,\ldots,0)$ and $S^{\prime}=(0,0,\ldots,0)$ be neighboring datasets, and denote by $(x^t)_t$ and $(y^t)_t$ the respective stochastic gradient descent trajectories on $S$ and $S^{\prime}$, initialized at $x^1=y^1=0$. It is easy to see that under $S^{\prime}$, we have $y^t=0$ for all $t\in[T]$. Now, suppose that $\nu<\eta/K$. Then, we only have $x^t=0$ for all $t\leq \tau$, where $\tau:=\inf\{t\geq 1:\,\bI_t=1 \}$. After time $\tau$, $x^{\tau+1}=-\eta r/K$, and consequently
$x^{\tau+1+j}=-\frac{\eta \bk(\tau+j)}{K} r - \eta \sum_{s=1}^{j-\bk(\tau+j)+1} e_{s},$ for all $j \in [D+\mathbf{k}(\tau+j)-1]$, where $\mathbf{k}(t)\triangleq |\{s\in[t]:\bI_s=1\}|$. Note that conditioned on any fixed value for $\tau,$ ~$\bk(\tau+j)\leq j+1$.
%\cnote{We can make $K$ explicit here, but I think this is more useful} %\rnote{I think we can make the following steps a bit more clear if we specify a value for $K$ in terms of $\eta, r, T$ such that the $o(1)$ term does not dominate the claimed expressions for the lower bound. It would also be clear if we subtract this explicit term on the RHS of the inequalities below rather than adding the $o(1)$ on the LHS.}\cnote{Better?}

Let $\delta_T=\|x^T-y^T\|=\|x^T\|$. Hence, we have $\delta_T\geq\eta\|\sum_{s=1}^{T-\tau-\mathbf{k}(T-1)}e_s\|-\eta \mathbf{k}(T-1)\sqrt{D}/K\geq \eta\sqrt{T-\mathbf{k}(T-1)-\tau}-\eta T \sqrt D /K$.
Let $\mathbf{k}=\mathbf{k}(T-1)$ from now on. %\rnote{see the thing in red. I think this should be a lower bound not $=$.} \cnote{That was a typo, thanks!}
Note that conditioned on any value for $\tau,$ ~$\bk-1$ is a binomial random variable taking values in $\{0, \ldots, T-1-\tau\}$. Hence, conditioned on $\tau=t$, by the binomial tail, we always have  $\PP[\mathbf{k}>T/2~|~\tau=t]\leq\exp(-T/4)$ for all $t\in [T]$ (in particular, this conditional probability is zero when $t\geq T/2$). Also, note that the same upper bound is valid without conditioning on $\tau$.
% Note that by the binomial tail, we have  $\PP[\bk(T)>T/2]\leq\exp(-T/4)$.
Hence, by the law of total expectation, we have
\begin{eqnarray*}
\EE[\delta_T]
&=& \EE[\delta_T| ~\bk\leq T/2]\cdot\PP[\bk\leq T/2]+\EE[\delta_T | ~\bk> T/2]\cdot\PP[\bk> T/2]%\\
%&\geq&
\geq c\,\EE[\delta_T | ~\bk\leq T/2]
%&\geq&\frac{1}{5}\EE[\delta_T|\bk(T)\leq T/2]
%\\
%&=&
%\frac{\eta}{n} \sum_{t=1}^{T-2} \sqrt{t}\big(1-\frac1n\big)^{T-t-1}-\eta \sqrt d T/K.
\end{eqnarray*}
where $c=(1-\exp(-T/4))=\Omega(1)$. Hence,
\begin{eqnarray*}
\EE[\delta_T]&\geq& c \sum_{t=1}^{T/2} \EE[\delta_T|\tau=t,~ \bk\leq T/2]\,\PP[\tau=t | ~\bk\leq T/2]\\
%&\geq& c\sum_{t=1}^{T/2} \EE[\delta_T|\tau=t,~ \bk(T)\leq T/2]\,\PP[\bk(T)\leq T/2 | \tau=t]\,\PP[\tau=t]\\
&\geq& c^2\sum_{t=1}^{T/2} \EE[\delta_T|\tau=t,~ \bk\leq T/2]\,\PP[\tau=t]\\
&\geq& c^2\frac{\eta}{n}\sum_{t=1}^{T/2} \sqrt{T-T/2-t}\big(1-\frac1n\big)^{t-1}-c^2\eta \sqrt D T/K.
\end{eqnarray*}
We choose $K$ sufficiently large such that
$\eta \sqrt D T/K=o(\eta\min\{T^{3/2}/n,\sqrt{T}\})$.
\ifthenelse{\boolean{fullver}}{
If $T\leq n$ then
$$ \EE[\delta_T] \geq c^2\frac{\eta}{n} \sum_{t=1}^{T/2} \sqrt{t} \big(1-\frac1n\big)^{n-2}-c^2\frac{\eta \sqrt D T}{K}
\geq c^2\frac{\eta e^{-1}}{n}\sum_{t=1}^{T/2} \sqrt{t}-o(\frac{\eta T^{3/2}}{n})
= \Omega\Big( \frac{\eta T^{3/2}}{n} \Big)$$
%\geq c^2\frac{\eta}{2n} \sum_{t=1}^{T-2} \sqrt{t} \big(1-\frac1n\big)^{n-2}-\frac{\eta \sqrt d T}{K}
%\geq \frac{\eta e^{-1}}{2n}\sum_{t=1}^{T-2} \sqrt{t}-o(\frac{\eta T^{3/2}}{n})
%= \Omega\Big( \frac{\eta T^{3/2}}{n} \Big),
and if $T>n$ then
$$ \EE[\delta_T]
\geq c^2\frac{\eta}{n} \sum_{t=1}^{n/4} \sqrt{T/2-n/4} \, e^{-1} -o(\eta\sqrt{T})
= \Omega( \eta\sqrt{T})$$
%\frac{\eta}{2n} \sum_{t=1}^{n/2} \sqrt{T-n/2} \, e^{-1} -o(\eta\sqrt{T})
%= \Omega( \sqrt{T}\eta). $$
}{
Hence, we have
$$
\EE[\delta_T]\geq
\left\{
\begin{array}{ll}
\frac{c^2\eta}{n} \sum_{t=1}^{T/2} \sqrt{t} \big(1-\frac1n\big)^{n-2}-\frac{c^2\eta \sqrt D T}{K}
\geq \frac{c^2\eta e^{-1}}{n}\sum_{t=1}^{T/2} \sqrt{t}-o(\frac{\eta T^{3/2}}{n})
= \Omega\Big( \frac{\eta T^{3/2}}{n} \Big) & \mbox{ if } T\leq n\\
\frac{c^2\eta}{n} \sum_{t=1}^{n/4} \sqrt{T/2-n/4} \, e^{-1} -o(\eta\sqrt{T})
= \Omega( \eta\sqrt{T}) &\mbox{ if }
T>n.
\end{array}
\right.
$$
}
This gives a lower bound on $\EE[\delta_T]$. Proving that $\overline{x}^T$ satisfies the same lower bound is analogous to the proof in Theorem~\ref{thm:LB_GD_basic}. Finally, $\Omega(\eta T/n)$ can be added to the lower bound by \ifthenelse{\boolean{fullver}}{Observation \ref{obs:base_lb}}{Appendix \ref{app:gen_LB_FOM}}.%, so the result follows.
\end{proof}

%\ifthenelse{\boolean{fullver}}{
%\begin{rem}
%For the batch gradient descent we are able to prove a high probability lower bound on \UHS in the stochastic optimization setting. Getting an analog result for sample with replacement SGD seems much more difficult, and we leave this for future work.
%\end{rem}
%}{}

\subsection{Lower Bounds for the Fixed Permutation Stochastic Gradient Descent}

Finally, we study fixed permutation SGD. \ifthenelse{\boolean{fullver}}{}{The proof of this result is deferred to Appendix \ref{app:LB_perm_SGD_basic}.}

\begin{thm} \label{thm:LB_perm_SGD_basic}
Let $\X={\cal B}(0,1)$, ${\cal F}={\cal F}_{\X}^0(1)$ and $d\geq \min\{T,1/\eta^2\}$. For the fixed permutation stochastic gradient descent (Algorithm \ref{alg:perm_SGD}) with constant step size $\eta>0$, there exist $S\simeq S^{\prime}$ such that the \stabname is lower bounded by
$\EE[\delta_{\A_{\sf PerSGD}}(S,S^{\prime})]=
\Omega\Big(\min\Big\{1,\frac{T}{n}\Big\}\eta\sqrt T +\frac{\eta T}{n}\Big).$
\end{thm}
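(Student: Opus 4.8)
The plan is to reuse the polyhedral hard instance from the proofs of Theorems~\ref{thm:LB_GD_basic} and~\ref{thm:LB_rSGD_basic}, exploiting the fact that for a \emph{fixed} permutation the ``drift'' data point is consumed at deterministic times --- exactly once per pass. Set $D\triangleq\min\{T,1/\eta^2\}\le d$, fix a normalization constant $M\ge\sqrt D$ and a tiny $\nu>0$ to be chosen below, let ${\cal Z}=\{0,1\}$, and define
\[
f(x,z)=\begin{cases}\max\{0,\,x_1-\nu,\ldots,x_D-\nu\}&\text{if }z=0,\\ \langle r,x\rangle/M&\text{if }z=1,\end{cases}
\]
where $r=(-1,\ldots,-1,0,\ldots,0)$ is supported on the first $D$ coordinates. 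Since the neighbouring datasets are arbitrary we may take $\pi$ to be the identity, $S=(1,0,\ldots,0)$ and $S'=(0,\ldots,0)$, with both trajectories started at the origin. Exactly as in the earlier proofs, the iterates on $S'$ satisfy $y^t\equiv 0$ (because $\nu>0$ forces $0\in\partial f(0,0)$), so $\delta_{\A_{\sf PerSGD}}(S,S')=\|\barx^K\|$ and it suffices to lower bound the averaged iterate produced on $S$.

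Next I would track the trajectory on $S$. With $\pi=\mathrm{id}$ the point $z_1=1$ is used precisely at global steps $1,n+1,2n+1,\ldots,(K-1)n+1$, each applying the displacement $-\eta r/M$, which raises every coordinate in $[D]$ by $\eta/M$; all remaining steps use the max function. After step~$1$ the first $D$ coordinates equal $\eta/M>\nu$, so the (adversarial) extreme subgradient oracle for the max can return $e_1$; this sends coordinate~$1$ below $-\nu$ while $2,\ldots,D$ remain above $\nu$, so the next max step returns $e_2$, and so on --- the oracle walks through $e_1,e_2,\ldots$ in order, across the first pass and then continuing into later passes, because the fresh $-\eta r/M$ drift at the start of every pass keeps the not-yet-used coordinates above $\nu$. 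An induction then shows that at the end of pass~$k$ the iterate equals $-\eta\sum_{s\le m_k}e_s$ plus a perturbation of norm at most $k\eta\sqrt D/M$, where $m_k\triangleq\min\{k(n-1),D\}$. Choosing $M=\omega(T/n)$ and $\nu<\eta/M$ ensures simultaneously that (i) step~$1$ is valid, (ii) the total accumulated drift is $K\eta\sqrt D/M=o(\eta\sqrt D)$, and (iii) the drifts piling up on already-used coordinates never lift them back to $\ge-\nu$, so the oracle keeps advancing. Since $D\le T$ one checks $m_K=\Omega(D)$, so a constant fraction of $e_1,\ldots,e_D$ get used and $\|x^T\|\ge\tfrac\eta2\sqrt{D}-o(\eta\sqrt D)=\Omega(\eta\sqrt D)=\Omega(\min\{1,\eta\sqrt T\})$; note $\eta\sqrt D\le 1$, so the iterates remain inside $\X={\cal B}(0,1)$ up to lower-order terms and the projection is inactive.

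It remains to handle averaging, which I would do exactly as in the proof of Theorem~\ref{thm:LB_GD_basic}. For constant step size the output is $\barx^K=\tfrac1K\sum_{k=1}^K x_1^k$, with $x_1^{k+1}=-\eta\sum_{s\le m_k}e_s+(\text{lower order})$ and $m_k$ nondecreasing; hence coordinate~$s$ of $\barx^K$ has magnitude $\approx\eta\,|\{k:m_k\ge s\}|/K$, which is $\Omega(\eta)$ for a constant fraction of $s\in[D]$, giving $\|\barx^K\|=\Omega(\eta\sqrt D)$. Appending the $\Omega(\eta T/n)$ contribution from the generic first-order-method bound (\ifthenelse{\boolean{fullver}}{Observation~\ref{obs:base_lb}}{Appendix~\ref{app:gen_LB_FOM}}) gives $\EE[\delta_{\A_{\sf PerSGD}}(S,S')]=\Omega(\eta\sqrt D+\eta T/n)$; since $\eta\sqrt D=\min\{1,\eta\sqrt T\}$ and $T=nK\ge n$ for this algorithm (so $\min\{1,T/n\}=1$), this is exactly the claimed $\Omega(\min\{1,T/n\}\,\eta\sqrt T+\eta T/n)$ bound, all quantities being capped by $R=1$.

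The main obstacle is the consistency-and-negligibility bookkeeping in the second step: one must show that the cumulative effect of the many tiny $-\eta r/M$ drifts --- spread over all $K$ passes and accumulating on coordinates already ``turned off'' --- never restores a coordinate to $\ge-\nu$ (which would stall the adversarial subgradient oracle), while still being too small to erode the $\Omega(\eta\sqrt D)$ lower bound. This is exactly what pins down the choice $M=\omega(T/n)$ together with $\nu<\eta/M$. Everything else --- the reduction to $\pi=\mathrm{id}$, feasibility, and the averaging argument --- is a direct transcription of the proofs of Theorems~\ref{thm:LB_GD_basic} and~\ref{thm:LB_rSGD_basic}.
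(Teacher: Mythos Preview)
Your approach is correct and reaches the claimed bound, but it takes a structurally different route from the paper's proof.

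The paper treats the permutation $\boldsymbol{\pi}$ as uniformly random, so that the time $\tau=\boldsymbol{\pi}^{-1}(1)$ at which the drift point is first consumed is $\mathrm{Unif}[n]$; the lower bound is then obtained by computing
\[
\EE_{\boldsymbol{\pi}}[\|x^t-y^t\|]\;=\;\frac1n\sum_{s=1}^{\min\{t,n\}}\EE[\|x^t\|\mid\tau=s]\;\gtrsim\;\frac{\eta}{n}\sum_{s=1}^{\min\{t,n\}}\sqrt{t-s},
\]
which is what produces the $\min\{1,T/n\}$ prefactor (relevant when $t\le n$). You instead fix $\pi=\mathrm{id}$ and place the perturbed entry at position~$1$, so $\tau=1$ deterministically, and obtain $\Omega(\eta\sqrt D)$ without any averaging. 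Your observation that Algorithm~\ref{alg:perm_SGD} always has $T=nK\ge n$, hence $\min\{1,T/n\}=1$, is correct and makes the two targets coincide; the paper does not make this simplification explicit.

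The one caveat is interpretive: if the expectation in the theorem is meant to be over a uniformly random $\boldsymbol{\pi}$ (which is how the paper proves it), then you cannot simply set $\tau=1$; for fixed $S,S'$ the position $\tau=\boldsymbol{\pi}^{-1}(1)$ is random. Your argument then needs one extra sentence: conditionally on any $\tau\in[n]$ the same analysis gives $\Omega(\eta\sqrt{\min\{T-\tau,D\}})$, and since $T\ge n$ this is $\Omega(\eta\sqrt D)$ for (say) all $\tau\le n/2$, so the expectation is unchanged. If instead $\pi$ is a fixed input (as Algorithm~\ref{alg:perm_SGD} literally states), your reduction ``WLOG $\pi=\mathrm{id}$, perturb position~$1$'' is valid and your argument is simply cleaner than the paper's. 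Your bookkeeping on $M=\omega(T/n)$, $\nu<\eta/M$, feasibility, and the averaging over epoch starts is all correct and mirrors Theorem~\ref{thm:LB_GD_basic}.
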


%\ifthenelse{\boolean{fullver}}{
%The lower bound proved in the last theorem does not hold with high probability in a distributional setting. Similarly as for the case of batch GD, we can prove a similar lower bound that holds with high probability, in high dimensions. For this, we refer the reader to Appendix \ref{app:HP_LB}, Theorem \ref{thm:LB_perm_SGD_HP}.\rnote{broken references .. missing section in the appendix}
%}{}

\ifthenelse{\boolean{fullver}}{

\begin{proof}
We consider the same function class of Thm.~\ref{thm:LB_GD_basic}, and neighbor datasets $S^{\prime}=(0,0,\ldots,0)$, $S=(1,0,\ldots,0)$. We will assume in what follows that $D=\min\{T,1/\eta^2\}$, $K$ is sufficiently large and $\nu<\eta \|r\|/K$. 
Let $(x^t)_{t\in[T]}$ and $(y^t)_{t\in[T]}$ be the trajectories of Algorithm \ref{alg:perm_SGD} over datasets $S,S^{\prime}$ respectively, both initialized at $x^1=y^1=0$.
Let now $\tau=\boldsymbol{\pi}^{-1}(1)\sim\Unif[n]$. Arguing as in Thm.~\ref{thm:LB_GD_basic}, we have that $y^t=0$ for all $t$, whereas
$$
x^{t+1} = 
\left\{ 
\begin{array}{ll}
0 & t< \tau\\
-\frac{\eta (1+\lfloor t/n\rfloor) r}{K}-\eta\sum_{s=1}^{t-\tau-(1+\lfloor t/n\rfloor)} e_{s} & \tau\leq t\leq \tau+D.
\end{array}
\right.
$$
Later iterations 
will satisfy $\|x^t\|=1-o(1)$ if $D=1/\eta^2$ (and otherwise the algorithm stops earlier). 
%The upper bounds on $t$ are determined by the dimension (number of subgradients) and the number of steps before $\|x^t\|=1$. Notice that for any $t\geq \tau+\min\{d,1/\eta^2\}$  we have $\|x^t\|=\min\{\eta\sqrt d,1\}$. 
%On the other hand, for $1\leq t\leq n+\min\{d,1/\eta^2\}$:
Therefore, for all $t\in [T]$,
\begin{eqnarray*}
\EE_{\boldsymbol{\pi}}[\|x^t-y^t\|]
&=&\sum_{s=1}^n\EE[\|x^t-y^t\|\,|\,\tau=s]\PP[\tau=s]\\
&\geq&\frac{\eta}{2n}\sum_{s=1}^{\min\{t,n\}} \sqrt{t-s} - \eta\cdot o(1)\\
&=&
\left\{
\begin{array}{ll}
\Omega(\frac{\eta t^{3/2}}{n}) & \mbox{ if } t\leq n\\
\Omega(\eta\sqrt t) &\mbox{ if } t>n. 
\end{array}
\right.
\end{eqnarray*}
%For $t\geq n+\min\{d,1/\eta^2\}$, we have that necessarily $\|x^t\|=1$.
Notice that we used above that $K$ is such that $T\sqrt{D}/nK=o(1)$. 
Analogously as in Thm.~\ref{thm:LB_GD_basic}, we can obtain the same conclusion for $\barx^T$. The lower bound of $\eta T/n$ can be added by \ifthenelse{\boolean{fullver}}{Observation \ref{obs:base_lb}}{Appendix \ref{app:gen_LB_FOM}}, so the result follows.
\end{proof}
}{} 
\section{Generalization Guarantees for Multi-pass SGD}
\label{sec:MultipassSGD}

% One of the most important consequences of uniform stability is that it implies bounds on the generalization error for learning algorithms \cite{BousquettE02}. With this in mind, we provide novel generalization bounds for SGD with arbitrary data reuse. Analog results can be obtained for fixed-permutation SGD and full-batch GD; we defer these results for the Supplementary Material ??.

One important implication of our stability bounds %which is not covered by known results,
is that they provide non-trivial generalization error guarantees for multi-pass SGD on nonsmooth losses. Multi-pass SGD is one of the most extensively used settings of SGD in practice, where SGD is run for $K$ passes (epochs) over the dataset (namely, the number of iterations $T=K n$). To the best of our knowledge, aside from the dimension-dependent bounds based on uniform convergence \citep{ShwartzSSS10}, no generalization error guarantees are known for the multi-pass setting on general nonsmooth convex losses. Given our uniform stability upper bounds, we can prove the following generalization error guarantees for the multi-pass setting of sampling-with-replacement SGD. Analogous results can be obtained for fixed-permutation SGD \iffull. \else (using our stability bounds for the latter in Sec.~3 of the supplementary material).\fi

\begin{thm} \label{thm:gen_bds_replSGD}
Running Algorithm \ref{alg:repl_SGD} for $K$ passes (i.e., for $T=K n$ iterations) with constant stepsize $\eta_t=\eta>0$ yields the following generalization error guarantees:
$$ |\EE_{\A_{\sf rSGD}} [\generr(\A_{\sf rSGD})]| \leq 4L^2 \eta\left( \sqrt{Kn}  + K\right), $$
% $$ |\EE_{\A_{\sf rSGD}} [F_{\cal D}(x^t)-F_{\bS}(x^t)]| \leq L\min\left(2R,~ 4L\left(\sqrt{\sum_{t=1}^{T-1}\eta_t^2}  +\frac{1}{n}\sum_{t=1}^{T-1}\eta_t\right)\right), $$
and there exists $c>0$, such that for any $0<\theta<1$, with probability~$\geq 1-\theta-\exp(-n/2)$,
$$ |\generr(\A_{\sf rSGD})|
\leq c ~\Bigg( L^2 \eta\left(\sqrt{K n}+ K \right) \log(n) \log(n/\theta) +LR\sqrt{\frac{\log(1/\theta)}{n}}~\Bigg). $$
\end{thm}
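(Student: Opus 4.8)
The plan is to derive both bounds as essentially direct corollaries of our UAS upper bound for sampling-with-replacement SGD (Theorem~\ref{thm:UB_repl_SGD}), combined with the two stability-to-generalization statements already available: the elementary in-expectation bound ($\EE_{\bS}[F_{\bS}(\A(\bS))-F_{\D}(\A(\bS))]\leq \sup_{S\simeq S'}\delta_{\A}(S,S')$ up to the Lipschitz factor, from \citep{BousquettE02}) and the high-probability Theorem~\ref{thm:stab-to-gen}.

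First I would handle the in-expectation claim. Setting $T=Kn$ and $\eta_t\equiv\eta$ in the expectation part of Theorem~\ref{thm:UB_repl_SGD}, and using $\sqrt{\sum_{t=1}^{T-1}\eta_t^2}=\eta\sqrt{T-1}\leq \eta\sqrt{Kn}$ together with $\tfrac1n\sum_{t=1}^{T-1}\eta_t=\tfrac{\eta(T-1)}{n}\leq \eta K$, one gets $\sup_{S\simeq S'}\EE[\delta_{\A_{\sf rSGD}}(S,S')]\leq 4L(\eta\sqrt{Kn}+\eta K)$. Since every $f(\cdot,z)$ is $L$-Lipschitz, $f(\A(S),z)-f(\A(S'),z)\leq L\,\delta_{\A}(S,S')$, so $\A_{\sf rSGD}$ is $\gamma$-uniformly stable with $\gamma\leq 4L^2\eta(\sqrt{Kn}+K)$; the standard fact that uniform stability upper-bounds the expected generalization gap then yields $|\EE_{\A_{\sf rSGD}}[\generr(\A_{\sf rSGD})]|\leq \gamma\leq 4L^2\eta(\sqrt{Kn}+K)$.

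Next I would handle the high-probability claim. Apply the high-probability part of Theorem~\ref{thm:UB_repl_SGD} (valid precisely for constant step size) with $T=Kn$: for every pair $S\simeq S'$, with probability at least $1-\exp(-n/2)$ over the internal randomness, $\delta_{\A_{\sf rSGD}}(S,S')\leq 4L(\eta\sqrt{T-1}+\eta(T-1)/n)\leq 4L(\eta\sqrt{Kn}+\eta K)=:\gamma$. Thus the hypothesis of Theorem~\ref{thm:stab-to-gen} holds with this $\gamma$ and $\theta_0=\exp(-n/2)$. Plugging into its conclusion and using $L\gamma=4L^2\eta(\sqrt{Kn}+K)$, we obtain that with probability at least $1-\theta-\exp(-n/2)$,
\[
|\generr(\A_{\sf rSGD})|\leq c'\left(L^2\eta(\sqrt{Kn}+K)\log(n)\log(n/\theta)+LR\sqrt{\tfrac{\log(1/\theta)}{n}}\right),
\]
after absorbing the absolute constant $4$ into $c'$, which is exactly the stated bound.

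I do not expect any real obstacle: the argument is a substitution into theorems already proved. The only points needing a little care are the bookkeeping $T-1\leq Kn$ so the two UAS terms simplify to $\eta\sqrt{Kn}$ and $\eta K$ (and the $\min\{2R,\cdot\}$ may simply be dropped, as the stated quantity is an upper bound regardless), and being explicit about the extra factor of $L$ incurred when passing from UAS to uniform stability and hence into both generalization statements.
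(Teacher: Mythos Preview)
Your proposal is correct and follows essentially the same approach as the paper: both parts are obtained by plugging the UAS bounds of Theorem~\ref{thm:UB_repl_SGD} (with $T=Kn$, constant step size) into, respectively, the standard in-expectation stability-implies-generalization fact and the high-probability Theorem~\ref{thm:stab-to-gen}. The only cosmetic difference is that the paper attributes the in-expectation step to \cite[Thm.~2.2]{HardtRS16} rather than \citep{BousquettE02}, which is the same result.
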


% \begin{thm} \label{thm:gen_bds_replSGD}
% Let ${\cal X}\subseteq {\cal B}(0,R)$ and ${\cal F}={\cal F}_{\cal X}^0(L)$. Running Algorithm \ref{alg:repl_SGD} $K$ passes (i.e., for $T=K n$ iterations) with constant stepsize $\eta_t=\eta>0$ yields the following generalization error guarantees:
% $$ |\EE_{\A_{\sf rSGD}} [\generr(\A_{\sf rSGD})]| \leq 2L\min\left(R,~ 2L\left(\eta \sqrt{Kn}  +\eta K\right)\right), $$
% % $$ |\EE_{\A_{\sf rSGD}} [F_{\cal D}(x^t)-F_{\bS}(x^t)]| \leq L\min\left(2R,~ 4L\left(\sqrt{\sum_{t=1}^{T-1}\eta_t^2}  +\frac{1}{n}\sum_{t=1}^{T-1}\eta_t\right)\right), $$
% and there exists $c>0$, such that for any $0<\theta<1$, with probability~$\geq 1-\theta-\exp(-n/2)$,
% $$ |\generr(\A_{\sf rSGD})|
% \leq c \Bigg( L\min\left( R, ~\eta L\sqrt{K n}+\eta K \right) \log(n) \log(n/\theta) +LR\sqrt{\frac{\log(1/\theta)}{n}}\Bigg). $$
% \end{thm}

\begin{proof}

First, by the expectation guarantee on UAS given in Theorem~\iffull\ref{thm:UB_repl_SGD} \else \ref{thm:sharp_repl_SGD} \fi together with the fact that the losses are $L$-Lipschitz, it follows that Algorithm~\ref{alg:repl_SGD} (when run for $K$ passes with constant stepsize $\eta$) is $\gamma$-uniformly stable, where $\gamma = 4L^2\left(\eta \sqrt{Kn}  +\eta K\right)$. Then, by \cite[Thm.~2.2]{HardtRS16}, we have
$$ |\EE_{\A_{\sf rSGD}}[\generr(\A_{\sf rSGD})]| \leq \gamma. $$
For the high-probability bound, we combine the high-probability guarantee on UAS given in Theorem~\iffull\ref{thm:UB_repl_SGD} \else \ref{thm:sharp_repl_SGD} \fi with Theorem~\ref{thm:stab-to-gen} to get the claimed bound.
% First, by the expectation guarantee on UAS given in Theorem~\ref{thm:UB_repl_SGD} together with the fact that the losses are $L$-Lipschitz, we  $\overline{x}^T$ is $\delta_T$-UAS, with $\delta_T\leq 4L(\sqrt{\eta }+\sum_{t=1}^{T-1}\eta_t/n)$, which implies that the $\overline{x}^T$ is $L\delta_T$-uniformly stable. Then, by \cite[Thm.~2.2]{HardtRS16}, we have
% $$ |\EE_{\A_{\sf rSGD}}[\generr(\A_{\sf rSGD}])| \leq L\delta_T $$
% For the high-probability result, we use that by Thm.~\iffull{\ref{thm:UB_repl_SGD}}\else{\ref{thm:sharp_repl_SGD}}\fi, w.p.~$1-\exp(-n/2)$, the UAS random variable is bounded as in \eqref{eqn:HPUB_permSGD}. By Theorem \ref{thm:stab-to-gen}, we have that w.p.~$1-\theta-\exp(-n/2)$
% $$ |\generr(\A_{\sf rSGD}])|
% \leq c \Big( L\min\left\{ 2R, 2L\sqrt{T-1}\eta+\eta (T-1)/n \right\} \log(n) \log(n/\theta) +LR\sqrt{\frac{\log(1/\theta)}{n}}\Big).$$
\end{proof}

These bounds on generalization error can be used to obtain excess risk bounds using the standard risk decomposition (see \eqref{eqn:risk_decomp}). In practical scenarios where one can certify small optimization error for multi-pass SGD, Thm.~\ref{thm:gen_bds_replSGD} can be used to readily estimate the excess risk. In Section~6.2 \iffull \else of the supplementary material \fi we provide worst-case analysis showing that multi-pass SGD is guaranteed to attain the optimal excess risk of $\approx LR/\sqrt n$ within $n$ passes (with appropriately chosen constant stepsize).

% These bounds on generalization error can be used to obtain excess risk bounds by the excess risk decomposition \eqref{eqn:risk_decomp}. For example, it can be proved that running SGD for $T=n^2$ iterations and stepsize $O(R\eta/[Ln^{3/2}])$ leads to optimal excess risk $O(LR/\sqrt n)$ (see Sec ?? of the Supplementary Material for details). However, the power of this result goes beyond that. In settings where one can certify small optimization error, Thm.~\ref{thm:gen_bds_replSGD} can be used to readily estimate the excess risk.

\iffull
\section{Implications of Our Stability Bounds}
\else
\section{A Simple Algorithm for Differentially Private Nonsmooth Stochastic Convex Optimization with Optimal Risk}
\fi
\label{sec:Applications}

\iffull
\subsection{Differentially Private Nonsmooth Stochastic Convex Optimization}
\fi

Now we show an application of our stability upper bound to  \emph{differentially private} stochastic convex optimization (DP-SCO). %In this problem,
Here, the input sample to the stochastic convex optimization algorithm is a sensitive and private data set, thus the algorithm is required to satisfy the notion of $(\alpha, \beta)$-differential privacy.
%\paragraph{Differential privacy \citep{DMNS06, DKMMN06}:}
A randomized  algorithm $\A$ is $(\alpha,\beta)$-differentially private if, for any
pair of datasets $S\simeq S'$, and for all events $\cO$ in the output range of $\A$, we have
\ifthenelse{\boolean{fullver}}{$$\pr{}{\A(S)\in \cO} \leq e^{\alpha} \cdot \pr{}{\A(S')\in \cO} +\beta ,$$}{
$\pr{}{\A(S)\in \cO} \leq e^{\alpha} \cdot \pr{}{\A(S')\in \cO} +\beta ,$
}
where the probability is taken over the random coins of $\A$ \citep{DMNS06, DKMMN06}. For meaningful privacy guarantees, the typical settings of the privacy parameters are $\alpha<1$ and $\beta \ll 1/n$.

% A recent work by Bassily et al. \cite{BFTT19} establishes the optimal rate of the excess population risk attained by the class of DP-SCO algorithms. In particular, it was shown that the optimal rate is $\approx \max\left(\frac{1}{\sqrt{n}}, \frac{\sqrt{d}}{n}\right),$ where $d$ is the of dimension the problem and $n$ is the sample size. When the convex loss function is non-smooth, \cite{BFTT19} gives a DP-SCO algorithm with optimal rate that requires $\approx n^{4.5}$ gradient computations. Their algorithm for the problem is similar to mini-batch stochastic gradient descent but the update steps are carried out via the proximal operator instead of gradient steps, which is known to be equivalent to gradient steps with respect to the loss function smoothed via the Moreau-Yosida envelope.

% A more recent work by Feldman et al. \cite{FKT19} gives new constructions for DP-SCO with improved running time. Specifically, for the case of non-smooth losses, they give an algorithm that attains the optimal excess population risk but requires only $O(n^2\sqrt{\log(1/\beta)})$ gradient computations. Their algorithm is a multi-phase stochastic subgradient descent, where in each phase SGD is used to solve a different regularized ERM problem. The output of each phase is then perturbed with a certain amount of noise before it is used as an initialization to the next phase. The number of phases is $\approx \log(n)$. The setting of the learning rate, the batch size, and the noise scale all vary from one phase to another.

%\paragraph{Simple algorithm with $O(n^2)$ time:}
Using our UAS upper bounds, we show that a simple variant of noisy SGD \citep{BST14}, that requires only $n^2$ gradient computations, yields the optimal excess population risk for DP-SCO. In terms of running time, this is a small improvement over the algorithm of \cite{FKT19} for the nonsmooth case, which requires $O(n^2\sqrt{\log 1/\beta})$ gradient computations. More importantly, our algorithm is substantially simpler. For comparison, the algorithm in \citep{FKT19} is based on a multi-phase SGD, where in each phase a separate regularized ERM problem is solved. To ensure privacy, the output of each phase is perturbed with an appropriately chosen amount of noise before being used as the initial point for the next phase.

The description of the algorithm is given in Algorithm~\ref{Alg:NSGD}.
\begin{algorithm}
	\caption{$\A_{\sf NSGD}$: Noisy SGD for convex losses}
	\begin{algorithmic}[1]
		\REQUIRE Private dataset $S=(z_1, \ldots, z_n)\in \Z^n$, %\,;~ a convex, $L$-Lipschitz loss function $f$\,; ~ a convex set $\X\subseteq \B(0, R)$\,;
		step size $\eta$;~ privacy parameters $\alpha \leq 1, ~\beta \ll 1/n$
				\STATE Set noise variance $\sigma^2 := \frac{8\,L^2\,\log(1/\beta)}{\alpha^2}$
				\STATE Choose an arbitrary initial point $x^1 \in \X$
        \FOR{$t=1$ to $n^2-1$\,}
        	\STATE Sample        	$\bI_t\sim\mbox{Unif}([n])$ \label{step:sampling}
        	%$z_{i_t}\leftarrow S$ uniformly with replacement
        	\STATE $x^{t+1} := \proj_{\X}\left(x^{t}-\eta\cdot\left(\nabla\ell(x^{t}, z_{\bI_t})+\bG_t\right)\right),$ where $\bG_t \sim \N\left(\bzero, \sigma^2 \mathbb{I}_d\right)$ drawn independently each iteration\label{step:grad-step}
            \ENDFOR
            \RETURN $\barx=\frac{1}{n^2}\sum_{t=1}^{n^2} x^t$
	\end{algorithmic}
	\label{Alg:NSGD}
\end{algorithm}

\iffull
\else
We state the guarantees of Algorithm~\ref{Alg:NSGD} below. 
\fi

\begin{thm}[Privacy guarantee of $\A_{\sf NSGD}$]\label{thm:priv_Ansgd}
Algorithm~\ref{Alg:NSGD} is $(\alpha, \beta)$-differentially private.
\end{thm}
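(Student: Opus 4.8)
The plan is to follow the now-standard privacy analysis of noisy SGD introduced for this type of algorithm by \citet{BST14}: express $\A_{\sf NSGD}$ as an adaptive composition of $T=n^2-1$ one-step mechanisms, bound the privacy of each step as a \emph{subsampled} Gaussian mechanism, and then compose, showing that the iteration count $n^2$ and the noise level $\sigma^2 = 8L^2\log(1/\beta)/\alpha^2$ are calibrated so that the composed guarantee is exactly $(\alpha,\beta)$-DP.

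First I would reduce to controlling the privacy of the iterates. The output $\barx$ is a deterministic function of the trajectory $(x^1,\dots,x^{n^2})$, so by post-processing it suffices to show that the map $S\mapsto(x^1,\dots,x^{n^2})$ is $(\alpha,\beta)$-DP. Writing this map as the adaptive composition of mechanisms $\mathcal M_1,\dots,\mathcal M_T$, the mechanism $\mathcal M_t$ receives the private dataset $S$ together with the already-released prefix $(x^1,\dots,x^t)$ — which fixes the iterate $x^t$ — and returns $x^{t+1}=\proj_\X(x^t-\eta(\nabla\ell(x^t,z_{\bI_t})+\bG_t))$. Since $x^t$ is fixed given the prefix, the map $v\mapsto\proj_\X(x^t-\eta(v+\bG_t))$ is a data-independent post-processing, so the privacy of $\mathcal M_t$ reduces to that of releasing the single noisy gradient $\nabla\ell(x^t,z_{\bI_t})+\bG_t$.

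Next I would identify each step as a subsampled Gaussian mechanism and invoke a tight composition bound. Fix neighboring $S\simeq S'$ differing only at index $i$. Conditioned on $\bI_t=j$: if $j\neq i$ the vector $\nabla\ell(x^t,z_{\bI_t})$ is identical under $S$ and $S'$, so the outputs are identically distributed; if $j=i$, the value changes by at most $\|\nabla\ell(x^t,z_i)\|+\|\nabla\ell(x^t,z_i')\|\le 2L$ since the losses are $L$-Lipschitz. As $\bI_t\sim\Unif([n])$, $\mathcal M_t$ is exactly a Gaussian mechanism of $\ell_2$-sensitivity $2L$ and noise $\sigma$, subsampled at rate $q=1/n$ (the differing point is touched with probability $1/n$), and the $T=n^2-1$ steps use fresh, independent sampling and noise. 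Applying the tight R\'enyi-DP / moments-accountant bound for the subsampled Gaussian mechanism and composing over the $T$ steps, the accumulated privacy loss is controlled by the product $q^2T\le 1$; plugging $q^2T\le 1$, sensitivity $2L$, and $\sigma^2=8L^2\log(1/\beta)/\alpha^2$ into the resulting $(\eps,\delta)$-conversion yields precisely $\eps\le\alpha$ at $\delta=\beta$. A final appeal to post-processing for the average $\barx$ completes the proof.

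The main obstacle is this last quantitative step. Naive accounting is too lossy: basic composition blows $\eps$ up linearly in $T=n^2$, and even the advanced (strong) composition theorem combined with subsampling amplification would require $\sigma^2=\Omega(L^2\log^2(1/\beta)/\alpha^2)$, an extra factor of $\log(1/\beta)$. Reaching the stated noise level $\sigma^2=8L^2\log(1/\beta)/\alpha^2$ over $n^2$ iterations genuinely requires the tighter moments-accountant / R\'enyi-DP treatment of the subsampled Gaussian mechanism, together with a small amount of care that sampling one index per step (with replacement) realizes the amplification rate $q=1/n$ used there. The remaining ingredients — the reduction to releasing noisy gradients via post-processing, the $2L$ sensitivity bound, and handling the final averaging — are routine.
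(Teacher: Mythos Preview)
Your proposal is correct and takes essentially the same approach as the paper: the paper's proof is just the one-line remark that the argument follows \citet{BST14} with the Gaussian-mechanism analysis replaced by the moments accountant of \citet{abadi2016deep}, and you have spelled out precisely that route (post-processing reduction to the noisy-gradient sequence, $2L$ sensitivity, subsampling at rate $1/n$, and moments-accountant composition over $n^2$ steps). Your observation that advanced composition alone would cost an extra $\log(1/\beta)$ factor, and that the moments accountant is genuinely needed to match the stated $\sigma^2$, is exactly the point of the paper's substitution.
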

\iffull
\noindent The proof of the theorem follows the same lines of \cite[Theorem~2.1]{BST14}, but we replace their privacy analysis of the Gaussian mechanism with the
 tighter Moments Accountant method of \cite{abadi2016deep}.
 analysis of \cite{abadi2016deep}.\fi

\begin{thm}[\iffull Excess risk \else Risk \fi of $\Ansgd$]\label{thm:pop_risk_Ansgd}
In Algorithm~\ref{Alg:NSGD}, let $\eta= R/\Big(L\cdot n\cdot\max\big(\sqrt{n},~ \frac{\sqrt{d\,\log(1/\beta)}}{\alpha}\big)\Big)$.
Then, for any $\theta \in (6\exp(-n/2), 1)$, with probability at least $1-\theta$ over the randomness in both the sample and the algorithm, we have
\begin{align*}
  \risk\left(\A_{\sf NSGD}\right)&= RL\cdot O\left(\max\left(\frac{\log(n)\log(n/\theta)}{\sqrt{n}}, ~\frac{\sqrt{d\,\log(1/\beta)}}{\alpha\, n}\right)\right)
\end{align*}
\end{thm}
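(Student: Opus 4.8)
The plan is to control the excess population risk of $\A_{\sf NSGD}$ through the decomposition~\eqref{eqn:risk_decomp}, $\risk(\A_{\sf NSGD}) \le \generr(\A_{\sf NSGD}) + \opterr(\A_{\sf NSGD}) + \approxerr$, bounding each of the three terms with failure probability a constant fraction of $\theta$ and then union bounding. Privacy is already in hand from Theorem~\ref{thm:priv_Ansgd}, so only the update rule of the algorithm is needed here. The approximation term is handled directly by Lemma~\ref{lem:approx-err-bd}: $\approxerr = O\big(RL\sqrt{\log(1/\theta)/n}\big)$, already below the first term of the target bound.

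For the generalization term, the key point is that isotropic Gaussian perturbations do not degrade \stabname. Run $\A_{\sf NSGD}$ on neighbouring $S \simeq S'$ with a common realization of the internal coins, i.e.\ the same index sequence $(\bI_t)_t$ and the same noise vectors $(\bG_t)_t$. Along the coupled trajectories the noise enters only through $\nabla f(\cdot,z_{\bI_t}) + \bG_t$ against $\nabla f(\cdot,z'_{\bI_t}) + \bG_t$, whose difference $\nabla f(\cdot,z_{\bI_t}) - \nabla f(\cdot,z'_{\bI_t})$ is noise-free and at most $2L$ in norm; so the recurrence for $\delta_t = \|x^t - y^t\|$ in the proof of Lemma~\ref{lem:main_lem} is exactly the noiseless one, and the high-probability \UHS bound for sampling-with-replacement SGD (Theorem~\ref{thm:UB_repl_SGD}) transfers verbatim with $T = n^2$: with probability $\ge 1 - \exp(-n/2)$ over the coins, $\delta_{\A_{\sf NSGD}}(S,S') \le 4L(\eta\sqrt{T-1} + \eta(T-1)/n) = O(L\eta n)$. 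Plugging $\gamma = O(L\eta n)$ into Theorem~\ref{thm:stab-to-gen} gives, with probability $\ge 1 - \theta_0 - \exp(-n/2)$, $|\generr(\A_{\sf NSGD})| = O\big(L^2\eta n\log(n)\log(n/\theta_0) + LR\sqrt{\log(1/\theta_0)/n}\big)$; this is exactly why the statement restricts $\theta > 6\exp(-n/2)$, so that the $\exp(-n/2)$ event fits the budget.

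The crux is the optimization term: a high-probability bound on $\opterr(\A_{\sf NSGD}) = F_{\bS}(\barx) - F_{\bS}(x^\ast(\bS))$ for projected SGD driven by the noisy unbiased stochastic subgradient $g_t := \nabla f(x^t,z_{\bI_t}) + \bG_t$, which satisfies $\EE[g_t \mid \F_{t-1}] = \tfrac1n\sum_i\nabla f(x^t,z_i) \in \partial F_{\bS}(x^t)$ and $\EE[\|g_t\|^2 \mid \F_{t-1}] \le L^2 + \sigma^2 d$. Starting from the one-step inequality $\|x^{t+1} - u\|^2 \le \|x^t - u\|^2 - 2\eta\langle g_t, x^t - u\rangle + \eta^2\|g_t\|^2$ at $u = x^\ast(\bS)$, telescoping, invoking convexity of $F_{\bS}$ and Jensen for $\barx$, one arrives at
$$ \opterr(\A_{\sf NSGD}) \le \frac{\|x^1 - x^\ast(\bS)\|^2}{2\eta T} + \frac{\eta}{2T}\sum_{t=1}^{T}\|g_t\|^2 + \frac{1}{T}\sum_{t=1}^{T}\big\langle \nabla F_{\bS}(x^t) - g_t,\ x^t - x^\ast(\bS)\big\rangle . $$
The two stochastic sums are then controlled with high probability: $\|g_t\|^2 \le 2L^2 + 2\|\bG_t\|^2$ with $\sum_t\|\bG_t\|^2 = O(\sigma^2 dT)$ (via $\chi^2$-concentration, using $dT \ge n^2$); and the martingale $\sum_t\langle \nabla F_{\bS}(x^t) - g_t, x^t - x^\ast(\bS)\rangle = M_1 + M_2$ with $M_1$ having bounded increments ($\le 4LR$, so $|M_1| = O(LR\sqrt{T\log(1/\theta_1)})$ by Azuma) and $M_2 = -\sum_t\langle \bG_t, x^t - x^\ast(\bS)\rangle$ having conditionally-Gaussian increments of variance $\le 4\sigma^2 R^2$, hence sub-Gaussian with $|M_2| = O(\sigma R\sqrt{T\log(1/\theta_1)})$. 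This gives, with probability $1 - O(\theta_1) - e^{-\Omega(n)}$,
$$ \opterr(\A_{\sf NSGD}) = O\!\left( \frac{R^2}{\eta T} + \eta L^2 + \eta\sigma^2 d + (L+\sigma)R\sqrt{\tfrac{\log(1/\theta_1)}{T}} \right) . $$

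Finally, substitute $T = n^2$, $\sigma^2 = 8L^2\log(1/\beta)/\alpha^2$ and $\eta = R/(LnM)$ with $M = \max\big(\sqrt n,\sqrt{d\log(1/\beta)}/\alpha\big)$, and verify term by term against $RL\cdot O\big(\max(\log(n)\log(n/\theta)/\sqrt n,\ \sqrt{d\log(1/\beta)}/(\alpha n))\big)$: the generalization term is $(LR/M)\log(n)\log(n/\theta) \le (LR/\sqrt n)\log(n)\log(n/\theta)$; $R^2/(\eta T) = LRM/n = LR\max(1/\sqrt n,\sqrt{d\log(1/\beta)}/(\alpha n))$; $\eta\sigma^2 d \le 8LR\sqrt{d\log(1/\beta)}/(\alpha n)$ since $M \ge \sqrt{d\log(1/\beta)}/\alpha$; and $\eta L^2$ and the martingale residual $(L+\sigma)R\sqrt{\log(1/\theta)/T}$ are checked to fit within the stated bound using $T = n^2$ and $\log(1/\theta) = O(n)$. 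A union bound over the $O(1)$ failure events, splitting $\theta$ and absorbing $\exp(-n/2)$, finishes the proof. I expect the main obstacle to be this last optimization step -- in particular controlling the Gaussian-noise martingale $\sum_t\langle \bG_t, x^t - x^\ast(\bS)\rangle$ and the variance proxy $\sum_t\|\bG_t\|^2$ so that everything is absorbed into $R^2/(\eta T) + \eta\sigma^2 d$; this is precisely what forces $T = n^2$ iterations (not $T=n$) and the restricted range of $\theta$.
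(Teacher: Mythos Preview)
Your proposal is correct and follows essentially the same route as the paper: risk decomposition~\eqref{eqn:risk_decomp}, the observation that shared Gaussian noise cancels in the coupling so the \UHS bound of Theorem~\ref{thm:UB_repl_SGD} applies unchanged, Theorem~\ref{thm:stab-to-gen} for $\generr$, Lemma~\ref{lem:approx-err-bd} for $\approxerr$, and a high-probability SGD-with-noisy-oracle bound for $\opterr$. The only cosmetic difference is in the optimization step: the paper packages that bound as Lemma~\ref{lem:online_to_batch}, decomposing through $f(x^t,\xi_t)$ and using Azuma/Hoeffding plus sub-Gaussian/sub-exponential tails, whereas you telescope directly and split the martingale $\sum_t\langle \nabla F_{\bS}(x^t)-g_t,\,x^t-x^\ast(\bS)\rangle$ into a bounded-increment part and a conditionally-Gaussian part; both arguments are standard and give the same final estimate $O\big(R^2/(\eta T)+\eta L^2+\eta\sigma^2 d+(L+\sigma)R\sqrt{\log(1/\theta)/T}\big)$.
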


\iffull
\begin{proof}
Fix any confidence parameter $\theta\geq 6\exp(-n/2)$. First, for any data set $S\in \Z^n$ and any step size $\eta > 0,$ by Lemma~\ref{lem:online_to_batch} in Appendix~\ref{app:hp-sgd-opt-err}, we have the following high-probability guarantee on the training error of $\Ansgd$: 

\noindent With probability at least $1-\theta/3,$ we have
\begin{align*}
    \eps_{\mbox{\footnotesize opt}}(\Ansgd)\triangleq F_S(\barx)-\min\limits_{x\in\X}F_S(x)&\leq \frac{R^2}{\,\eta\, n^2} +7RL\frac{\sqrt{\log(1/\beta)\log(12/\theta)}}{\alpha n} + \eta\,L^2 \Big(32\frac{d\,\log(1/\beta)}{\alpha^2}+1\Big)
\end{align*}
where the probability is over the sampling in step~\ref{step:sampling} and the independent Gaussian noise vectors $\bG_1, \ldots, \bG_{n^2}$. Given the setting of $\eta$ in the theorem, we get
\begin{align}
   \eps_{\mbox{\footnotesize opt}}(\Ansgd)&\leq 8 R L\max\Big(\frac{1}{\sqrt{n}},~\frac{\sqrt{d\,\log(1/\beta)}}{\alpha\,n}\Big) + 33 R L\,\frac{d\, \frac{\log(1/\beta)}{\alpha^2}}{n\cdot \max\Big(\sqrt{n},~\frac{\sqrt{d\,\log(1/\beta)}}{\alpha}\Big)}\nonumber \\
   &\leq 8 R L\max\Big(\frac{1}{\sqrt{n}},~\frac{\sqrt{d\,\log(1/\beta)}}{\alpha\,n}\Big) + 33 R L\, \frac{\sqrt{d\,\log(1/\beta)}}{n\,\alpha}\nonumber\\
   &= R L\cdot O\Bigg(\max\Big(\frac{1}{\sqrt{n}},~\frac{\sqrt{d\,\log(1/\beta)}}{n\,\alpha}\Big)\Bigg).\label{eqn:Ansgd-opt-err}
%   &\leq 66\,R L\, \max\Big(\frac{1}{\sqrt{n}},~\frac{\sqrt{d\,\log(1/\beta)}\log(1/\theta)}{n\,\alpha}\Big).\label{eqn:Ansgd-opt-err}
\end{align}

Next, it is not hard to show that $\Ansgd$ attains the same UAS bound as $\A_{\sf rSGD}$ (Theorem~\ref{thm:UB_repl_SGD}). Indeed, the only difference is the noise addition in gradient step; however, this does not impact the stability analysis. This is because the sequence of noise vectors $\{\bG_1, \ldots, \bG_{n^2}\}$ is the same for the trajectories corresponding to the pair $S, ~S'$ of neighboring datasets. Hence, the argument basically follows the same lines of the proof of Theorem~\ref{thm:UB_repl_SGD} since the noise terms cancel out. Thus, we conclude that for any pair $S\simeq S'$ of neighboring datasets, with probability at least $1-\exp(n/2)\geq 1-\theta/6$ (over the randomness of $\Ansgd$), the uniform argument stability of $\Ansgd$ is bounded as: $\delta_{\Ansgd}\leq 4L\eta\left(\sqrt{T} + \frac{T}{n}\right),$ where $T=n^2$. Given the setting of $\eta$ in the theorem, this bound reduces to $8 R /\max\big(\sqrt{n},~ \frac{\sqrt{d\,\log(1/\beta)}}{\alpha}\big)$. 

Hence, by Theorem~\ref{thm:stab-to-gen}, with probability at least $1-\theta/3$ (over the randomness in both the i.i.d. dataset $S$ and the algorithm), the generalization error of $\Ansgd$ is bounded as
\begin{align}
    \eps_{\mbox{\footnotesize gen}}(\A_{\sf NSGD})&\leq \frac{8 c\, R L\,\log(n)\log(6n/\theta)}{\max\Big(\sqrt{n},~ \frac{\sqrt{d\,\log(1/\beta)}}{\alpha}\Big)}+\frac{c\,\sqrt{\log(6/\theta)}}{\sqrt{n}}%\nonumber\\
    %&
    = RL \cdot O\left(\frac{\log(n)\log(n/\theta)}{\sqrt{n}}\right),\label{eqn:Ansgd-gen-err}
    % \leq \frac{10 R L\,\log(n)\log(n/\theta)}{\sqrt{n}}.\label{eqn:Ansgd-gen-err}
\end{align}
where $c$ in the first bound is a universal constant. 

Now, using (\ref{eqn:Ansgd-opt-err}), (\ref{eqn:Ansgd-gen-err}), and Lemma~\ref{lem:approx-err-bd}, we finally conlcude that with probability at least $1-\theta$ (over randomness in the sample $S$ and the internal randomness of $\Ansgd$), the excess population risk of $\Ansgd$ is bounded as
\begin{align*}
    \risk(\Ansgd)&\leq \eps_{\mbox{\footnotesize opt}}(\Ansgd) + \eps_{\mbox{\footnotesize gen}}(\A_{\sf NSGD})+\varepsilon_{\mbox{\tiny approx}}\\
    &= R L\cdot O\Bigg(\max\Big(\frac{1}{\sqrt{n}},~\frac{\sqrt{d\,\log(1/\beta)}}{\alpha\, n}\Big) + \frac{\log(n)\log(n/\theta)}{\sqrt{n}}+ \frac{\sqrt{\log(1/\theta)}}{\sqrt{n}}\Bigg)\\
    &= RL\cdot O\Bigg(\max\left(\frac{\log(n)\log(n/\theta)}{\sqrt{n}}, ~\frac{\sqrt{d\,\log(1/\beta)}}{\alpha\, n}\right)\Bigg),
    % &\leq R L\, \max\Big(\frac{1}{\sqrt{n}},~\frac{\sqrt{d\,\log(1/\beta)}\log(1/\theta)}{\alpha\, n}\Big) + \frac{10 R L\,\log(n)\log(n/\theta)}{\sqrt{n}}+ \frac{R L\,\sqrt{2\log(3/\theta)}}{\sqrt{n}}\\
    % &\leq 78\,RL\,\max\left(\frac{\log(n)\log(n/\theta)}{\sqrt{n}}, ~\frac{\sqrt{d\,\log(1/\beta)}\log(1/\theta)}{\alpha\, n}\right)
\end{align*}
which completes the proof.
\end{proof}
\else
The proof of the privacy guarantee follows similar lines of argument to that of \cite{BST14, abadi2016deep}. As for the risk guarantee, the proof outline is as follows. We first use standard online-to-batch conversion technique to provide a high-probability bound on $\eps_{\mbox{\footnotesize opt}}(\Ansgd)$ (excess empirical error of $\Ansgd$). We next observe that our high-probability upper bound on UAS in Theorem~\ref{thm:sharp_repl_SGD} applies directly to $\Ansgd$ since noise addition does not impact the stability analysis. By Theorem~\ref{thm:stab-to-gen}, this implies a high-probability bound on the generalization error $ \eps_{\mbox{\footnotesize gen}}(\Ansgd)$. Using the standard risk decomposition (see eq. (\ref{eqn:risk_decomp})), we get a bound on the excess population risk. Optimizing this bound in $\eta$ yields the claimed bound value in Theorem~\ref{thm:pop_risk_Ansgd} (for the value of $\eta$ in the theorem statement). 

Full proofs of the theorems above are deferred to the supplementary material (see Section~6.1 therein).
\fi

\begin{rem}\label{rem:exp_risk_noisy_SGD}
Using the expectation guarantee on UAS given in \iffull Theorem~\ref{thm:UB_repl_SGD} \else Theorem~\ref{thm:sharp_repl_SGD} \fi and following similar steps of the analysis above, we can also show that the expected excess population risk of $\Ansgd$ is bounded as: 
$$\ex{}{\risk\left(\Ansgd\right)}= RL\cdot O\left(\max\left(\frac{1}{\sqrt{n}}, ~\frac{\sqrt{d\,\log(1/\beta)}}{\alpha\, n}\right)\right).$$
\end{rem}

\iffull
% \subsection{Generalization in Stochastic Optimization with Multi-pass SGD}\label{sec:data-resue}
\subsection{Nonsmooth Stochastic Convex Optimization with Multi-pass SGD}\label{sec:data-resue}

Another application of our results concerns obtaining optimal excess risk for stochastic nonsmooth convex optimization via multi-pass SGD. It is known that one-pass SGD is guaranteed to have optimal excess risk, which can be shown via martingale arguments that trace back to the stochastic approximation literature \citep{Robbins:1951,Kiefer:1952}. %Another application of our results concerns generalization in stochastic nonsmooth convex optimization. The generalization ability of the stochastic gradient descent for a single pass over the data can be guaranteed by martingale arguments, which traces back to the stochastic approximation literature \citep{Robbins:1951,Kiefer:1952}.

Using our \UHS bound, we show that Algorithms \ref{alg:repl_SGD} and \ref{alg:perm_SGD} can recover nearly-optimal high-probability excess risk bounds by making $n$ passes over the data.
Analogous bounds hold for Algorithm \ref{alg:batchGD}, however these are less interesting from a computational efficiency perspective.

% Using our \UHS bound, we can 
% recover nearly-optimal high-probability excess risk bounds 
% %guarantee {\color{red}nearly optimal?} 
% %{\color{magenta}I don't think we should say ``nearly'' (despite the log factors); this may give the wrong impression that the gap comes from the stability analysis. These log factors also exist in the standard h.p. bounds. Perhaps, we should say the ``same standard generalization-error guarantees'' or something like that.} {\color{blue}Better? We should still say something, since the $log n$ factors are not typical, and come from the uniform stability.}{\color{magenta}Ok, agreed. I am fine with using ``nearly optimal''.} 
% by making $n$ passes over the data, 
% %under amounts of data reuse proportional to the data size (i.e., $T=n^2$) 
% for Algorithms \ref{alg:repl_SGD} and \ref{alg:perm_SGD}.
% Analogous bounds hold for Algorithm \ref{alg:batchGD}, however these are less interesting from a computational efficiency perspective.
\ifthenelse{\boolean{fullver}}{
\subsubsection{Risk Bounds for Sampling-with-Replacement Stochastic Gradient Descent}

\begin{thm} \label{thm:gen_repl_SGD}
Let $\X\subseteq\B(0,R)$ and $\F=\F_{\X}^0(L)$. The
sampling with replacement stochastic gradient descent (Algorithm \ref{alg:repl_SGD}) with $T=n^2$ iterations and $\eta=\frac{R}{\,L\,n^{3/2}}$ satisfies for any $12\exp\{-n^2/32\}<\theta<1.$ %that with probability $1-\Theta(\exp(-c n))$ (where $c>0$ is an absolute constant) over the sample $\bS$ and algorithmic randomness
$$\PP \Big[ \varepsilon_{\mbox{\footnotesize{risk}}}(\A_{\sf rSGD}) = O\Big( \frac{cLR}{\sqrt{n}}\log(n)\log(\frac{n}{\theta})\Big) \Big] \leq \theta.$$ 
%{\color{magenta} I think it's better if we write the upper bound more explicitly rather than expressing the probability using the $\Omega()$ notation. Also, the $\Omega()$ inside the probability is little confusing and I don't think the statement is accurate if we follow the formal definition of $f(n) \in \Omega(g(n))$. In particular, $\risk = c RL\log^2(n)/\sqrt{n}$ satisfies the condition inside the probability for *any* constant $c$. This would imply that our upper bound is $o(RL\log^2(n)/\sqrt{n})$ (little `o'), which is not true. Also, I think it's better to give the statement in terms of an arbitrarty confidence parameter $\theta$: I think that would be more consistent with the way high-probability guarantees are presented in ML. I wonder if we could re-write this in a similar format to that of Theorem 5.2? (I think it would be less confusing and will also make the presentation in the paper more uniform).} {\color{blue}I think it'w better now.}
\end{thm}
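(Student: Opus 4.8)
The plan is to bound the excess population risk through the risk decomposition~\eqref{eqn:risk_decomp}, $\risk(\A_{\sf rSGD})\le\generr(\A_{\sf rSGD})+\opterr(\A_{\sf rSGD})+\approxerr$, controlling each of the three terms with high probability for the stated parameters $T=n^2$, $\eta=R/(Ln^{3/2})$, and then taking a union bound over the failure events. The point of this choice is that $L\eta\sqrt{T}=L\eta T/n=R/\sqrt n$, so every quantity appearing in the stability bounds of Section~\ref{sec:UB} is automatically of order $R/\sqrt n$ and the optimization-error terms of order $RL/\sqrt n$; this is the ``sweet spot'' identified in the discussion in Section~\ref{sec:UB}.

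\textbf{Generalization error.} First I would substitute $T=n^2$ and $\eta=R/(Ln^{3/2})$ into the high-probability UAS bound of Theorem~\ref{thm:UB_repl_SGD}: with probability at least $1-\exp(-n/2)$ over the algorithm's internal randomness, $\delta_{\A_{\sf rSGD}}(S,S')\le 4L\big(\eta\sqrt{T-1}+\eta(T-1)/n\big)\le 8R/\sqrt n$, and this already applies to the averaged iterate $\barx^T$ that the algorithm returns. Feeding $\gamma=8R/\sqrt n$ and $\theta_0=\exp(-n/2)$ into the stability-to-generalization transfer of Theorem~\ref{thm:stab-to-gen} then gives, with probability at least $1-\theta-\exp(-n/2)$, $|\generr(\A_{\sf rSGD})|\le c\big(8LR\,n^{-1/2}\log(n)\log(n/\theta)+LR\sqrt{\log(1/\theta)/n}\big)=O\big(cLR\,n^{-1/2}\log(n)\log(n/\theta)\big)$.

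\textbf{Optimization error.} Next I would control $\opterr(\A_{\sf rSGD})=F_{\bS}(\barx^T)-F_{\bS}(x^{\ast}(\bS))$ via a high-probability online-to-batch argument (Lemma~\ref{lem:online_to_batch}). The deterministic regret inequality for projected subgradient descent gives, for every realization of the sampled indices, $\sum_{t=1}^{T}\big(f(x^t,\bz_{\bI_t})-f(x^{\ast}(\bS),\bz_{\bI_t})\big)\le\frac{R^2}{2\eta}+\frac{\eta L^2T}{2}$. Writing $M_t:=\big(F_{\bS}(x^t)-F_{\bS}(x^{\ast}(\bS))\big)-\big(f(x^t,\bz_{\bI_t})-f(x^{\ast}(\bS),\bz_{\bI_t})\big)$, the sequence $(M_t)$ is a bounded martingale difference sequence, $|M_t|\le 4LR$, with respect to the sampling randomness: $\bI_t$ is uniform and $x^t$ depends only on $\bI_1,\dots,\bI_{t-1}$, so $\EE[f(x^t,\bz_{\bI_t})\mid \bI_1,\dots,\bI_{t-1}]=F_{\bS}(x^t)$. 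A Hoeffding/Azuma-type tail bound for $\sum_t M_t$, combined with Jensen's inequality applied to $\barx^T$, then yields with high probability $\opterr(\A_{\sf rSGD})\le\frac{R^2}{2\eta T}+\frac{\eta L^2}{2}+O\big(LR\sqrt{\log(1/\theta)/T}\big)$; with $T=n^2$ and $\eta=R/(Ln^{3/2})$ the first two terms are $O(RL/\sqrt n)$ and the deviation term is $O\big(LR\sqrt{\log(1/\theta)}/n\big)$, which is dominated by the $\log(n/\theta)$-scaled term already present in the final bound. The precise cutoff $\theta>12\exp(-n^2/32)$ in the statement is simply the regime in which this online-to-batch tail bound is in force.

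\textbf{Approximation error and conclusion.} By Lemma~\ref{lem:approx-err-bd}, $\approxerr\le RL\sqrt{2\log(1/\theta)/n}=O(RL/\sqrt n)$ with probability at least $1-\theta$. Rescaling the individual confidence parameters to constant fractions of $\theta$, summing the three bounds through~\eqref{eqn:risk_decomp}, and union bounding over all failure events (including the $\exp(-n/2)$ from the UAS bound) gives $\risk(\A_{\sf rSGD})=O\big(cLR\,n^{-1/2}\log(n)\log(n/\theta)\big)$ with probability at least $1-\theta$, valid whenever $\theta>12\exp(-n^2/32)$. I expect the main obstacle to be this high-probability optimization-error step: unlike its in-expectation analogue (where $F_{\bS}(\barx^T)-F_{\bS}(x^{\ast}(\bS))\le\frac{R^2}{2\eta T}+\frac{\eta L^2}{2}$ holds in expectation with no concentration argument), the high-probability version genuinely requires a martingale concentration inequality for the online-to-batch conversion, and this is what introduces both the extra $\log(n)\log(n/\theta)$ factor and the restriction on $\theta$; the rest is essentially direct substitution into Theorems~\ref{thm:UB_repl_SGD} and~\ref{thm:stab-to-gen} and Lemma~\ref{lem:approx-err-bd}.
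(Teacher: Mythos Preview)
Your proposal is correct and follows essentially the same route as the paper: risk decomposition~\eqref{eqn:risk_decomp}, then Lemma~\ref{lem:online_to_batch} (with $\sigma=0$) for $\opterr$, Theorem~\ref{thm:UB_repl_SGD} combined with Theorem~\ref{thm:stab-to-gen} for $\generr$, Lemma~\ref{lem:approx-err-bd} for $\approxerr$, and a union bound with confidence $\theta/3$ allocated to each piece. One small correction to your closing commentary: the $\log(n)\log(n/\theta)$ factor comes from the stability-to-generalization transfer (Theorem~\ref{thm:stab-to-gen}), not from the martingale step in the online-to-batch conversion, which only contributes the $O\big(LR\sqrt{\log(1/\theta)/T}\big)$ deviation and the restriction $\theta>12\exp(-n^2/32)$.
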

It should be noted that, similarly to Remark \ref{rem:exp_risk_noisy_SGD},
if we are only interested in expectation risk bounds, one can shave off the  polylogarithmic factor above, which is optimal for the expected excess risk.

\begin{proof}
Let $\bS\sim{\cal D}^n$ an i.i.d.~random sample for the stochastic convex program, and apply on these data the algorithm $\A_{\sf rSGD}$ for constant step size $\eta>0$ and $T$ iterations.

%{\color{red}Is what follows overdoing it? We can possibly compress stuff} 
% {\color{magenta} My only comment is that I don't show these details in Sec 5.1 (and I prefer not to do so to maintain focus on the main argument there. One possibility is to hide these details into a general lemma (for even noisy SGD with subGaussian noise) and defer its proof to the appendix?} {\color{blue}See updated version, including the Lemma}
We consider $\theta>0$ such that $\theta>12\exp\{-T/32\}$.
Notice that the sampling-with-replacement stochastic gradient is a bounded first-order stochastic oracle for the empirical objective. It is direct to verify that the assumptions of Lemma \ref{lem:online_to_batch} are satisfied with $\sigma=0$. 
Hence, by Lemma \ref{lem:online_to_batch}, we have that, with probability at least $1-\theta/3$
$$
\opterr(\A_{\sf rSGD}) \leq O\Big( LR\sqrt{\frac{2\log(12/\theta)}{T}}+\frac{R^2}{\eta T}+\eta L^2 \Big).
$$
On the other hand, Theorem \ref{thm:UB_repl_SGD} together with Theorem \ref{thm:stab-to-gen}, guarantees that
with probability at least $1-\theta/3$, we have
$$ |\epsgen(\A_{\sf rSGD})| \leq O\Big( L^2\big[\sqrt T \eta+\frac{T\eta}{n}\big]\log n\log(6n/\theta)+LR\sqrt{\log(6/
\theta)}{n} \Big).  $$
Finally, Lemma \ref{lem:approx-err-bd} ensures that with probability $1-\theta/3$
$$ \approxerr \leq LR\sqrt{\frac{ 2\log(3/\theta)}{n}}.$$
By the union bound and the excess risk decomposition \eqref{eqn:risk_decomp}, we have that, with probability $1-\theta$,
\begin{eqnarray*} \varepsilon_{\mbox{\footnotesize risk}}({\cal A})
&=& 
O\Big(LR\sqrt{\frac{\log(1/\theta)}{T}}+\frac{R^2}{\eta T}+\eta L^2+L^2\eta \big(\sqrt{T}+\frac{T}{n}\big) \log(n) \log(\frac{6n}{\theta})\\
&&+LR\sqrt{\frac{\log(6/\theta)}{n}}+LR\sqrt{\frac{ \log(3/\theta)}{n}} \Big)\\
&=& O\Big(\frac{LR}{\sqrt n}\log(n)\log(\frac{n}{\theta})\Big),
\end{eqnarray*}
where only at the last step we replaced by the choice of step size and number of iterations from the statement.

%By Lemma \ref{lem:approx-err-bd} and the risk decomposition, we get w.p.~$1-\exp\{-n/2\}-2\theta/3$
%\begin{eqnarray*} 
%\varepsilon_{\mbox{\footnotesize{risk}}}(\A_{\sf rSGD})
%&\leq& \eps_{\mbox{\footnotesize gen}}(\A_{\sf rSGD})+\eps_{\mbox{\footnotesize opt}}(\A_{\sf rSGD})+\eps_{\mbox{\footnotesize approx}}(\A_{\sf rSGD})\\
%&\leq& O\Big( LR\sqrt{\frac{\log(1/\theta)}{T}}+
%\frac{R^2}{\eta T}
%+ \eta L^2 + 4L^2 \Big(\sqrt{T}\eta+\frac{\eta T}{n} \log(n) \log(\frac{n}{\theta})\\
%&& +L\sqrt{\frac{\log(1/\theta)}{n}} + LR\sqrt{\frac{\log(1/\theta)}{n}}\Big)\\
%&=& O\left( \frac{LR\sqrt{\log(1/\theta)}}{n}
%+\frac{LR}{\sqrt n}\log(n)\log(n/\theta)+
%+LR \sqrt { \frac{\log(1/\theta)}{n}}\right)\\
%&=& O\left( \frac{LR}{\sqrt n}\log(n)\log(n/\theta) \right).
%\end{eqnarray*}
%Choosing $\theta=1/\mbox{poly}(n)$ gives the result.
\end{proof}

\subsubsection{Risk Bounds for Fixed-Permutation Stochastic Gradient Descent}

As a final application we provide a population risk bound based on the \UHS of Algorithm \ref{alg:perm_SGD}. Similarly as in the case of sampling-with-replacement SGD, we need an optimization error analysis, which for completeness is provided in Appendix \ref{App:Fixed_Perm_SGD}, and it is based on the analysis of the incremental subgradient method \citep{Nedic:2001}.

Interestingly, the combination of the incremental method analysis for arbitrary permutation \citep{Nedic:2001} and our novel stability bounds that also work for arbitrary permutation, guarantees generalization bounds for fixed permutation SGD without the need of reshuffling, or even any form of randomization. We believe this could be of independent interest.

%The fact that the optimization analysis works for any possible permutation (and not just random) allows us to get these generalization bounds without the need of reshuffling after each epoch, which seems interesting from the generalization perspective. %On the other hand, using a random permutation in this algorithm is only helpful for stability analysis (see the proof of Thm.~\ref{thm:UB_perm_SGD}).

%In order to extract population risk bounds for the fixed-permutation SGD method, we need to estimate its optimization error (i.e., empirical risk). By the risk decomposition (eqn.~\eqref{eqn:risk_decomp}) we will ultimately derive the desired population risk bound.
%Interestingly, the randomization given by the permutation of data is only relevant for the stability guarantee (proved in Thm.~\ref{thm:UB_perm_SGD}), and for the sake of optimization error any permutation can be analyzed. It is conceivable that permutation SGD with reshuffling after every epoch may yield even stronger generalization bounds, but studying such improvements is out of the scope of this work.

\begin{thm} \label{thm:gen_perm_SGD}
Algorithm \ref{alg:perm_SGD} with constant step size $\eta_k\equiv\eta=R/[Ln\sqrt{K}]$ and $K=n$ epochs is such that for every $0<\theta<1$,
$$ \PP\Big[ \eps_{\mbox{\footnotesize risk}}(\A_{\sf PerSGD}) > \frac{cLR}{\sqrt n} \log n\log(\frac{n}{\theta}) \Big]\leq \theta,$$
where $c>0$ is an absolute constant.
% {\color{magenta}same issue I had with the statement of Thm 5.4}{\color{blue} See update}
\end{thm}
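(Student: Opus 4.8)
\emph{Proof sketch.}
The plan is to control each of the three terms in the risk decomposition~\eqref{eqn:risk_decomp} separately and combine them by a union bound; throughout we fix $T=nK=n^2$ and $\eta_t\equiv\eta=R/[Ln\sqrt{K}]$. For the generalization term, I would first invoke the probability-one \UHS bound of Theorem~\ref{thm:UB_perm_SGD} (valid since constant step sizes are non-increasing):
$$\sup_{S\simeq S'}\delta_{\A_{\sf PerSGD}}(S,S')\,\le\,2L\Big(\sqrt{\textstyle\sum_{t=1}^{T-1}\eta_t^2}+\tfrac2n\textstyle\sum_{t=1}^{T-1}\eta_t\Big)\,\le\,2L\eta(\sqrt{nK}+2K),$$
and substituting $K=n$, $\eta=R/[Ln\sqrt n]$ makes the right-hand side $\gamma:=6R/\sqrt n$ (so $L\gamma=6LR/\sqrt n$). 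Since this holds with probability $1$ (Algorithm~\ref{alg:perm_SGD} with a fixed permutation has no internal randomness), Theorem~\ref{thm:stab-to-gen} applies with $\theta_0=0$: for every $\theta'\in(0,1)$, with probability at least $1-\theta'$ over $\bS\sim\D^n$,
$$|\generr(\A_{\sf PerSGD})|\,\le\,c\Big(L\gamma\log(n)\log(n/\theta')+LR\sqrt{\tfrac{\log(1/\theta')}{n}}\Big)\,=\,O\Big(\tfrac{LR}{\sqrt n}\log(n)\log(n/\theta')\Big).$$

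Next I would bound the optimization term $\eps_{\mbox{\footnotesize opt}}(\A_{\sf PerSGD})=F_{\bS}(\barx^K)-F_{\bS}(x^{\ast}(\bS))$ using the analysis of the incremental subgradient method with an arbitrary cyclic order \citep{Nedic:2001}, which I would state and prove in Appendix~\ref{App:Fixed_Perm_SGD}. The relevant observations are that Algorithm~\ref{alg:perm_SGD} performs incremental subgradient steps of size $\eta$ on the \emph{unnormalized} sum $\sum_{i=1}^n f(\cdot,z_{\pi(i)})=nF_{\bS}$ ($n$ components, each $L$-Lipschitz), and returns the average $\barx^K=\tfrac1K\sum_k x_1^k$ of the cycle-start iterates. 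The incremental analysis then yields a \emph{deterministic} ergodic bound of the form $F_{\bS}(\barx^K)-F_{\bS}(x^{\ast}(\bS))\le \tfrac{2R^2}{\eta nK}+O(\eta nL^2)$, which for $K=n$, $\eta=R/[Ln\sqrt n]$ is $O(LR/\sqrt n)$. As this holds pointwise in $\bS$ and in $\pi$, we get $\eps_{\mbox{\footnotesize opt}}(\A_{\sf PerSGD})=O(LR/\sqrt n)$ with probability $1$. (This is exactly where it matters that both Nedić's optimization analysis and our stability bound hold for an \emph{arbitrary} permutation: no reshuffling or randomization of the data order is needed.)

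Finally, by Lemma~\ref{lem:approx-err-bd}, with probability at least $1-\theta''$ over $\bS$ we have $\approxerr\le RL\sqrt{2\log(1/\theta'')/n}$. Choosing $\theta'=\theta''=\theta/2$, plugging the three bounds into~\eqref{eqn:risk_decomp}, and applying a union bound gives that with probability at least $1-\theta$,
$$\eps_{\mbox{\footnotesize risk}}(\A_{\sf PerSGD})\,\le\,|\generr(\A_{\sf PerSGD})|+\eps_{\mbox{\footnotesize opt}}(\A_{\sf PerSGD})+\approxerr\,=\,O\Big(\tfrac{LR}{\sqrt n}\log(n)\log(n/\theta)\Big),$$
which is the claimed bound after absorbing absolute constants into $c$ (for the few small $n$ with $6R/\sqrt n>2R$ the stated bound is vacuous, so we may assume $n$ large). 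Note there is no lower restriction on $\theta$ here, unlike in the sampling-with-replacement case, precisely because there is no algorithmic randomness to concentrate.

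The main obstacle is the optimization term. Everything else is a direct substitution of parameters into Theorems~\ref{thm:UB_perm_SGD}, \ref{thm:stab-to-gen} and Lemma~\ref{lem:approx-err-bd}. But because the iteration order is an arbitrary (possibly adversarial) permutation rather than a fresh i.i.d.\ sample, the vanilla one-pass SGD convergence argument does not apply; one must instead control the within-cycle drift of the iterates via the incremental subgradient analysis of \citet{Nedic:2001}, and in particular obtain a bound for the averaged iterate $\barx^K$ (not merely for the best cycle) with the correct $R^2/(\eta nK)$ leading term, so that $K=n$ passes suffice for $O(LR/\sqrt n)$ empirical error. \qed
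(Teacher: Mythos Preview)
Your proposal is correct and follows essentially the same route as the paper's proof: invoke the probability-one \UHS bound of Theorem~\ref{thm:UB_perm_SGD}, plug it into Theorem~\ref{thm:stab-to-gen} for the generalization term, use the deterministic incremental-subgradient analysis of \citet{Nedic:2001} (stated in the paper as Corollary~\ref{cor:perm_SGDopt_error}) for the optimization term, apply Lemma~\ref{lem:approx-err-bd} for the approximation term, and combine via the risk decomposition~\eqref{eqn:risk_decomp} and a union bound with $\theta'=\theta''=\theta/2$. Your parameter substitutions and the emphasis that both the stability and optimization bounds hold for an arbitrary fixed permutation (so no algorithmic randomness needs to be controlled) match the paper's argument.
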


Similarly to the previous result, we can remove the polylogarithmic factor if we are only interested in expected excess risk guarantees.

\begin{proof}
By Corollary \ref{cor:perm_SGDopt_error}
$$ \eps_{\mbox{\footnotesize opt}}(\A_{\sf PerSGD})
\leq \frac{R^2}{n K\eta}+\frac{L^2(n+2)\eta}{2}=O\Big(\frac{LR}{\sqrt n} \Big),$$
by our choice of $K,\eta$.
%and by Thm.~\ref{thm:UB_perm_SGD}
%$$ \eps_{\mbox{\footnotesize stab}}(\A_{\sf PerSGD})
%\leq 2L^2 \eta \sqrt{nK} +4L^2\eta K. $$
On the other hand, Theorem~\ref{thm:UB_perm_SGD} guarantees the algorithm is $\delta$-UAS with probability 1, where $\delta=O(R/\sqrt n)$. Therefore, by
Theorem~\ref{thm:stab-to-gen}, we have that w.p.~$1-\theta/2$
$$|\eps_{\mbox{\footnotesize gen}}(\A_{\sf PerSGD})| \leq c\Big( \frac{LR}{\sqrt n}\log n \log(2n/\theta)+LR\sqrt{\frac{\log 2/\theta}{n}} \Big).
$$
%$$\eps_{\mbox{\footnotesize gen}}(\A_{\sf PerSGD}) \leq c\Big( 2L^2 \eta \sqrt{nK} +4L^2\eta K \Big).$$
Finally, Lemma \ref{lem:approx-err-bd} ensures that with probability $1-\theta/2$
$$ \approxerr \leq LR\sqrt{\frac{ 2\log(2/\theta)}{n}}.$$
By the union bound and the excess risk decomposition \eqref{eqn:risk_decomp}, we have that, with probability 
at least $1-\theta$,
\begin{eqnarray*}
\eps_{\mbox{\footnotesize risk}}(\A_{\sf PerSGD})
&\leq& \opterr(\A_{\sf PerSGD}) +\generr(\A_{\sf PerSGD})+\approxerr \\
&=& O\Big( \frac{LR}{\sqrt n}
+\frac{LR}{\sqrt n}\log n \log(n/\theta)+LR\sqrt{\frac{\log 1/\theta}{n}} + LR\sqrt{\frac{ \log(2/\theta)}{n}}\Big)\\
&=&O\Big( \frac{LR}{\sqrt n}\log n\log(\frac{n}{\theta}) \Big).
%+2L^2 \eta \sqrt{nK} +4L^2\eta K .
\end{eqnarray*}
%Choosing $\eta=R/[Ln\sqrt{K}]$ and $K=n$, we obtain\footnote{We assume further that $n\geq 2$, so that $n+2\leq 2n$.}
%$$ \eps_{\mbox{\footnotesize risk}}(\A_{\sf PerSGD})
%\leq LR \Big(\frac{1}{\sqrt K}+\frac{(n+2)}{2n\sqrt K}+\frac{2}{\sqrt n}+4\frac{\sqrt K}{n} \Big)
%\leq \frac{8LR}{\sqrt n}.$$
\end{proof}

%\begin{rem}
%{\color{red} Remove it?} Some important observations, regarding our stability and generalization bounds for both versions of the stochastic gradient descent:
%\begin{enumerate}
%    \item \UHS cannot be used to reproduce the standard excess risk bound $O(1/\sqrt{n})$ for single pass SGD. Solely the presence of the term $\Theta(\min\{1,T/n\}\sqrt{T}\eta)$ is too large to bound the risk as is usually done.
%    \item On the other hand, the optimal excess risk bound $O(1/\sqrt n)$ can only be obtained for $n$ passes over the data, and not beyond. This is very different from the case of smooth optimization, where reducing the step size allows for arbitrary data reuse \citep{HardtRS16}.
%    \item Our \stabname upper bounds are sharp, however they do not refute the possibility of sharper uniform stability upper bounds. This is an interesting open problem.%, as it may offer more flexibility for generalization bounds with data reuse.
%\end{enumerate}
%\end{rem} 
}{In short, we prove the following excess population risk bounds, and their analyses are deferred to Appendix \ref{app:DataReuse},
$$\varepsilon_{\mbox{\footnotesize{risk}}}(\A_{\sf rSGD}) \leq \frac{4RL}{\sqrt{n}} \qquad; \qquad \eps_{\mbox{\footnotesize risk}}(\A_{\sf PerSGD})
\leq \frac{8LR}{\sqrt n}.\vspace{-0.6cm}$$
} 
\fi
\section{Discussion and Open Problems}

In this work we provide sharp upper and lower bounds on \stabname for the (stochastic) subgradient method in stochastic nonsmooth convex optimization. Our lower bounds show inherent limitations of stability bounds compared to the smooth convex case,
%These lower bounds are significantly weaker than those for the smooth convex case,
however we can still derive optimal population risk bounds by reducing the step size and running the algorithms for longer number of iterations. We provide applications of this idea for differentially-private noisy SGD, and for two versions of SGD (sampling-with-replacement and fixed-permutation SGD).

%\cnote{If there is equivalence between UAS and uniform stability, then this question is solved. Not sure about that claim, according to our new definition.}\\
%The first open problem of this work is whether uniform stability behaves similarly than \UHS. Given that weaker forms of stability are indeed equivalent to generalization, it is possible in principle to expect that uniform stability may be substantially smaller than \UHS, at least when $T\leq n$. Such an improvement could reproduce the standard risk bounds obtained via martingale arguments.

The first open problem regards lower bounds that are robust to general forms of algorithmic randomization. Unfortunately, the methods presented here are not robust in this respect, since random initialization would prevent the trajectories reaching the region of highly nonsmooth behavior of the objective (or doing it in such a way that it does not increase \UHS). One may try to strengthen the lower bound by using a random rotation of the objective; however, this leads to an uninformative lower bound. Finding distributional constructions for lower bounds against randomization is a very interesting future direction.%, which may give a clearer picture of the limitations of the stability of SGD.

Our privacy application provides optimal risk for an algorithm that runs for $n^2$ steps, which is impractical for large datasets. Other algorithms, e.g.~in \citep{FKT19}, run into similar limitations. Proving that quadratic running time is necessary for general nonsmooth DP-SCO is a very interesting open problem %Extending the analysis of Thm.~\ref{thm:LB_GD_HP} to noisy SGD is clearly conveivable, but more generally it is unclear if other first-order methods can achieve privacy faster. In this vein, it is interesting to investigate
that can be formalized in terms of the oracle complexity of stochastic convex optimization \citep{nemirovsky1983problem} under stability and/or privacy constraints. 

\subsection*{Acknowledgements}
Part of this work was done while the authors were visiting the Simons Institute for the Theory of Computing during the ``Data Privacy: Foundations and Applications'' program.
RB's research is supported by NSF Awards AF-1908281, SHF-1907715, Google Faculty Research Award, and OSU faculty start-up support. Work by CG was partially funded by the Millennium Science Initiative of the Ministry of Economy,
Development, and Tourism, grant ``Millennium Nucleus Center for the Discovery of Structures in Complex Data.'' CG would like to thank Nicolas Flammarion and Juan Peypouquet for extremely valuable discussions at early stages of this work.

\bibliographystyle{apalike}
\bibliography{vf-allrefs-local,stab-non-smooth-gd}

\appendix

\ifthenelse{\boolean{fullver}}{}{
\section{Proof of Theorem \ref{thm:UB_GD}} \label{app:pf_UB_GD}
}

\ifthenelse{\boolean{fullver}}{}{
\section{Proof of Theorem \ref{thm:UB_perm_SGD}} \label{app:pf_UB_repl_SGD}

\begin{proof}
The stability bound of $2R$ is implied directly by the diameter of the feasible set. Let $S\simeq S^{\prime}$, and let $(x^t)_{t\in[T]}, (y^t)_{t\in[T]}$ be the trajectories of Algorithm \ref{alg:perm_SGD} on $S$ and $S^{\prime}$, respectively, with $x^1=y^1$.

%First we consider the case $T\leq n$. 
Notice that since $\boldsymbol{\pi}$ is a random permutation, we may assume w.l.o.g.~that $\boldsymbol{\pi}$ is the identity, whereas the perturbed coordinate between $S,S^{\prime}$ is $\bI\sim\mbox{Unif}([n])$. The rest of the proof is a stability analysis conditioned on $\boldsymbol{\pi}$ (which fixes all the randomness of the algorithm), but from the observation above this is the same as conditioning on the random perturbed coordinate $\bI$. %The important observation is that under this conditioning both trajectories are deterministic.

Let $T\leq n$, and $\delta_t=\|x^t-y^t\|$ so that $\delta_1=0$. Conditioned on $\bI=i$, we have that for all $t\leq T$,
$$ \delta_{t+1}^2 \leq \left\{
\begin{array}{ll}
0 & t< i \\
4\eta_t^2L^2 & t=i\\
\delta_t^2 + 4\eta_t^2L^2 & i<t\leq T  
\end{array}
\right.$$
Indeed, for all $t\leq i$, $\delta_t=0$. For $t=i$, we have 
\begin{eqnarray*}
\delta_{i+1} &=& \|\proj_{\X}[x^i-\eta_i\nabla f(x^i,z_i)]-\proj_{\X}[y^i-\eta_i\nabla f(y^i,z_i^{\prime})]\|\\
&\leq& \|x^i-y^i-\eta_i[\nabla f(x^i,z_i)-\nabla f(y^i,z_i^{\prime})]\| \\
&\leq& 2L\eta_i,
\end{eqnarray*}
where we used $x^i=y^i$, and that both gradients are bounded in norm by $L$.
Finally, when $t>i$, we have
$z_t=z_t^{\prime}$, and therefore we can leverage the monotonicity
of the subgradients
%$\delta_{i+1}\leq 2L\eta_i$, and since $z_t=z_t^{\prime}$ for $t>i$, we get
\begin{eqnarray*}
\delta_{t+1}^2 &=& \|\proj_{\X}[x^t-\eta_t\nabla f(x^t,z_t)]-\proj_{\X}[y^t-\eta_t\nabla f(y^t,z_t)]\|^2 \\
&\leq& \delta_t^2+4L^2\eta_t^2-2\eta_t\langle \nabla f(x^t,z_t)-\nabla f(y^t,z_t),x^t-y^t \rangle\\
&\leq& \delta_t^2+4L^2\eta_t^2.
\end{eqnarray*}
Unravelling this recursion, we get $\mathbb{E}[\delta_{t+1}^2| \bI=i] \leq 4L^2\sum_{s=i}^{t} \eta_s$, so by the law of total expectation:
\begin{eqnarray*}
\EE[\delta_t] &=& \frac1n\sum_{i=1}^n \EE[\delta_t|\bI=i]
\leq \frac1n\sum_{i=1}^n \sqrt{\EE[\delta_t^2|\bI=i]}
\leq \frac{2L}{n}\sum_{i=1}^{t-1} \sqrt{\sum_{s=i}^{t-1}
\eta_s^2}\\
&\leq& \frac{2L}{n}\sum_{i=1}^{t-1} \sqrt{(t-i)}\eta_i
\leq \frac{2L}{n}
\sqrt{\Big(\sum_{i=1}^{t-1}(t-i)\Big)\Big(\sum_{i=1}^{t-1}\eta_i^2}\Big)\\
&\leq& \frac{2L}{n}
\sqrt{\frac{(t-1)^2}{2}\sum_{i=1}^{t-1}\eta_i^2}
=\frac{\sqrt{2}L(t-1)}{n}
\sqrt{\sum_{i=1}^{t-1}\eta_i^2}.
%\frac{4L^2}{n} \sum_{s=1}^{t-1} s\eta_s^2
%\leq \frac{4L^2 (t-1)}{n} \sum_{s=1}^{t-1} \eta_s^2,
\end{eqnarray*}
where the first inequality holds by the Jensen inequality, the second inequality comes from the bound on the conditional expectation, the third inequality from the non-increasing stepsize assumption, and the fourth inequality is from Cauchy-Schwarz. 
Since averaging can only improve stability, we conclude the result.
\end{proof}}

\ifthenelse{\boolean{fullver}}{}{
\section{General Lower Bound on Stability of SGD}
\label{app:gen_LB_FOM}

}

\ifthenelse{\boolean{fullver}}{}{
\section{Proof of Theorem \ref{thm:LB_GD_basic}}
\label{app:LB_GD_basic}

}

\ifthenelse{\boolean{fullver}}{}{
\section{Proof of Theorem \ref{thm:LB_perm_SGD_basic}}
\label{app:LB_perm_SGD_basic}

}

\section{Upper bounds on UAS of SGD when $T\leq n$} \label{app:T_less_n}

\begin{thm} \label{thm:UB_repl_SGD_2}
Let $\X\subseteq\B(0,R)$ and $\F=\F_{\X}^0(L)$. Suppose $T\leq n$. The UAS of sampling-with-replacement stochastic gradient descent (Algorithm \ref{alg:repl_SGD}) satisfies \stabname
$$\EE\left[\delta_{\A_{\sf rSGD}}\right]\leq \min\left(2R,~ 3L\,\frac{T-1}{n}\,\left(\sqrt{\sum_{t=1}^{T-1}\eta_t^2}  +\frac{1}{n}\sum_{t=1}^{T-1}\eta_t\right)\right).$$
\end{thm}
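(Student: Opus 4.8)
The plan is to refine the proof of Theorem~\ref{thm:UB_repl_SGD} by keeping track of \emph{when} the differing data point is first sampled, which is rare when $T\le n$. Fix arbitrary neighbors $S\simeq S'$ differing in coordinate $i^\ast$, couple the two runs of $\A_{\sf rSGD}$ through the common index sequence $(\bI_t)_t$, and let $(x^t)_t,(y^t)_t$ be the two trajectories with $x^1=y^1$. I would apply Lemma~\ref{lem:main_lem} with $f_t(\cdot)=f(\cdot,z_{\bI_t})$, $f_t'(\cdot)=f(\cdot,z_{\bI_t}')$ and $a_t=2L\mathbf{1}_{\{\bI_t=i^\ast\}}$, so $t_0=\inf\{t:\bI_t=i^\ast\}$, and since all summands are nonnegative (a horizon-$t$ bound is dominated by the horizon-$T$ one),
$$ \|x^t-y^t\| \;\le\; \Delta_T \;:=\; 2L\sqrt{\sum_{s=t_0}^{T-1}\eta_s^2}\;+\;4L\sum_{s=t_0+1}^{T-1}\eta_s\,\mathbf{1}_{\{\bI_s=i^\ast\}}\qquad(\forall\, t\in[T]). $$
As $\Delta_T$ dominates $\|x^t-y^t\|$ for every $t$, the triangle inequality gives $\|\barx^T-\overline{y}^T\|\le\Delta_T$, so it suffices to bound $\EE[\Delta_T]$.

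The crucial point when $T\le n$ is that $\Delta_T=0$ on $\{t_0\ge T\}$, and $\PP[t_0\le s]=1-(1-\tfrac1n)^s\le s/n$ by Bernoulli's inequality. For the square-root term, I would bound $\sqrt{\sum_{s=t_0}^{T-1}\eta_s^2}\le\sqrt{\sum_{s=1}^{T-1}\eta_s^2}$ pointwise, giving expectation at most $2L\,\PP[t_0\le T-1]\sqrt{\sum_{s}\eta_s^2}\le\tfrac{2L(T-1)}{n}\sqrt{\sum_{s}\eta_s^2}$. For the linear term I would condition on $\{t_0=s\}$: this event depends only on $\bI_1,\dots,\bI_s$, so the indicators $\mathbf{1}_{\{\bI_t=i^\ast\}}$ for $t>s$ stay independent of it, each with mean $1/n$; hence $\EE\big[4L\sum_{s=t_0+1}^{T-1}\eta_s\mathbf{1}_{\{\bI_s=i^\ast\}}\big]=\tfrac{4L}{n}\sum_{s=1}^{T-1}\PP[t_0=s]\sum_{t=s+1}^{T-1}\eta_t$, and swapping the order of summation this equals $\tfrac{4L}{n}\sum_{t=2}^{T-1}\eta_t\,\PP[t_0\le t-1]\le\tfrac{4L(T-1)}{n^2}\sum_{t=1}^{T-1}\eta_t$.

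Adding the two contributions gives $\EE[\Delta_T]\le\tfrac{T-1}{n}\big(2L\sqrt{\sum_s\eta_s^2}+\tfrac{4L}{n}\sum_s\eta_s\big)$, and it remains to absorb this into the claimed $3L\tfrac{T-1}{n}\big(\sqrt{\sum_s\eta_s^2}+\tfrac1n\sum_s\eta_s\big)$; cancelling $\tfrac{T-1}{n}$ this reduces to $\tfrac1n\sum_{s=1}^{T-1}\eta_s\le\sqrt{\sum_{s=1}^{T-1}\eta_s^2}$, which follows from Cauchy--Schwarz $\sum_s\eta_s\le\sqrt{T-1}\,\sqrt{\sum_s\eta_s^2}\le\sqrt{n}\,\sqrt{\sum_s\eta_s^2}$ together with $T\le n$ (so $\tfrac1n\sqrt n\le 1$). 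Combined with the trivial diameter bound $\delta_{\A_{\sf rSGD}}\le 2R$, this yields the theorem. I do not expect a genuine obstacle: the only delicate steps are (i) checking that conditioning on $t_0$ leaves the later Bernoulli factors in the linear term unaffected, and (ii) confirming that the per-iterate bound $\Delta_T$ transfers to the averaged iterate — both dispatched above — and notably no monotonicity assumption on the step sizes is required here, unlike in the fixed-permutation case.
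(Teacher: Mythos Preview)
Your proof is correct, and it takes a cleaner route than the paper. The paper conditions on the event $B_{T-1}=\{t_0\le T-1\}$, writes $\EE[\delta_T]\le\frac{T-1}{n}\,\EE[\delta_T\mid B_{T-1}]$, and then bounds the conditional expectation by re-deriving a recurrence for $\EE[\delta_{t+1}^2\mid B_t]$ from scratch (splitting $B_t$ into $\{\bI_t=k\}\cap\overline{B_{t-1}}$ and $B_{t-1}$) and solving it by the same two-case induction as in Lemma~\ref{lem:main_lem}. You instead apply Lemma~\ref{lem:main_lem} pointwise, obtaining the random upper bound $\Delta_T$ that already encodes $t_0$, and then integrate: the square-root term picks up the factor $\PP[t_0\le T-1]\le(T-1)/n$, and for the linear term you swap sums and use $\PP[t_0\le t-1]\le(T-1)/n$ together with the independence of later indices from $\{t_0=s\}$. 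This saves redoing the recursive argument and makes the role of $t_0$ completely transparent; the paper's version, on the other hand, yields the constants $(\sqrt{8},2)$ directly and does not need the final Cauchy--Schwarz step you use to pass from $(2,4)$ to $(3,3)$. Both arguments rely on exactly the same probabilistic fact, $\PP[t_0\le s]\le s/n$, and both exploit that $\delta_t=0$ before the first collision; the difference is purely in packaging.
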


\begin{proof}
The bound of $2R$ is obtained directly from the diameter bound on $\X$. Therefore, we focus exclusively on the second term. Let $S\simeq S^{\prime}$, and let $k\in[n]$ be the entry where both datasets differ. Let $(x^t)_{t\in[T]}, (y^t)_{t\in[T]}$ be the trajectories of Algorithm \ref{alg:repl_SGD} on $S$ and $S^{\prime}$, respectively, with $x^1=y^1$.

Let $B_t$ denote the event that $\bI_j = k$ for some $j\leq t$; that is, $B_t$ is the event that the index $k$ is sampled at least once in the first $t$ iterations. We note that
\ifthenelse{\boolean{fullver}}{
\begin{eqnarray*}
    \pr{}{B_t}&=&\frac{1}{n}\sum_{j=0}^{t-1}\big(1-\frac{1}{n}\big)^{j}=1-\big(1-\frac{1}{n}\big)^t\leq \frac{t}{n}.
\end{eqnarray*}
Hence, we have
\begin{align}
\ex{}{\delta_{T}}\leq \frac{T-1}{n}\cdot \ex{}{\delta_{T} \vert~B_{T-1}}\label{ineq:cond_delta_B_T}.
\end{align}
}{
\begin{eqnarray}
\pr{}{B_t}&=&\frac{1}{n}\sum_{j=0}^{t-1}\big(1-\frac{1}{n}\big)^{j}=1-\big(1-\frac{1}{n}\big)^t\leq \min\left(1,~\frac{t}{n}\right) \notag\\
\Longrightarrow \quad
\ex{}{\delta_{T}}&\leq& \min\left(1,~\frac{T-1}{n}\right)\cdot \ex{}{\delta_{T} \vert~B_{T-1}}\label{ineq:cond_delta_B_T}.
\end{eqnarray}
}

For the rest of the proof we bound $\ex{}{\delta_T\vert B_{T-1}}$. To do this, we derive a recurrence for $\ex{}{\delta_{t+1}\vert B_t}$. Note that $B_t$ is the union of two mutually exclusive events: $\{\bI_t=k\}\cap\overline{B_{t-1}}$ and $B_{t-1}$, where $\overline{B_{t-1}}$ is the complement of $B_{t-1}$, i.e., %that is, the event that the
the event ``index $k$ is never sampled in the first $t$ iterations.'' Hence, %we have
\begin{eqnarray}
\ex{}{\delta^2_{t+1}~\vert~B_t}&=&\pr{}{\bI_t=k,~\overline{B_{t-1}}~\vert ~B_t}\ex{}{\delta^2_{t+1}~\vert~\bI_t=k,~\overline{B_{t-1}}}+\pr{}{B_{t-1}~\vert~B_t}\ex{}{\delta^2_{t+1}~\vert~B_{t-1}}\nonumber\\
&\leq& \ex{}{\delta^2_{t+1}~\vert~\bI_t=k,~\overline{B_{t-1}}}+\ex{}{\delta^2_{t+1}~\vert~B_{t-1}}.\label{ineq:decompose_B_t}
\end{eqnarray}
%
%
%Now, let $\sigma_{t-1}=\sigma(\bI_1,\ldots,\bI_{t-1})$, the sigma-algebra of past events. Conditionally on $\sigma_{t-1}$,
Now, conditioned on the past sampled coordinates $\bI_1,\ldots,\bI_{t-1}$, we have %\cnote{I removed below the bold characters for $z$. My idea was to reserve bold characters for random variables. We can go back to bold z, but then we also have to do the same in other proofs.}\rnote{no, I am fine with that.}
\begin{eqnarray*}
\delta_{t+1} &=& \|\proj_{\X}[x^t-\eta_t\nabla f(x^t,z_{\bI_t})]-\proj_{\X}[y^t-\eta_t\nabla f(y^t,z_{\bI_t}^{\prime})]\|\\
&\leq&\|x^t-y^t-\eta_t[\nabla f(x^t,z_{\bI_t})-\nabla f(y^t,z_{\bI_t}^{\prime})]\|\\
&\leq&
\mathbf{1}_{\{\bI_{t} = k \}}(\delta_t+2L\eta_t)+
\mathbf{1}_{\{\bI_{t} \neq k \}}\sqrt{\delta_t^2+4L^2\eta_t^2},
\end{eqnarray*}
where the last inequality is obtained from convexity and Lipschitzness of the objective\iffull\else ~(more details of this derivation can also be found in the proof of Thm.~\ref{thm:UB_GD} in the Appendix)\fi.
%\eqref{eqn:squared_UB_GD} in the Appendix).
Now, squaring we get
\begin{eqnarray*}
%\\
%\Longrightarrow \quad
\delta_{t+1}^2&\leq&
\mathbf{1}_{\{\bI_{t} = k \}}(\delta_t+2L\eta_t)^2+
\mathbf{1}_{\{\bI_{t} \neq k \}}(\delta_t^2+4L^2\eta_t^2)
\ifthenelse{\boolean{fullver}}{\\
&=&}{\,\,=\,\,}
\delta_t^2 + 4\eta_t^2 L^2+
\mathbf{1}_{\{\bI_{t} = k \}} 4L\eta_t \delta_t.
\end{eqnarray*}
From this formula we derive bounds for the two conditional expectations:
\begin{eqnarray}
\ex{}{\delta_{t+1}^2~\vert~B_{t-1}}&\leq&
%\ex{}{\delta_t^2~\vert~B_{t-1} } +4L^2\eta_t^2 + \ex{}{\mathbf{1}_{\{\bI_{t} = k \}}4L\eta_t\delta_t~\vert~ B_{t-1}} \notag\\
%&=&
\ex{}{\delta_t^2~\vert~B_{t-1} } +4L^2\eta_t^2 + \frac{4L}{n}\eta_t \ex{}{\delta_t~\vert~ B_{t-1}} \label{eqn:cond_bd_1}
\\
 \ex{}{\delta^2_{t+1}~\vert~\bI_t=k,~\overline{B_{t-1}}} &\leq& 4L^2\eta_t^2, \label{eqn:cond_bd_2}
\end{eqnarray}
where \eqref{eqn:cond_bd_1} holds by independence of $\bI_{t}$ and $B_{t-1}$, and in \eqref{eqn:cond_bd_2} we used that $\delta_t=0$ conditioned on $\overline{B_{t-1}}$.

%By combining the above bound with the bound in (\ref{ineq:decompose_B_t}) and noting that $\overline{B_{t-1}}$ implies $\delta_t=0$, we reach
Combining \eqref{eqn:cond_bd_1} and \eqref{eqn:cond_bd_2} in \eqref{ineq:decompose_B_t}, we get
\begin{eqnarray*}
%\Longrightarrow\quad
\ex{}{\delta_{t+1}^2~\vert~ B_t} &\leq&
\ex{}{\delta_{t}^2~\vert~ B_{t-1}} + 8L^2\eta_t^2 + \frac{4L}{n}\eta_t\ex{}{\delta_t~\vert~B_{t-1}}\\
\\
%\Longrightarrow\quad
\EE[\delta_{T}^2 ~|~ B_{T-1}] &\leq&
8L^2\sum_{t=1}^{T-1}\eta_t^2 +\frac{4L}{n}\sum_{t=1}^{T-1}\eta_t\EE[\delta_t~| B_{t-1}].
\end{eqnarray*}
With this last bound we can proceed inductively to show that
$$\ex{}{\delta_{T}~|~B_{T-1}}\leq \sqrt{8}L \sqrt{\sum_{t=1}^{T-1}\eta_t^2}  +\frac{2L}{n}\sum_{t=1}^{T-1}\eta_t.$$
The base case, $T=0$, is evident, and the inductive step can be considered in two separate cases; namely, the case where  $\EE[\delta_T~|~B_{T-1}]\leq \max_{t\in[T-1]} \EE[\delta_t~|~B_{t-1}]$, which can be obtained by the induction hypothesis; and the case where $\EE[\delta_T~|~B_{T-1}]> \max_{t\in[T-1]} \EE[\delta_t~|~B_{t-1}]$, for which
\begin{eqnarray*}
\EE[\delta_{T}^2|B_{T-1}] \! &\leq& \!
8L^2\sum_{t=1}^{T-1}\eta_t^2 +\frac{4L}{n}\sum_{t=1}^{T-1}\eta_t\EE[\delta_t|B_{t-1}]
\,\,\leq\,\, 8L^2\sum_{t=1}^{T-1}\eta_t^2 +\frac{4L}{n}\sum_{t=1}^{T-1}\eta_t\EE[\delta_T|B_{T-1}].
\end{eqnarray*}
Then
\begin{eqnarray*}
\EE_{\bI}\Big[\Big(\delta_T-\frac{2L}{n}\sum_{t=1}^{T-1}\eta_t\Big)^2~\Big\vert~B_{T-1}\Big]
&\leq& 8L^2\sum_{t=1}^{T-1}\eta_t^2  +\Big(\frac{2L}{n}\sum_{t=1}^{T-1}\eta_t \Big)^2,
\end{eqnarray*}
and by the Jensen inequality
\begin{eqnarray*}
\EE_{\bI}\Big[\delta_T-\frac{2L}{n}\sum_{t=1}^{T-1}\eta_t\Big\vert B_{T-1}\Big]
&\leq& \sqrt{8L^2\sum_{t=1}^{T-1}\eta_t^2  +\Big(\frac{2L}{n}\sum_{t=1}^{T-1}\eta_t \Big)^2}
\,\,\leq\,\, \sqrt{8}L\sqrt{\sum_{t=1}^{T-1}\eta_t^2} +\frac{2L}{n}\sum_{t=1}^{T-1}\eta_t,
\end{eqnarray*}
proving the inductive step. Finally, putting this together with (\ref{ineq:cond_delta_B_T}) completes the proof.
\end{proof}

%\cnote{We can comment out the rest of the section if the permutation
%based algorithms are not sufficiently interesting, nor leading
%to further applications.}

\begin{thm} \label{thm:UB_perm_SGD_2}
Let $\X\subseteq\B(0,R)$, $\F=\F_{\X}^0(L)$, $\boldsymbol{\pi}$ be a uniformly random permutation over $[n]$, and $(\eta_t)_{t\in[T]}$ be a non-increasing sequence. Let $T\leq n$. The UAS of fixed-permutation stochastic gradient descent (Algorithm \ref{alg:perm_SGD}) satisfies \stabname
$\EE\left[\delta_{\A_{\sf PerSGD}}\right]\leq \min\{2R,  \sqrt{2}L\frac{T-1}{n}\sqrt{\sum_{t=1}^{T-1} \eta_t^2}\}$. %\frac{2LT}{n}\sqrt{\sum_{t=1}^{T-1}\eta_t^2}\}$.%, where
% $$\Delta_T :=
% \left\{
% \begin{array}{ll}
% \frac{2LT}{n}\sqrt{\sum_{t=1}^{T-1}\eta_t^2} & \mbox{ if } T\leq n\\
% 2L \sqrt{\sum_{t=1}^{T-1}\eta_t^2}  +\frac{4L}{n}\sum_{t=1}^{T-1}\eta_t &\mbox{ if } T>n.
% \end{array}
% \right.
% $$
\end{thm}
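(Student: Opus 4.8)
The $2R$ term is immediate from $\X\subseteq\B(0,R)$, so the plan is to establish the second bound. As in the proof of Theorem~\ref{thm:UB_perm_SGD}, I would first reduce to a convenient normalization: since $\boldsymbol\pi$ is drawn uniformly at random and the neighboring datasets $S\simeq S'$ are arbitrary, one may assume without loss of generality that $\boldsymbol\pi$ is the identity and that the single index at which $S$ and $S'$ differ is a uniformly random $\bI\sim\Unif([n])$; conditioning on $\boldsymbol\pi$ then amounts to conditioning on the position $\bI=i$ at which the perturbed data point is used. This is exactly the source of the improvement over Theorem~\ref{thm:UB_perm_SGD}: for a worst-case permutation the perturbed point may land at the first step, whereas averaging over $i\in[n]$ gains a factor of order $(T-1)/n$.

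The core of the argument is a conditional application of the Basic Lemma~\ref{lem:main_lem}. Conditioned on $\bI=i$, the trajectories $(x^t)$ and $(y^t)$ are online subgradient descent trajectories for $f_t(\cdot)=f(\cdot,z_t)$ and $f_t'(\cdot)=f(\cdot,z_t')$, which coincide for all $t\neq i$; hence $t_0=i$ and one may take $a_t=2L\cdot\mathbf{1}_{\{t=i\}}$. Since $T\le n$ the perturbed index is used at most once, so the linear term $2\sum_{t=i+1}^{T-1}\eta_t a_t$ in Lemma~\ref{lem:main_lem} vanishes, yielding the deterministic conditional bound $\delta_T=\|x^T-y^T\|\le 2L\sqrt{\sum_{t=i}^{T-1}\eta_t^2}$ (and $\delta_T=0$ when $i\ge T$, since then the perturbed point is never touched). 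Alternatively this can be re-derived directly: $\delta_t=0$ for $t\le i$, $\delta_{i+1}\le 2L\eta_i$ by nonexpansiveness of the projection and boundedness of the subgradients, and $\delta_{t+1}^2\le\delta_t^2+4L^2\eta_t^2$ for $t>i$ by monotonicity of the subgradient (precisely the recursion inside Lemma~\ref{lem:main_lem}).

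It then remains to average over $\bI$ and do routine bookkeeping: by the law of total expectation, $\EE[\delta_T]=\frac1n\sum_{i=1}^{n}\EE[\delta_T\mid\bI=i]\le\frac{2L}{n}\sum_{i=1}^{T-1}\sqrt{\sum_{s=i}^{T-1}\eta_s^2}$; using the non-increasing step sizes (so $\sum_{s=i}^{T-1}\eta_s^2\le(T-i)\eta_i^2$) and then Cauchy--Schwarz on $\sum_{i=1}^{T-1}\sqrt{T-i}\,\eta_i$ gives the claimed bound $\sqrt2\,L\,\tfrac{T-1}{n}\sqrt{\sum_{t=1}^{T-1}\eta_t^2}$. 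Finally, since the algorithm returns a step-size-weighted average of the iterates, the triangle inequality transfers the bound from per-iterate distances to the output, and combining with the trivial $2R$ bound completes the proof.

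I expect the only genuinely delicate point to be the first step --- justifying the reduction to a uniformly random perturbed position, and making precise that the $(T-1)/n$ savings is an averaging effect over $\boldsymbol\pi$ that is simply unavailable in the adversarial-permutation statement of Theorem~\ref{thm:UB_perm_SGD}. Everything after that is a conditional invocation of Lemma~\ref{lem:main_lem} together with standard Cauchy--Schwarz and monotone-step-size estimates.
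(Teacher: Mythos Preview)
Your proposal is correct and follows essentially the same route as the paper's proof: the same reduction to the identity permutation with a uniformly random perturbed index $\bI$, the same conditional bound $\delta_T\le 2L\sqrt{\sum_{s=i}^{T-1}\eta_s^2}$ (the paper derives the recursion directly rather than citing Lemma~\ref{lem:main_lem}, but you note this equivalence yourself), and the same final chain---non-increasing step sizes to get $\sum_{s=i}^{T-1}\eta_s^2\le(T-i)\eta_i^2$, then Cauchy--Schwarz on $\sum_i\sqrt{T-i}\,\eta_i$ together with $\sum_{i=1}^{T-1}(T-i)\le(T-1)^2/2$.
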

\ifthenelse{\boolean{fullver}}{
%\rnote{In the conference version, I would remove this observation and just keep the comment you have after the theorem in the next section.} \cnote{I agree. But it might make more sense to defer this observation to Thm.~3.4. Then it makes evident why are we makng this comparison.}
\begin{obs} \label{obs:equiv_rates_SGD}
Note that the bound above for fixed permutation SGD in Theorem~\ref{thm:UB_perm_SGD_2} is of the same order as that of sampling with replacement SGD in Theorem~\ref{thm:UB_repl_SGD_2}.
%Note that the bound above is essentially (up to a factor of $2$) equivalent to the bound $\delta_{\A_{\sf rSGD}}\leq \min\left(2R,~ \Delta_T\right),$ where
%$$\Delta_T =
%\left\{
%\begin{array}{ll}
%\frac{4LT}{n}\sqrt{\sum_{t=1}^{T-1}\eta_t^2} & \mbox{ if } T\leq n\\
%\sqrt{8}L \sqrt{\sum_{t=1}^{T-1}\eta_t^2}  +\frac{4L}{n}\sum_{t=1}^{T-1}\eta_t &\mbox{ if } T>n.
%\end{array}
%\right.
%$$
This is because $\sqrt{\sum_{t=1}^{T-1}\eta_t^2}\geq \frac{1}{\sqrt{T}}\sum_{t=1}^{T-1}\eta_t$ (by Cauchy-Schwarz inequality), and hence, when $T\leq n$, we would have $\sqrt{\sum_{t=1}^{T-1}\eta_t^2}\geq \frac{1}{\sqrt{n}}\sum_{t=1}^{T-1}\eta_t\geq \frac{1}{n}\sum_{t=1}^{T-1}\eta_t$.% (assuming $n\geq 16$). %Thus, when $T\leq n$, the bound in the theorem is bounded from above by (and within a factor of $2$ from) $\min\left(2R,~ \frac{4LT}{n}\sqrt{\sum_{t=1}^{T-1}\eta_t^2}\right)$.
\end{obs}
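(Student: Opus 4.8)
The plan is to notice that both upper bounds have the form $\min\{2R,\; L\frac{T-1}{n}\cdot(\text{bracket})\}$ with the \emph{same} prefactor $L\frac{T-1}{n}$ and the same truncation at $2R$; only the bracketed factors and the leading absolute constants ($3$ versus $\sqrt 2$) differ. Writing $A=\sqrt{\sum_{t=1}^{T-1}\eta_t^2}+\frac1n\sum_{t=1}^{T-1}\eta_t$ for the sampling-with-replacement bracket (Theorem~\ref{thm:UB_repl_SGD_2}) and $B=\sqrt{\sum_{t=1}^{T-1}\eta_t^2}$ for the fixed-permutation bracket (Theorem~\ref{thm:UB_perm_SGD_2}), it suffices to prove that $A$ and $B$ coincide up to an absolute constant, i.e.\ $B\le A\le c\,B$ for a universal $c$; restoring the common prefactor and the $\min\{2R,\cdot\}$ cap then immediately yields that the two full bounds are of the same order.

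One direction is immediate: $B\le A$, since $A$ is obtained from $B$ by adding the nonnegative quantity $\frac1n\sum_{t=1}^{T-1}\eta_t$. The content is in the reverse inequality, whose key step is to dominate the $\ell_1$-type term by the $\ell_2$-type term. By Cauchy--Schwarz,
$$ \sum_{t=1}^{T-1}\eta_t \;\le\; \sqrt{T-1}\,\sqrt{\sum_{t=1}^{T-1}\eta_t^2}. $$
Invoking the standing hypothesis $T\le n$ (so $\sqrt{T-1}\le\sqrt n$) and then $\sqrt n\le n$ gives $\frac1n\sum_{t=1}^{T-1}\eta_t\le \frac{1}{\sqrt n}\sqrt{\sum_{t=1}^{T-1}\eta_t^2}\le B$, whence $A\le 2B$. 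Combining the two directions, $B\le A\le 2B$, and therefore both full bounds equal $\Theta\!\left(\min\{R,\;L\frac{T-1}{n}\sqrt{\sum_{t=1}^{T-1}\eta_t^2}\}\right)$.

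The only place requiring any care is the Cauchy--Schwarz step, and specifically the use of the regime assumption $T\le n$: a bare application of Cauchy--Schwarz produces a factor $\sqrt T$, which is harmless precisely because $\sqrt T\le\sqrt n\le n$ lets it be absorbed into the $1/n$ normalization of the extra term. This is exactly why the comparison is valid only in the regime $T\le n$ for which Theorems~\ref{thm:UB_repl_SGD_2} and~\ref{thm:UB_perm_SGD_2} are stated; for $T>n$ the $\ell_1$ term would no longer be dominated by the $\ell_2$ term and the equivalence of the two bounds would break down.
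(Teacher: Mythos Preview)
Your proposal is correct and follows essentially the same route as the paper: the justification embedded in the observation is exactly the Cauchy--Schwarz bound $\sum_{t=1}^{T-1}\eta_t\le\sqrt{T}\,\sqrt{\sum_{t=1}^{T-1}\eta_t^2}$ together with $T\le n$ (so $\sqrt{T}\le\sqrt{n}\le n$), which dominates the extra $\frac{1}{n}\sum_{t}\eta_t$ term by the $\ell_2$ term and yields $B\le A\le 2B$. Your framing in terms of $A$ and $B$ and your remark on why the regime $T\le n$ is essential are a clean elaboration of the same argument.
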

}{
Notice that the \UHS of  Algorithms~\ref{alg:repl_SGD} and \ref{alg:perm_SGD} are of the same order. The proof is in Appendix \ref{app:pf_UB_repl_SGD}.
}

%Before presenting the proof, it is instructive to compare the upper bound of Thm.~\ref{thm:UB_perm_SGD} with that of Thm.~\ref{thm:UB_repl_SGD}. For sublinear (in $n$) number of iterations, permutation SGD has strictly better stability, which is natural to expect. However, for superlinear regimes both algorithms perform comparably. \rnote{I am not sure what is meant here by ``strictly'' better. Up to a constant factor (of at most $2$) both are of exactly the same stability rate for all values of $T$ (as indicated in the table in the intro). See the paragraph I added right after the statement of Theorem~\ref{thm:UB_repl_SGD}.} \cnote{Here I meant even w.r.t. constant factors, but we can say both are comparable.}

\ifthenelse{\boolean{fullver}}{

}{}

%\ifthenelse{\boolean{fullver}}{
%\section{Stability Lower Bounds with High-Probability} \label{app:HP_LB}
%\input{HP_LB_GD_SGD.tex}
%}

\ifthenelse{\boolean{fullver}}{}{
\section{Generalization in Stochastic Optimization with Data Reuse}
\label{app:DataReuse}

}

\section{High-probability Bound on Optimization Error of SGD with Noisy Gradient Oracle}\label{app:hp-sgd-opt-err}

It is known that standard online-to-batch conversion technique can be used to provide high-probability bound on the optimization error (i.e., the excess empirical risk) of stochastic gradient descent. For the sake of completeness and to make the paper more self-contained, we re-expose this technique here for stochastic gradient descent with noisy gradient oracle. This is done in the following lemma, which is used in the proofs of our results in Section~\ref{sec:Applications}.

% We show next an upper bound on the excess e risk of stochastic subgradient methods with unbiased gradient oracles w.r.t.~the empirical loss. Standard online to batch conversion in this case would only provide high probability guarantees w.r.t.~the empirical loss (i.e., optimization error);
% the way we ultimately obtain excess risk bounds here is due to uniform stability.

\begin{lem}[Optimization error of SGD with noisy gradient oracle] \label{lem:online_to_batch}
Let $S=(z_1,\ldots,z_n)\in \Z^n$ be a dataset. Let $F_{S}(x)=\frac1n\sum_{i\in[n]}f(x,z_i)$ be the empirical risk associated with $S$, where for every $z\in\Z,$ $f(\cdot, z)$ is convex, $L$-Lipschitz function over $\X\subseteq \B(0,R)$ for some $L, R >0$. Consider the stochastic (sub)gradient method:
$$ x^{t+1}=x^t-\eta\cdot\bg(x,\xi_t) \qquad(\forall t=0,\ldots,T-1), $$
with output $\overline{x}^T=\frac1T\sum_{t\in[T]}x^t$; where $\xi_1,\ldots,\xi_T$ are drawn uniformly from  from $S$ with replacement, and for all $z\in\Z$, $\bg(., z):\X\rightarrow \RR^d$ is a random map (referred to as noisy gradient oracle) that satisfies the following conditions:
\begin{enumerate}
    \item Unbiasedness: For every $x\in\X, z\in\Z$,~ $\EE[\bg(x,z)]=\nabla f(x, z)$, where the expectation is taken over the randomness in the gradient oracle $\bg(\cdot, z)$.
    \item Sub-Gaussian gradient noise: There is $\sigma^2\geq0$ such that for every $x\in\X, z\in\Z$, $\bg(x, z)-\nabla f(x, z)$ is $\sigma^2$-sub-Gaussian random vector; that is, for every $x\in\X, z\in\Z,$ $\langle \bg(x, z)- \nabla f(x, z),~ u\rangle$ is $\sigma^2$-sub-Gaussian random variable $\forall u\in \B(0, 1)$.
    \item Independence of the gradient noise across iterations: conditioned on any fixed realization of $(\xi_t)_{t\in [T]}$ the sequence of random maps $\bg(\cdot, \xi_1), \ldots, \bg(\cdot, \xi_T)$ is independent. (Here, randomness comes only from the gradient oracle.)
\end{enumerate}
Then, for any $\theta\in (4e^{-T/32}, 1)$, with probability at least $1-\theta$, the optimization error (i.e., the excess empirical risk) of this method is bounded as 
$$\opterr \leq \left(LR+\sigma(R+\eta L)\right)\sqrt{\frac{2\log(4/\theta)}{T}}+\frac{R^2}{2\eta T}+
 \eta \,(\frac{L^2}{2} +d\sigma^2).$$
\end{lem}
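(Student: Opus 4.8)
The plan is to run the classical online--to--batch analysis of (projected) stochastic subgradient descent, treating the noisy gradients $\bg_t:=\bg(x^t,\xi_t)$ as the moves of an online gradient descent player against the linearized losses $x\mapsto\langle \bg_t,x\rangle$, and then to convert the resulting regret bound into a bound on $F_S(\barx^T)-\min_{\X}F_S$ via convexity of $f(\cdot,z)$, Jensen's inequality for $F_S$, and three concentration estimates for the sampling/oracle noise. Concretely, fix $x^\ast\in\arg\min_{\X}F_S$ and write $\zeta_t:=\bg_t-\nabla f(x^t,\xi_t)$ for the oracle noise, so that $\EE[\zeta_t\mid x^t,\xi_t]=0$ and $\zeta_t$ is $\sigma^2$-sub-Gaussian in every direction. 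From the one-step inequality $\|x^{t+1}-x^\ast\|^2\le\|x^t-x^\ast\|^2-2\eta\langle \bg_t,x^t-x^\ast\rangle+\eta^2\|\bg_t\|^2$ (using nonexpansiveness of $\proj_{\X}$ when a projection is present, and $x^\ast\in\X$), telescoping and dividing by $\eta T$ gives $\frac1T\sum_t\langle \bg_t,x^t-x^\ast\rangle\le\frac{\|x^1-x^\ast\|^2}{2\eta T}+\frac{\eta}{2T}\sum_t\|\bg_t\|^2$. Convexity of $f(\cdot,\xi_t)$ yields $\langle \bg_t,x^t-x^\ast\rangle\ge f(x^t,\xi_t)-f(x^\ast,\xi_t)+\langle\zeta_t,x^t-x^\ast\rangle$, and expanding $\|\bg_t\|^2=\|\nabla f(x^t,\xi_t)\|^2+2\langle\nabla f(x^t,\xi_t),\zeta_t\rangle+\|\zeta_t\|^2\le L^2+2\langle\nabla f(x^t,\xi_t),\zeta_t\rangle+\|\zeta_t\|^2$ lets me collect everything into
\[
\frac1T\sum_{t}\big(f(x^t,\xi_t)-f(x^\ast,\xi_t)\big)\ \le\ \frac{\|x^1-x^\ast\|^2}{2\eta T}+\frac{\eta L^2}{2}+\frac{\eta}{2T}\sum_{t}\|\zeta_t\|^2+\frac1T\sum_{t}\big\langle\zeta_t,\ \eta\nabla f(x^t,\xi_t)-x^t+x^\ast\big\rangle,
\]
where the deterministic term is $O(R^2/(\eta T))$ by the radius bound on $\X$ and $\frac{\eta L^2}{2}$ already matches the target.

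Next I control the three noise terms, each on an event of probability at least $1-\theta/4$. First, $\frac1T\sum_t\|\zeta_t\|^2$: each $\|\zeta_t\|^2$ is a sub-exponential random variable with mean at most $d\sigma^2$ (summing the per-coordinate variance bounds $\EE[\zeta_{t,i}^2]\le\sigma^2$), so a Bernstein-type bound for a sum of such variables, combined with the hypothesis $\theta>4e^{-T/32}$ (equivalently $T>32\log(4/\theta)$), yields $\frac1T\sum_t\|\zeta_t\|^2\le 2d\sigma^2$, hence $\frac{\eta}{2T}\sum_t\|\zeta_t\|^2\le\eta d\sigma^2$. Second, $\frac1T\sum_t\langle\zeta_t,\eta\nabla f(x^t,\xi_t)-x^t+x^\ast\rangle$ is a sum of martingale differences with respect to the filtration generated by $\xi_1,\dots,\xi_t$ and the oracle randomness through step $t-1$ (enriched by $\xi_t$): the multiplier $\eta\nabla f(x^t,\xi_t)-x^t+x^\ast$ is measurable with respect to this $\sigma$-field while $\EE[\zeta_t\mid\cdot]=0$, and the increment is sub-Gaussian with parameter $\le\sigma^2\|\eta\nabla f(x^t,\xi_t)-x^t+x^\ast\|^2\le\sigma^2(R+\eta L)^2$ (using $\|x^t-x^\ast\|\le R$ and $\|\nabla f\|\le L$); Azuma--Hoeffding then gives $\le\sigma(R+\eta L)\sqrt{2\log(4/\theta)/T}$. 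Third, to replace empirical averages of $f$ by their means, note that $F_S(x^t)-f(x^t,\xi_t)$ is a bounded martingale-difference sequence ($|f(\cdot,z)|\le RL$ and $\EE_{\xi_t}f(x^t,\xi_t)=F_S(x^t)$ with $x^t$ measurable in the past), and $f(x^\ast,\xi_t)$ are i.i.d.\ bounded with mean $F_S(x^\ast)$; Azuma/Hoeffding give, up to absolute constants, $\frac1T\sum_t(F_S(x^t)-F_S(x^\ast))\le\frac1T\sum_t(f(x^t,\xi_t)-f(x^\ast,\xi_t))+LR\sqrt{2\log(4/\theta)/T}$.

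Finally, by Jensen $F_S(\barx^T)-F_S(x^\ast)\le\frac1T\sum_t\big(F_S(x^t)-F_S(x^\ast)\big)$, and chaining this with the third concentration estimate, the displayed inequality, and the first two estimates, then taking a union bound over the four failure events, gives with probability at least $1-\theta$
\[
\opterr=F_S(\barx^T)-F_S(x^\ast)\ \le\ \big(LR+\sigma(R+\eta L)\big)\sqrt{\tfrac{2\log(4/\theta)}{T}}+\tfrac{R^2}{2\eta T}+\eta\Big(\tfrac{L^2}{2}+d\sigma^2\Big),
\]
as claimed. The main obstacle I expect is the term $\frac1T\sum_t\|\zeta_t\|^2$: unlike the inner-product terms these are unbounded, $d$-dimensional quadratic fluctuations, so they require a Bernstein-type concentration for sums of sub-exponential variables with careful control of the dimension dependence, and obtaining the clean conclusion $\le 2d\sigma^2$ is exactly what forces the range restriction $\theta>4e^{-T/32}$. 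A secondary point of care is that the cross-term and the $\langle\zeta_t,\cdot\rangle$ martingale must be adapted to the filtration enriched by the current sample $\xi_t$; and if one reads the update literally without the projection, one additionally needs to argue the iterates stay in a bounded neighborhood of $\X$ (which is immediate for the projected update used in Algorithm~\ref{Alg:NSGD}).
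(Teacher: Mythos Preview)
Your proposal is correct and follows essentially the same approach as the paper: Jensen/convexity to pass to the averaged iterate, the standard OGD one-step telescoping, Azuma/Hoeffding for the two sampling-noise terms $F_S(x^t)-f(x^t,\xi_t)$ and $f(x^\ast,\xi_t)-F_S(x^\ast)$, a sub-Gaussian martingale (Azuma--Hoeffding) bound for the linear oracle-noise term $\langle\zeta_t,\,\eta\nabla f(x^t,\xi_t)-(x^t-x^\ast)\rangle$, and a Bernstein-type bound for the quadratic term $\sum_t\|\zeta_t\|^2$ where the restriction $\theta>4e^{-T/32}$ absorbs the fluctuation into $\eta d\sigma^2$. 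The only cosmetic discrepancy is that $\|x^t-x^\ast\|\le 2R$ (not $R$) when $\X\subseteq\B(0,R)$, which would give $\sigma(2R+\eta L)$ in the sub-Gaussian term; the paper's own proof also obtains $(2R+\eta L)$ there, so the constant in the stated bound is slightly loose in both.
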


\begin{proof}
Let $x^{\ast}_S\in \arg\min\limits_{x\in \X}F_S(x)$. By convexity of the empirical loss, we have
\begin{align} 
 &F_{S}(\overline{x}^T) -F_S(x^{\ast}_S)
\leq \frac{1}{T} \sum_{t\in[T]} F_{S}(x^t)-F_S(x^{\ast}_S)\nonumber\\
&=   \frac{1}{T}\sum_{t\in[T]} [F_{S}(x^t)-f(x^t,\xi_t)]+ \frac{1}{T}\sum_{t\in[T]}[f(x^{\ast}_S,\xi_t)-F_{S}(x^{\ast}_S)]+
\frac{1}{T}\sum_{t\in[T]} [f(x^t,\xi_t)-f(x^{\ast}_S,\xi_t)]. \label{eqn:online_to_batch}
\end{align}
Since $(\xi_t)_{t\in [T]}$ are sampled uniformly without replacement from $S,$ we have $$\ex{\xi_t~|~x^1,\ldots, x^{t-1}}{f(x^t, \xi_t)|x^{1}, \ldots, x^{t-1}, x^{t}=v}=F_S(v),$$ 
for all $v\in\X, t\in [T]$. Moreover, since the range of $f$ lies in $[-LR, LR],$ it follows that $Y_t:=\sum_{j=1}^t f(x^{j}, \xi_j), ~t\in [T]$ is a martingale with bounded differences (namely, bounded by $2LR$). Therefore, by Azuma's inequality, the first term in \eqref{eqn:online_to_batch} satisfies
\begin{align}
\PP \Bigg[\frac{1}{T}\sum_{t\in[T]} [F_{S}(x^t)-f(x^t,\xi_t)] >
LR\sqrt{\frac{2\log\frac{4}{\theta}}{T}}~\Bigg] \leq \frac{\theta}{4}.\label{ineq:azuma}
\end{align}

By Hoeffding's inequality, the second term in \eqref{eqn:online_to_batch} also satisfies the same bound; namely, 
\begin{align}
\PP \Bigg[\frac{1}{T}\sum_{t\in[T]}[f(x^{\ast}_S, \xi_t)-F_{S}(x^{\ast}_S)] >
LR\sqrt{\frac{2\log\frac{4}{\theta}}{T}}~\Bigg] \leq \frac{\theta}{4}.\label{ineq:hoeff}
\end{align}
Using similar analysis to that of the standard online gradient descent analysis \cite{Zinkevich:2003},
the last term in \eqref{eqn:online_to_batch} can be bounded as
\begin{align} 
 &\frac{1}{T}\sum_{t\in[T]}[f(x^t,\xi_t)-f(x^{\ast}_S,\xi_t)] \leq \frac{R^2}{2T\eta} +\frac{1}{T}\sum_{t\in[T]}\langle \nabla f(x^t,\xi_t)-\bg(x^t, \xi_t),~ x^t-x^{\ast}_S\rangle
+\frac{\eta}{2T} \sum_{t\in[T]}\|\nabla \bg(x^t,\xi_t)\|^2\nonumber\\
=~&~\frac{R^2}{2T\eta} +\frac{1}{T}\sum_{t\in[T]}\langle \nabla f(x^t,\xi_t)-\bg(x^t, \xi_t),~ x^t-x^{\ast}_S -\eta \nabla f(x^t, \xi_t)\rangle +\frac{\eta}{2T} \sum_{t\in[T]}\|\bg(x^t,\xi_t)-\nabla f(x^t, \xi_t)\|^2\nonumber\\
&+\frac{\eta}{2T} \sum_{t\in[T]}\|\nabla f(x^t, \xi_t)\|^2\nonumber\\
\leq~&~ \frac{R^2}{2T\eta}+\frac{\eta L^2}{2} +\frac{1}{T}\sum_{t\in[T]}\langle \nabla f(x^t,\xi_t)-\bg(x^t, \xi_t),~ x^t-x^{\ast}_S -\eta \nabla f(x^t, \xi_t)\rangle
+\frac{\eta}{2T} \sum_{t\in[T]}\|\bg(x^t,\xi_t)-\nabla f(x^t, \xi_t)\|^2\label{eqn:reg_bd}
\end{align}
By the properties of the gradient oracle stated in the lemma, we can see that for any fixed realization of $(x^t, \xi_t)_{t\in [T]}$, the second term in \eqref{eqn:reg_bd} is $\left(2R+\eta L\right)^2\frac{\sigma^2}{T}$-sub-Gaussian random variable. Hence,
\begin{align}
\pr{}{\frac{1}{T}\sum_{t\in[T]}\langle \nabla f(x^t,\xi_t)-\bg(x^t, \xi_t), ~x^t-x^{\ast}_S -\eta \nabla f(x^t, \xi_t)\rangle > \left(2R+\eta L\right)\sigma\sqrt{\frac{2\log(4/\theta)}{T}}}&\leq \frac{\theta}{4}.\label{ineq:subgauss1}
\end{align}
Let $U_t:=\|\bg(x^t,\xi_t)-\nabla f(x^t, \xi_t)\|^2,~ t\in [T]$. Note that $\ex{}{U_t}\leq d\sigma^2$. Moreover, observe (e.g., by \cite[Lemma~1.12]{mit}) that for any fixed realization of $x^t, \xi_t$, $V_t:=U_t-\ex{}{U_t}$ is a sub-exponential random variable with parameter $16d\sigma^2$; namely, $\ex{}{\exp(\lambda V_t)}\leq \exp(128\lambda^2\sigma^4d^2),~ \lambda \leq \frac{1}{16\sigma^2 d}.$ 
Hence, by a standard concentration argument (e.g., Bernstein's inequality), we have 
\begin{align}
\pr{}{\frac{\eta}{2T} \sum_{t\in[T]}\|\bg(x^t,\xi_t)-\nabla f(x^t, \xi_t)\|^2> \frac{\eta}{2}d\sigma^2 + 16\eta d\sigma^2\,\frac{\log(4/\theta)}{T}}&\leq \theta/4.\label{ineq:subgauss2}
\end{align}
Putting \eqref{ineq:subgauss1} and \eqref{ineq:subgauss2} together, and noticing that $T>32\log(4/\theta)$, we conclude that with probability at least $1-\theta/2,$ the third term of \eqref{eqn:online_to_batch} is bounded as $$\frac{1}{T}\sum_{t\in[T]}[f(x^t,\xi_t)-f(x^{\ast}_S,\xi_t)]\leq \frac{R^2}{2T\eta}+\frac{\eta L^2}{2}+\eta\sigma^2 d+\left(2R+\eta L\right)\sigma\sqrt{\frac{2\log(4/\theta)}{T}}.$$ 
 
Hence, by the union bound, we conclude that with probability at least $1-\theta,$ the excess empirical risk of the stochastic subgradient method is bounded as 

$$\opterr \leq \left(LR+\sigma(2R+\eta L)\right)\sqrt{\frac{2\log(4/\theta)}{T}}+\frac{R^2}{2\eta T}+
 \eta \,(\frac{L^2}{2} +d\sigma^2).$$
 
\end{proof}

\section{Empirical Risk of Fixed-Permutation SGD} \label{App:Fixed_Perm_SGD}

Our optimization error analysis is based on \cite[Lemma 2.1]{Nedic:2001}.
\begin{lem} \label{lem:perm_SGD_round}
Let us consider the fixed permutation stochastic gradient descent (Algorithm \ref{alg:perm_SGD}), for arbitrary permutation (i.e., not necessarily random) and with constant step size over each epoch (i.e., $\eta_{(k-1)n+t}\equiv\eta_k$ for all $t\in[n]$, $k\in[K]$). Then
$$
\eta_k[F_S(x^k)-F_{S}(y)]
\leq \frac{1}{2n}[\|x^k-y\|^2-\|x^{k+1}-y\|^2] + \frac{\eta_k^2L^2(n+2)}{2}
%\frac{\eta_k^2L^2}{2}+\frac{\eta_k L^2(n+1)}{2n}
\qquad(\forall y\in \X).
%\|x^{k+1}-y\|^2
%\leq \|x^k-y\|^2 +\eta_k^2L^2+2\eta_k n[F_{S}(y)-F_S(x^{k+1})]+\eta_k^2L(n+1).
$$
\end{lem}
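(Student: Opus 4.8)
The plan is to run the classical incremental (cyclic) subgradient argument one epoch at a time, exactly in the spirit of \cite[Lemma~2.1]{Nedic:2001}. Fix an epoch $k\in[K]$ and a comparison point $y\in\X$, and write the intra‑epoch iterates of Algorithm~\ref{alg:perm_SGD} as $x_1^k,\ldots,x_{n+1}^k$, so that $x^k=x_1^k$ and $x^{k+1}=x_{n+1}^k$; abbreviate $g_t:=\nabla f(x_t^k,z_{\pi(t)})\in\partial f(x_t^k,z_{\pi(t)})$ and recall that the step size is the constant $\eta_k$ throughout the epoch. Since every iterate lies in $\X$ (each is a projection onto $\X$, and $x_1^k\in\X$) and $y=\proj_{\X}(y)$, nonexpansiveness of $\proj_{\X}$ gives, for every $t\in[n]$,
$$\|x_{t+1}^k-y\|^2\le \|x_t^k-y-\eta_k g_t\|^2=\|x_t^k-y\|^2-2\eta_k\langle g_t,x_t^k-y\rangle+\eta_k^2\|g_t\|^2.$$

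Next I would lower bound $\langle g_t,x_t^k-y\rangle$ and turn it into a statement about $F_S$ evaluated at the fixed point $x^k=x_1^k$. Convexity (the subgradient inequality at $x_t^k$) gives $\langle g_t,x_t^k-y\rangle\ge f(x_t^k,z_{\pi(t)})-f(y,z_{\pi(t)})$, while $L$‑Lipschitzness gives both $\|g_t\|\le L$ and $f(x_t^k,z_{\pi(t)})\ge f(x_1^k,z_{\pi(t)})-L\|x_t^k-x_1^k\|$. The one step that requires care—and the source of the quadratic‑in‑$n$ term in the step‑size error—is bounding the drift $\|x_t^k-x_1^k\|\le (t-1)\eta_k L$: each intra‑epoch move satisfies $\|x_{s+1}^k-x_s^k\|=\|\proj_{\X}(x_s^k-\eta_k g_s)-x_s^k\|\le \eta_k\|g_s\|\le \eta_k L$ (again using $x_s^k\in\X$ and nonexpansiveness), and the triangle inequality sums these. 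Plugging these three facts into the recursion above yields, for each $t\in[n]$,
$$\|x_{t+1}^k-y\|^2\le \|x_t^k-y\|^2-2\eta_k\big[f(x_1^k,z_{\pi(t)})-f(y,z_{\pi(t)})\big]+\eta_k^2 L^2(2t-1).$$

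Finally I would sum this telescoping inequality over $t=1,\ldots,n$. Because $\pi$ is a permutation, $\sum_{t=1}^n f(x_1^k,z_{\pi(t)})=nF_S(x^k)$ and $\sum_{t=1}^n f(y,z_{\pi(t)})=nF_S(y)$, and $\sum_{t=1}^n(2t-1)=n^2\le n(n+2)$, so the telescoped bound reads
$$\|x^{k+1}-y\|^2\le \|x^k-y\|^2-2n\eta_k\big[F_S(x^k)-F_S(y)\big]+n(n+2)\eta_k^2 L^2.$$
Rearranging and dividing by $2n$ gives exactly the claimed inequality, valid for all $y\in\X$. The proof uses nothing beyond convexity, the Lipschitz (hence bounded‑subgradient) property, and nonexpansiveness of the projection; the only mildly delicate point is the control of the intra‑epoch drift $\|x_t^k-x_1^k\|$ and the bookkeeping of the resulting $\sum_t(2t-1)$ error term, which I do not expect to present any real obstacle.
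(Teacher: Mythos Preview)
Your proof is correct and follows essentially the same incremental-subgradient argument as the paper. The only cosmetic difference is bookkeeping: the paper bounds the drift sum by $\sum_{t=1}^n t=n(n+1)/2$ (implicitly using $\|x^k-x_t^k\|\le t\eta_k L$), whereas you use the slightly tighter $\sum_{t=1}^n(2t-1)=n^2$ before relaxing to $n(n+2)$ to match the stated constant.
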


\begin{proof}
First, since the permutation is arbitrary, %we may assume
w.l.o.g.~$\boldsymbol{\pi}$ is the identity (we make this choice only for
notational convenience).
Let now $y\in\X$. At each round, the recursion of SGD implies that
\begin{eqnarray*}
\|x_{t+1}^k-y\|^2
&=& \|\proj_{\X}[x_{t}^k-\eta_k \nabla f(x_t^k,z_{t})]-\proj_{\X}(y)\|^2 \\
&\leq&\|x_{t}^k-\eta_k \nabla f(x_t^k,z_t)-y\|^2 \\
&=&\|x_t^k-y\|^2+\eta_k^2L^2 -2\eta_k\langle \nabla f(x_t^k,z_t),x_t^k-y\rangle \\
&\leq& \|x_t^k-y\|^2+\eta_k^2L^2
-2\eta_k[f(x_t^k,z_t)-f(y,z_t)].
\end{eqnarray*}
Let $r_t:=\|x^t-y\|$.
Summing up these inequalities from $t=1,
\ldots,n$
\begin{eqnarray*}
r_{n+1}^2-r_1^2
&\leq&
nL^2\eta_k^2+2\eta_k \sum_{t=1}^n[f(x^k,z_t)-f(x_t^k,z_t)]
-2\eta_k n[F_S(x^k)-F_S(y)]\\
&\leq& nL^2\eta_k^2+2\eta_k \sum_{t=1}^n
L\|x^k-x_t^k\|
-2\eta_k n[F_S(x^k)-F_S(y)]\\
%&\leq&
%nL^2\eta_k^2+2\eta_k \sum_{t=1}^n[f(x^k,z_t)-f(x_t^k,z_t)]
%-2\eta_k n[F_S(x^k)-F_S(y)]\\
&\leq& nL^2\eta_k^2
+2\eta_k^2L^2 \sum_{t=1}^n t
-2\eta_k n[F_S(x^k)-F_S(y)]\\
&=& \eta_k^2 L^2 n
+\eta_k^2L^2n(n+1)
-2\eta_k n[F_S(x^k)-F_S(y)].
\end{eqnarray*}
Re-arranging terms we obtain the result.
\end{proof}

Using the previous lemma, it is straightforward to derive the optimization accuracy of the method.

\begin{cor} \label{cor:perm_SGDopt_error}
The fixed permutation stochastic gradient descent (Algorithm \ref{alg:perm_SGD}), for arbitrary permutation (i.e., not necessarily random) and with constant step size over each epoch (i.e., $\eta_{(k-1)n+t}\equiv\eta_k$ for all $t\in[n]$, $k\in[K]$).
satisfies
$$ \epsopt
\leq \frac{\|x^1-x^{\ast}(S)\|^2}{2n\sum_k \eta_k}+\frac{L^2(n+2)}{2}\frac{\sum_k \eta_k^2}{\sum_k \eta_k}.$$
\end{cor}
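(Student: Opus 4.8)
The plan is to sum the per-epoch inequality of Lemma~\ref{lem:perm_SGD_round} over $k=1,\dots,K$, telescope the quadratic terms, and then convert the resulting bound on an $\eta_k$-weighted average of the per-epoch function gaps into a bound on the returned averaged iterate using convexity of $F_S$.

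First I would instantiate Lemma~\ref{lem:perm_SGD_round} with $y=x^{\ast}(S)$ and sum over $k\in[K]$. Here $x^k$ denotes the first iterate of epoch $k$ (i.e.~$x^k=x_1^k$), so that $x^{k+1}=x_1^{k+1}=x_{n+1}^k$ is simultaneously the first iterate of epoch $k+1$ and the last iterate of epoch $k$, consistently with Algorithm~\ref{alg:perm_SGD}. Then the terms $\tfrac1{2n}\bigl[\|x^k-x^{\ast}(S)\|^2-\|x^{k+1}-x^{\ast}(S)\|^2\bigr]$ telescope, leaving $\tfrac1{2n}\|x^1-x^{\ast}(S)\|^2-\tfrac1{2n}\|x^{K+1}-x^{\ast}(S)\|^2\le \tfrac1{2n}\|x^1-x^{\ast}(S)\|^2$. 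This yields
$$ \sum_{k=1}^{K}\eta_k\bigl[F_S(x^k)-F_S(x^{\ast}(S))\bigr]\;\le\;\frac{\|x^1-x^{\ast}(S)\|^2}{2n}+\frac{L^2(n+2)}{2}\sum_{k=1}^{K}\eta_k^2. $$

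Next I would identify the returned point. Since the step size is constant within each epoch, $\overline{\eta}_k=\sum_{t=1}^n\eta_{(k-1)n+t}=n\eta_k$, so $\barx^K=\frac{1}{\sum_k\overline{\eta}_k}\sum_k\overline{\eta}_k\,x_1^k=\frac{\sum_k\eta_k x^k}{\sum_k\eta_k}$ is precisely the convex combination of the $x^k$ with weights proportional to $\eta_k$. By convexity of $F_S$ (Jensen's inequality), $F_S(\barx^K)\le\frac{\sum_k\eta_k F_S(x^k)}{\sum_k\eta_k}$; subtracting $F_S(x^{\ast}(S))$ and dividing the displayed inequality by $\sum_k\eta_k$ gives
$$ \epsopt=F_S(\barx^K)-F_S(x^{\ast}(S))\le\frac{\|x^1-x^{\ast}(S)\|^2}{2n\sum_k\eta_k}+\frac{L^2(n+2)}{2}\,\frac{\sum_k\eta_k^2}{\sum_k\eta_k}, $$
which is exactly the claimed bound.

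There is no substantial obstacle beyond bookkeeping: the only two points requiring care are (i) the correspondence between the epoch-indexed iterates $x^k=x_1^k$ and the per-step iterates, which makes the telescoping valid and $x^{K+1}$ well defined, and (ii) the identity $\overline{\eta}_k=n\eta_k$, which is what makes the returned weighted average coincide with the $\eta_k$-weighted average produced by the telescoping sum. Both follow directly from Algorithm~\ref{alg:perm_SGD} together with the constant-per-epoch step size assumption.
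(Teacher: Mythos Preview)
Your proposal is correct and follows essentially the same approach as the paper's proof: apply Lemma~\ref{lem:perm_SGD_round} with $y=x^{\ast}(S)$, telescope over epochs, and use convexity of $F_S$ to pass to the averaged iterate. You are in fact more careful than the paper in spelling out why the returned $\barx^K$ is exactly the $\eta_k$-weighted average of the epoch-initial iterates.
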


\begin{proof}
By convexity and
Lemma \ref{lem:perm_SGD_round}, we have
\begin{eqnarray*}
F_s(\overline{x}^K)-F_{S}(x^{\ast}(S))
&\leq& \frac{1}{\sum_{k=1}^K \eta_k}
\sum_{k=1}^K\eta_k[F_S(x^k)-f_S(x^{\ast}(S))]\\
&\leq& \frac{1}{\sum_{k=1}^K \eta_k}
\Big[ \frac{1}{2n}\|x^1-x^{\ast}(S)\|^2
+\frac{L^2(n+2)}{2}\sum_{k=1}^K\eta_k^2\Big],
\end{eqnarray*}
which proves the result.
\end{proof} 
\end{document}